\begin{document}

\title{Efficient Clustering for Stretched Mixtures: Landscape and Optimality
}
\author{Kaizheng Wang\thanks{Department of Industrial Engineering and Operations Research, Columbia University. Email: \texttt{kaizheng.wang@columbia.edu}} \and Yuling Yan\thanks{Department of Operations Research and Financial Engineering, Princeton University. Email: \texttt{yulingy@princeton.edu}}
\and Mateo D\'iaz\thanks{Center for Applied Mathematics, Cornell University. Email: \texttt{md825@cornell.edu}}}
\date{December 2020}

\maketitle

\begin{abstract}
This paper considers a canonical clustering problem where one receives unlabeled samples drawn from a balanced mixture of two elliptical distributions and aims for a classifier to estimate the labels. Many popular methods including PCA and k-means require individual components of the mixture to be somewhat spherical, and perform poorly when they are stretched. To overcome this issue, we propose a non-convex program seeking for an affine transform to turn the data into a one-dimensional point cloud concentrating around $-1$ and $1$, after which clustering becomes easy. Our theoretical contributions are two-fold: (1) we show that the non-convex loss function exhibits desirable geometric properties when the sample size exceeds some constant multiple of the dimension, and (2) we leverage this to prove that an efficient first-order algorithm achieves near-optimal statistical precision without good initialization. We also propose a general methodology for clustering with flexible choices of feature transforms and loss objectives.
\end{abstract}

\noindent \textbf{Keywords:}  clustering, dimensionality reduction, unsupervised learning, landscape, nonconvex optimization

\section{Introduction} \label{sec:intro} 

Clustering is a fundamental problem in data science, especially in the early stages of knowledge discovery. Its wide applications include genomics~\citep{eisen1998cluster,remm2001automatic}, imaging~\citep{filipovych2011semi}, linguistics~\citep{di2013clustering}, networks~\citep{AGl05}, and finance~\citep{arnott1980cluster,ZBJ20}, to name a few. They have motivated numerous characterizations for ``clusters'' and associated learning procedures.

In this paper, we consider a binary clustering problem where the data come from a mixture of two elliptical distributions. 
Suppose that we observe i.i.d.~samples $\{\bX_i\}_{i=1}^n \subseteq \RR^d$ from the latent variable model
\begin{align}
\bX_i = \bmu_0 + \bmu Y_i + \bSigma^{1/2} \bZ_i , \qquad i \in [n].
\label{eqn-location-scale-intro}
\end{align}
Here $\bmu_0,~\bmu \in \RR^d$ and $\bSigma \succ 0$ are deterministic; $Y_i \in \{ \pm 1 \}$ and $\bZ_i \in \RR^d$ are independent random quantities; $\PP (Y_i = -1) = \PP (Y_i=1) = 1/2$, and $\bZ_i$ is an isotropic random vector whose distribution is spherically symmetric with respect to the origin. 
$\bX_i$ is elliptically distributed \citep{Fan18} given $Y_i$.
The goal of clustering is to estimate $\{Y_i\}_{i=1}^n$ from $\{\bX_i\}_{i=1}^n$. Moreover, it is desirable to build a classifier with straightforward out-of-sample extension that predicts labels for future samples.

As a warm-up example, assume for simplicity that $\bZ_i$ has density and $\bmu_0 = \mathbf{0}$. The Bayes-optimal classifier is 
\begin{equation*}
	\varphi_{\bbeta^\star} (\bx) = \sgn ( \bbeta^{\star\top} \bx ) =
	\begin{cases}
		1 & \text{if } \bbeta^{\star\top} \bx \geq 0 \\
		-1 & \text{otherwise}
	\end{cases},
\end{equation*}
with any $\bbeta^\star \propto \bSigma^{-1} \bmu$. A natural strategy for clustering is to learn a linear classifier $\varphi_{\bbeta} (\bx) = \sgn ( \bbeta^{\top} \bx ) $ with discriminative coefficients $\bbeta \in \RR^d$ estimated from the samples. Note that
\begin{align*}
\bbeta^{\top} \bX_i = (\bbeta^{\top} \bmu) Y_i + \bbeta^{\top} \bSigma^{1/2} \bZ_i
\overset{d}{=} (\bbeta^{\top} \bmu) Y_i + \sqrt{ \bbeta^{\top} \bSigma \bbeta } Z_i,
\end{align*}
where $Z_i = \be_1^{\top} \bZ_i$ is the first coordinate of $\bZ_i$. The transformed data $\{ \bbeta^{\top} \bX_i \}_{i=1}^n$ are noisy observations of scaled labels $\{ (\bbeta^{\top} \bmu) Y_i  \}_{i=1}^n$. A discriminative feature mapping $\bx \mapsto \bbeta^{\top} \bx$ results in high signal-to-noise ratio $(\bbeta^{\top} \bmu)^2 / \bbeta^{\top} \bSigma \bbeta $, turning the data into two well-separated clusters in $\RR$.

When the clusters are almost spherical ($\bSigma \approx \bI$) or far apart ($\| \bmu \|_2^2 \gg \| \bSigma \|_2$)
, the mean vector $\bmu$ has reasonable discriminative power and the leading eigenvector of the overall covariance matrix $\bmu \bmu^{\top} + \bSigma$ roughly points that direction. This helps develop and analyze various spectral methods \citep{vempala2004spectral,
	ndaoud2018sharp
} based on Principal Component Analysis (PCA). $k$-means \citep{lu2016statistical} and its semidefinite relaxation \citep{mixon2017clustering, royer2017adaptive, fei2018hidden, giraud2018partial, chen2018hanson} are also closely related. As they are built upon the Euclidean distance, a key assumption is the existence of well-separated balls each containing the bulk of one cluster. Existing works typically require $\| \bmu \|_2^2 / \| \bSigma \|_2$ to be large under models like \eqref{eqn-location-scale-intro}. Yet, the separation is better measured by $\bmu^{\top} \bSigma^{-1} \bmu$, which always dominates $\| \bmu \|_2^2 / \| \bSigma \|_2$. Those methods may fail when the clusters are separated but ``stretched''. As a toy example, consider a Gaussian mixture $\frac{1}{2} N(\bmu, \bSigma) + \frac{1}{2} N(-\bmu, \bSigma)$ in $\RR^2$ where $\bmu = (1,0)^{\top}$ 
and the covariance matrix $\bSigma = \diag(0.1, 10)$ is diagonal. Then the distribution consists of two separated but stretched ellipses. PCA returns the direction $(0,1)^{\top}$ that maximizes the variance but is unable to tell the clusters apart. 

To get high discriminative power under general conditions, we search for $\bbeta$ that makes $\{ \bbeta^{\top} \bX_i \}_{i=1}^n$ concentrate around the label set $\{ \pm 1 \}$, through the following optimization problem:
\begin{equation}\label{eq:1}
\min_{\bbeta \in \RR^d }\sum_{i = 1}^n f(\bbeta^{\top} \bX_i).
\end{equation}
Here $f:\RR \to \RR$ attains its minimum at $\pm 1$, e.g. $f(x) = (x^2 - 1)^2$. We name this method as ``Clustering via Uncoupled REgression'', or \textsc{CURE} for short. Here $f$ penalizes the discrepancy between predictions $\{ \bbeta^{\top} \bX_i \}_{i=1}^n$ and labels $\{ Y_i \}_{i=1}^n$. In the unsupervised setting, we have no access to the one-to-one correspondence but can still enforce proximity on the distribution level, i.e.
\begin{align}
\frac{1}{n} \sum_{i=1}^{n} \delta_{\bbeta^{\top} \bX_i} \approx \frac{1}{2} \delta_{-1} + \frac{1}{2} \delta_{1}.
\label{eqn-ur-1}
\end{align}
A good approximate solution to (\ref{eq:1}) leads to $|\bbeta^{\top} \bX_i| \approx 1$. That is, the transformed data form two clusters around $\pm 1 $. The symmetry of the mixture distribution automatically ensures balance between the clusters. Thus (\ref{eq:1}) is an uncoupled regression problem based on (\ref{eqn-ur-1}). 
Above we focus on the centered case $(\bmu_0 = \mathbf{0})$ merely to illustrate main ideas. Our general methodology
\begin{align}
\min_{\alpha \in \RR,~ \bbeta \in \RR^d }
\bigg\{ 
\frac{1}{n}
\sum_{i = 1}^n f( \alpha + \bbeta^{\top} \bX_i)
+ \frac{1}{2} (\alpha + \bbeta^{\top} \hat\bmu_0)^2
\bigg\},
\label{eqn-cure-0}
\end{align}
where $\hat\bmu_0 = \frac{1}{n} \sum_{i = 1}^n \bX_i $, deals with arbitrary $\bmu_0$ by incorporating an intercept term $\alpha$.



 \paragraph*{Main contributions.} We propose a clustering method through (\ref{eqn-cure-0}) and study it under the model (\ref{eqn-location-scale-intro}) without requiring the clusters to be spherical. Under mild assumptions, we prove that an efficient algorithm achieves near-optimal statistical precision even in the absence of a good initialization. 
  \begin{itemize} 
  \item {(\bf Loss function design)} We construct an appropriate loss function $f$ by clipping the growth of the quartic function $(x^2 - 1)^2 / 4$ outside some interval centered at $0$. As a result, $f$ has two ``valleys'' at $\pm 1$ and does not grow too fast, which is beneficial to statistical analysis and optimization.
    
  \item ({\bf Landscape analysis}) We characterize the geometry of the empirical loss function when $n / d$ exceeds some constant. In particular, all second-order stationary points, where the smallest eigenvalues of Hessians are not significantly negative, are nearly optimal in the statistical sense.
  
 \item ({\bf Efficient algorithm with near-optimal statistical property}) We show that with high probability, a perturbed version of gradient descent algorithm starting from $\mathbf{0}$ yields a solution with near-optimal statistical property after $\tilde{O}( n/ d + d^2 / n )$ iterations (up to polylogarithmic factors).
  \end{itemize}

The formulation (\ref{eqn-cure-0}) is uncoupled linear regression for binary clustering. Beyond that, we introduce a unified framework which learns feature transforms to identify clusters with possibly non-convex shapes. That provides a principled way of designing flexible unsupervised learning algorithms.
 
We introduce the model and methodology in Section~\ref{sec:problem-setup}, conduct theoretical analysis in Section~\ref{sec:main_results}, present numerical results in Section~\ref{sec:numerical}, and finally conclude the paper with a discussion in Section~\ref{sec:discussion}.
 
  \paragraph{Related work.} Methodologies for clustering can be roughly categorized as generative and discriminative ones. Generative approaches fit mixture models for the joint distribution of features $\bX$ and label $Y$ to make predictions  \citep{
  	MVa10, kannan2005spectral, anandkumar2014tensor}. Their success usually hinges on well-specified models and precise estimation of parameters. Since clustering is based on the conditional distribution of $Y$ given $\bX$, it only involves certain functional of parameters. Generative approaches often have high overhead in terms of sample size and running time.
 On the other hand, discriminative approaches directly aim for predictive classifiers. A common strategy is to learn a transform to turn the data into a low-dimensional point cloud that facilitates clustering. Statistical analysis of mixture models lead to information-based methods \citep{BHM92,
  KPG10}, analogous to the logistic regression 
  for supervised classification. 
  Geometry-based methods uncover latent structures in an intuitive way, similar to the support vector machine. 
  Our method \textsc{CURE} belongs to this family. Other examples include projection pursuit \citep{FTu74, PPr01}, margin maximization \citep{BHS01, XNL05
  }, discriminative $k$-means \citep{YZW08, BHa08}, graph cut optimization by spectral methods \citep{SMa00, NJW02} and semidefinite programming \citep{WSa06}, correlation clustering \cite{BGL20,JMW20}.  
  Discriminative methods are easily integrated with modern tools such as deep neural networks \citep{Spr15, XGF16
  }. The list above is far from exhaustive. 
  
%
The formulation \eqref{eqn-cure-0} is invariant under invertible affine transforms of data and thus tackles stretched mixtures which are catastrophic for many existing approaches.
A recent paper \cite{KJS19} uses random projections to tackle such problem but requires the separation between two clusters to grow at the order of $\sqrt{d}$, where $d$ is the dimension. There have been provable algorithms dealing with general models with multiple classes and minimal separation conditions \citep{BVe08, KMV10, BSi15}. However, their running time and sample complexity are large polynomials in the dimension and desired precision. In the class of two-component mixtures we consider, \textsc{CURE} has near-optimal (linear) sample complexity and runs fast in practice. 
Another relevant area of study is clustering under sparse mixture models \citep{azizyan2015efficient,VAr17}, where additional structures help handle non-spherical clusters efficiently.

The vanilla version of \textsc{CURE} in \eqref{eq:1} is closely related to the Projection Pursuit (PP) \citep{FTu74} and Independent Component Analysis (ICA) \citep{HOj00}.
PP and ICA find the most nontrivial direction by maximizing the deviation of the projected data from some null distribution (e.g. Gaussian). Their objective functions are designed using key features of that. Notably, \cite{PPr01} propose clustering algorithms based on extreme projections that maximize and minimize the kurtosis; \cite{VAr17} use the first absolute moment and skewness to construct objective functions in pursuit of projections for clustering. On the contrary, \textsc{CURE} stems from uncoupled regression and minimizes the discrepancy between the projected data and some target distribution. This makes it generalizable beyond linear feature transforms with flexible choices of objective functions. Moreover, \textsc{CURE} has nice computational guarantees while only a few algorithms for PP and ICA do.
The formulation \eqref{eq:1} with double-well loss $f$ also appears in the real version of Phase Retrieval (PR) \citep{CLS15} for recovering a signal $\bbeta$ from noisy quadratic measurements $Y_i \approx (\bX_i^{\top} \bbeta )^2$. In both \textsc{CURE} and PR, one observes the magnitudes of labels/outputs without sign information. However, algorithmic study of PR usually require $\{ \bX_i \}_{i=1}^n$ to be isotropic Gaussian; most efficient algorithms need good initializations by spectral methods. Those cannot be easily adapted to clustering. Our analysis of \textsc{CURE} could provide a new way of studying PR under more general conditions.



  \paragraph{Notation.} Let $[n] = \{ 1, 2, \cdots, n \}$. Denote by $| \cdot |$ the absolute value of a real number or cardinality of a set. For real numbers $a$ and $b$, let $a \wedge b = \min \{ a, b \}$ and $a \vee b = \max \{ a, b \}$. For nonnegative sequences $\{ a_n \}_{n=1}^{\infty}$ and $\{ b_n \}_{n=1}^{\infty}$, we write $a_n \lesssim b_n$ or $a_n = O(b_n)$ if there exists a positive constant $C$ such that $a_n \leq C b_n$. In addition, we write $a_n = \tilde O(b_n)$ if $a_n = O( b_n )$ holds up to some logarithmic factor; $a_n \asymp b_n$ if $a_n \lesssim b_n$ and $b_n \lesssim a_n$. We let $\mathbf{1}_{ S }$ be the indicator function of a set $S$. We equip $\RR^d$ with the inner product $\dotp{\bx}{\by} =\bx^\top \by$, Euclidean norm $\| \bx \|_2 =  \sqrt{\dotp{\bx}{\bx}}$ and canonical bases $\{ \be_j \}_{j=1}^d$. Let $\SSS^{d-1} = \{ \bx \in \RR^d:~ \| \bx \|_2 = 1 \}$, $B(\bm{x},r) = \{ \by \in \RR^d:~\| \by - \bx \|_2 \leq r \}$, and $\dist(\bx,S) = \inf_{\by \in S} \|\bx - \by\|_2$ for $S \subseteq \RR^d$. For a matrix $\bA $, we define its spectral norm $\| \bA \|_2 = \sup_{\|\bx\|_2 = 1 } \| \bA \bx \|_2$. For a symmetric matrix $\bA$, we use $\lambda_{\max}(\bA)$ and $\lambda_{\min}(\bA)$ to represent its largest and smallest eigenvalues, respectively. For a positive definite matrix $\bA \succ 0$, let $\| \bx\|_{\bA} = \sqrt{ \bx^\top \bA \bx}$. 
Denote by $\delta_{\bx}$ the point mass at $\bx$. Define $\| X \|_{\psi_2} = \sup_{p \geq 1} p^{-1/2} \EE^{1/p} |X|^p$ for random variable $X$ and $\|\bX\|_{\psi_2} = \sup_{\| \bu \|_2 = 1} \|\dotp{\bu}{\bX}\|_{\psi_2} $ for random vector $\bX$.


\section{Problem setup} \label{sec:problem-setup}

\subsection{Elliptical mixture model}

\begin{model}\label{model:1}  Let $\bX \in \RR^d$ be a random vector with the decomposition
\begin{align*}
\bX = \bmu_0 + \bmu Y + \bSigma^{1/2} \bZ.
\end{align*}
Here $\bmu_0, \bmu \in \RR^d$ and $\bSigma \succ 0$ are deterministic; $Y \in \{ \pm 1 \}$ and $\bZ \in \RR^d$ are random and independent. Let $Z = \be_1^{\top} \bZ$, $\rho$ be the distribution of $\bX$ and $\{\bX_i\}_{i=1}^n $ be i.i.d.~samples from $\rho$.
\begin{itemize}
\item {\bf (Balanced classes)} $\PP(Y=-1)=\PP(Y=1)=1/2$;
\item {\bf (Elliptical sub-Gaussian noise)} $\bZ$ is sub-Gaussian with $\Vert \bZ\Vert_{\psi_2}$ bounded by some constant $M$, $\EE \bZ = \mathbf{0}$ and $\EE ( \bZ \bZ^{\top} ) = \bI_d$; its distribution is spherically symmetric with respect to $\mathbf{0}$;
\item {\bf (Leptokurtic distribution)} $\EE Z^4 - 3 > \kappa_0$ holds for some constant $\kappa_0 > 0$;
\item {\bf (Regularity)} $\| \bmu_0 \|_2$, $\| \bmu \|_2$, $\lambda_{\max} (\bSigma)$ and $\lambda_{\min} (\bSigma)$ are bounded away from 0 and $\infty$ by constants.
\end{itemize}
  \end{model}

We aim to build a classifier $\RR^{d} \to \{ \pm 1 \}$ based solely on the samples $\{ \bX_i \}_{i=1}^n$ from a mixture of two elliptical distributions. For simplicity, we assume that the two classes are balanced and focus on the well-conditioned case where the signal strength and the noise level are of constant order. This is already general enough to include stretched clusters incapacitating many popular methods including PCA, $k$-means and semi-definite relaxations \citep{BVe08}. One may wonder whether it is possible to transform the data into what they can handle. While multiplication by $\bSigma^{-1/2}$ yields spherical clusters, precise estimation of $\bSigma^{-1/2}$ or $\bSigma$ is no easy task under the mixture model. Dealing with those $d\times d$ matrices causes overhead expenses in computation and storage. 
The assumption on positive excess kurtosis prevents the loss function from having undesirable degenerate saddle points and facilitates the proof of algorithmic convergence. It rules out distributions whose kurtoses do not exceed that of the normal distribution, and it is not clear whether there exists an easy fix for that.
The last assumption in Model \ref{model:1} makes the loss landscape regular, helps avoid undesirable technicalities, and is commonly adopted in the study of parameter estimation in mixture models. The Bayes optimal classification error is of constant order, and we want to achieve low excess risk. 


\subsection{Clustering via Uncoupled Regression}\label{sec:clust-via-unco}



Under Model \ref{model:1}, the Bayes optimal classifier for predicting $Y$ given $\bX$ is
\begin{align*}
	\hat{Y}^{\mathrm{Bayes}}(\bX)=\sgn \big( \alpha^{\mathrm{Bayes}}+\bbeta^{\mathrm{Bayes}\top}\bm{X} \big),
\end{align*}
where $\big(\alpha^{\mathrm{Bayes}},\bbeta^{\mathrm{Bayes}}\big) = (- \bmu_0^\top\bSigma^{-1}\bmu, \bSigma^{-1}\bmu)$. On the other hand, it is easily seen that the following (population-level) least squares problem $ \EE [ (\alpha + \bbeta^{\top} \bX) - Y ]^2$
has a unique solution $(\alpha^{\mathrm{LR}} , \bbeta^{\mathrm{LR}}) = (- c \bmu_0^\top\bSigma^{-1}\bmu, c \bSigma^{-1}\bmu)$ for some $c>0$. For the supervised classification problem where we observe $\{ (\bX_i , Y_i ) \}_{i=1}^n$, the optimal feature transform can be estimated via linear regression
\begin{align}
\frac{1}{n} \sum_{i=1}^{n} [ (\alpha + \bbeta^{\top} \bX_i) - Y_i ]^2 .
\label{eqn-lda-reg}
\end{align}
This is closely related to Fisher's Linear Discriminant Analysis \citep{FHT01}. 

In the unsupervised clustering problem, we no longer observe individual labels $\{ Y_i \}_{i=1}^n$ associated with $\{ \bX_i \}_{i=1}^n$ but have population statistics of labels, as the classes are balanced. While \eqref{eqn-lda-reg} directly forces $\alpha + \bbeta^{\top}\bX_i \approx Y_i$ thanks to supervision, here we relax such proximity to the population level:
\begin{align}
\frac{1}{n} \sum_{i=1}^{n} \delta_{ \alpha + \bbeta^{\top} \bX_i } \approx \frac{1}{2} \delta_{-1} + \frac{1}{2} \delta_{1}.
\label{eqn-ur-2}
\end{align}
Thus the regression should be conducted in an uncoupled manner using marginal information about $\bX$ and $Y$. We seek for an affine transformation $\bx \mapsto \alpha + \bbeta^{\top} \bx$ to turn the samples $\{ \bX_i \}_{i=1}^n$ into two balanced clusters around $\pm 1$, after which $\sgn( \alpha + \bbeta^{\top} \bX )$ predicts $Y$ up to a global sign flip. It is also supported by the geometric intuition in Section \ref{sec:intro} based on projections of the mixture distribution.

Clustering via Uncoupled REgression (\textsc{CURE}) is formulated as an optimization problem:
\begin{align}
	\min_{\alpha \in \RR,~ \bbeta \in \RR^d }
	\bigg\{ 
	\frac{1}{n} \sum_{i = 1}^n f( \alpha + \bbeta^{\top} \bX_i)
	+ \frac{1}{2} (\alpha + \bbeta^{\top} \hat\bmu_0)^2
	\bigg\},
\label{eqn-cure}
\end{align}
where $\hat\bmu_0 = \frac{1}{n} \sum_{i = 1}^n \bX_i $. $f$ attains its minimum at $\pm 1$. Minimizing $\frac{1}{n} \sum_{i = 1}^n f( \alpha + \bbeta^{\top} \bX_i)$ makes the transformed data $\{ \alpha + \bbeta^{\top} \bX_i \}_{i=1}^n$ concentrate around $\{ \pm 1 \}$. 
However, there are always two trivial minimizers $(\alpha , \bbeta ) = ( \pm 1 , \mathbf{0} )$, each of which maps the entire dataset to a single point. What we want are two balanced clusters around $-1$ and $1$. The centered case ($\bmu_0 = \mathbf{0}$) discussed in Section \ref{sec:intro} does not have such trouble as $\alpha$ is set to be $0$ and the symmetry of the mixture automatically balance the two clusters. For the general case, we introduce a penalty term $(\alpha + \bbeta^{\top} \hat\bmu_0)^2 / 2$ in \eqref{eqn-cure} to drive the center of the transformed data towards $0$. The idea comes from moment-matching and is similar to that in \cite{FPB17}. If $ \frac{1}{n} \sum_{i=1}^{n} f ( \alpha + \bbeta^{\top} \bX_i )$ is small, then $| \alpha + \bbeta^{\top} \bX_i | \approx 1$ and
\begin{align*}
\frac{1}{n} \sum_{i = 1}^n \delta_{\alpha + \bbeta^{\top} \bX_i} \approx \frac{ | \{ i:~ \alpha + \bbeta^{\top} \bX_i \geq 0 \} | }{ n } \delta_{1} + \frac{ | \{ i:~ \alpha + \bbeta^{\top} \bX_i < 0 \} | }{ n } \delta_{-1}.
\end{align*}
Then, in order to get \eqref{eqn-ur-2}, we simply match the expectations on both sides. This gives rise to the quadratic penalty term in \eqref{eqn-cure}. The same idea generalizes beyond the balanced case. When the two classes $1$ and $-1$ have probabilities $p$ and $(1-p)$, we can match the mean of $\{ \alpha + \bbeta^{\top} \bX_i \}_{i=1}^n$ with that of a new target distribution $p \delta_{1} + (1-p) \delta_{-1}$, and change the quadratic penalty to $[ (\alpha + \bbeta^{\top} \hat\bmu_0) - (2p-1) ]^2$.
When $p$ is unknown, (\ref{eqn-cure}) can always be a surrogate as it seeks for two clusters around $\pm 1$ and uses the quadratic penalty to prevent any of them from being vanishingly small.

The function $f$ in \eqref{eqn-cure} requires careful design. To facilitate statistical and algorithmic analysis, we want $f$ to be twice continuously differentiable and grow slowly. That makes the empirical loss smooth and concentrate well around its population counterpart. In addition, the coercivity of $f$, i.e. $\lim_{|x| \to \infty} f(x) = + \infty$, confines all minimizers within some ball of moderate size. Similar to the construction of Huber loss \citep{Hub64}, we start from $h(x) = (x^2 - 1)^2 / 4$, keep its two valleys around $\pm 1$, clip its growth using linear functions and interpolate in between using cubic splines: 
\begin{align}
	f(x) = \begin{cases}
		h(x), & |x| \leq a \\
		h(a) + h'(a) (|x|-a) + \frac{h''(a)}{2} (|x|-a)^2 - \frac{h''(a)}{6(b-a)} (|x|-a)^3, & a < |x| \leq b \\
		f(b) + [ h'(a) + \frac{b-a}{2} h''(a) ] (|x| - b), & |x| > b
	\end{cases}.
	\label{eqn-test-function}
\end{align}
Here $b>a>1$ are constants to be determined later. $f$ is clearly not convex, and neither is the loss function in (\ref{eqn-cure}). Yet we can find a good approximate solution efficiently by taking advantage of statistical assumptions and recent advancements in non-convex optimization \citep{jin2017escape}.

\subsection{Generalization}\label{sec-cure-general}

The aforementioned procedure seeks for a one-dimensional embedding of the data that facilitates clustering. It searches for the best affine function such that the transformed data look like a two-point distribution. The idea of uncoupled linear regression can be easily generalized to any suitable target probability distribution $\nu$ over a space $\cY$, class of feature transforms $\cF$ from the original space $\cX$ to $\cY$, discrepancy measure $D$ that quantifies the difference between the transformed data distribution and $\nu$, and classification rule $g:~\cY \to [K]$. \textsc{CURE} for Model~\ref{model:1} above uses $\cX = \RR^d$, $\cY = \RR$, $\nu = \frac{1}{2} \delta_{-1} + \frac{1}{2} \delta_{1}$, $\cF = \{ \bx \mapsto \alpha + \bbeta^{\top} \bx :~ \alpha \in \RR,~ \bbeta \in \RR^d \}$, $g(y) = \sgn(y)$ and
\begin{align}
D ( \mu, \nu ) = | \EE_{X \sim \mu } f(X) - \EE_{X \sim \nu } f(X) | + \frac{1}{2} | \EE_{X \sim \mu } X - \EE_{X \sim \nu } X |^2
\label{eqn-D}
\end{align}
for any probability distribution $\mu$ over $\RR$. Here we briefly show why (\ref{eqn-D}) is true. Fix any $f :~\bx\mapsto \alpha + \bbeta^{\top} \bx$ in $\cF$ and let $\mu = \frac{1}{n} \sum_{i=1}^{n} \delta_{\alpha + \bbeta^{\top} \bX_i}$ be the transformed data distribution. From $f(-1) = f(1) = 0$ and $\EE_{X \sim \nu } X = 0$ we see
\begin{align*}
& | \EE_{X \sim \mu } f(X) - \EE_{X \sim \nu } f(X) | =  \EE_{X \sim \mu } f(X) = \frac{1}{n} \sum_{i=1}^{n} f(\alpha + \bbeta^{\top} \bX_i) , \\
& | \EE_{X \sim \mu } X - \EE_{X \sim \nu } X |^2 = \bigg( \frac{1}{n} \sum_{i=1}^{n} (\alpha + \bbeta^{\top} \bX_i) \bigg)^2 = (\alpha + \bbeta^{\top} \hat\bmu_0), \\
& D(\mu, \nu) = \frac{1}{n} \sum_{i=1}^{n} f(\alpha + \bbeta^{\top} \bX_i) + \frac{1}{2}  (\alpha + \bbeta^{\top} \hat\bmu_0).
\end{align*}

On top of that, we propose a general framework for clustering (also named as \textsc{CURE}) and describe it at a high level of abstraction in Algorithm \ref{alg-cure}. Here $\hat{\rho}_n = \frac{1}{n} \sum_{i=1}^{n} \delta_{\bX_i}$ is the empirical distribution of data and $\varphi_{\#}\hat{\rho}_n = \frac{1}{n} \sum_{i=1}^{n} \delta_{\varphi(\bX_i)}$ is the push-forward distribution.
The general version of \textsc{CURE} is a flexible framework for clustering based on uncoupled regression \citep{PWe19}. For instance, we may set $\cY = \RR^K$ and $\nu = \frac{1}{K} \sum_{k=1}^{n} \delta_{\be_k}$ when there are $K$ clusters; choose $\cF$ to be the family of convolutional neural networks for image clustering; let $D$ be the Wasserstein distance or some divergence. \textsc{CURE} is easily integrated with other tools, see Section \ref{appendix:numerical-2} in the supplementary material.

	\begin{algorithm}
	{\bf Input:} Data $\{ \bX_i \}_{i=1}^n$ in a feature space $\cX$, embedding space $\cY$, target distribution $\nu$ over $\cY$, discrepancy measure $D$, function class $\cF$, classification rule $g$.\\
	{\bf Embedding:} find an approximation solution $\hat\varphi$ to $\min_{ \varphi \in \cF} D ( \varphi_{\#} \hat{\rho}_n, \nu )$.\\
	{\bf Output:} $\hat{Y}_i = g [ \hat\varphi(\bX_i) ]$ for $i \in [n]$.
	\caption{Clustering via Uncoupled REgression (meta-algorithm)}
	\label{alg-cure}
\end{algorithm}

\section{Theoretical analysis}\label{sec:main_results}

\subsection{Main results}

	\begin{algorithm}
		{\bf Initialize} $\bgamma^0=\bm{0}$. \\
	{\bf For $t = 0,1,\ldots$ do}\\
	\hspace*{.5cm}{\bf If} perturbation condition holds: 
	\hspace*{1cm}Perturb $\displaystyle \bgamma^t \leftarrow \bgamma^t + \bxi^t$ with $\bxi^t\sim \cU (B(\bm{0},r))$ \\
	\hspace*{.5cm}{\bf If} termination condition holds: 
	\hspace*{1cm} {\bf Return} $\bgamma^{t}$ \\	
	\hspace*{.5cm}{\bf Update} $\bgamma^{t+1} \leftarrow \bgamma^{t} - \eta\nabla \hat{L}_1 (\bgamma^t)$.
	\caption{Perturbed gradient descent}
	\label{alg:PGD-simple}
\end{algorithm}

Let $\hat L_1 (\alpha, \bbeta)$ denote the objective function of \textsc{CURE} in (\ref{eqn-cure}). Our main result (Theorem \ref{thm:main}) shows that with high probability, a perturbed version of gradient descent (Algorithm \ref{alg:PGD-simple}) applied to $\hat L_1$ returns an approximate minimizer that is nearly optimal in the statistical sense, within a reasonable number of iterations. Here $\cU (B(\bm{0},r))$ refers to the uniform distribution over $B(\bm{0},r)$. We omit technical details of the algorithm and defer them to Appendix~\ref{sec:algorithm}, see Algorithm~\ref{alg:PGD} and Theorem~\ref{theorem:algorithmic} therein. For notational simplicity, we write $\bgamma = (\alpha , \bbeta) \in \RR \times \RR^d$ and $\bgamma^{\mathrm{Bayes}}=(\alpha^{\mathrm{Bayes}},\bbeta^{\mathrm{Bayes}})=(-\bmu^\top\bSigma^{-1}\bmu_0,\bSigma^{-1}\bmu)$. $\bgamma^{\mathrm{Bayes}}$ defines the Bayes-optimal classifier $\bx \mapsto \sgn ( \alpha^{\mathrm{Bayes}} + \bbeta^{\mathrm{Bayes}\top}  \bx )$ for Model~\ref{model:1}.

\begin{theorem}[Main result]\label{thm:main}
	Let $\bgamma_0, \bgamma_1,\cdots$ be the iterates of Algorithm \ref{alg:PGD-simple} starting from $\mathbf{0}$. Under Model~\ref{model:1} there exist constants $c, C, C_0,C_1,C_2>0$ independent of $n$ and $d$ such that if $n\geq C d$ and $b \geq 2 a \geq C_0$, then with probability at least $1-C_1 [ (d/n)^{C_2d} + e^{ -C_2n^{1/3} } + n^{-10} ]$, Algorithm~\ref{alg:PGD-simple} terminates within $\tilde{O}( n/d+d^2/n )$ iterations and the output $\hat{\bgamma}$ satisfies
	\[
\min_{s=\pm1}\bigl\Vert s \hat{\bm{\gamma}} -c\bgamma^{\mathrm{Bayes}}\bigr\Vert_2\lesssim \sqrt{\frac{d}{n} \log \left( \frac{n}{d} \right)} .
	\]
\end{theorem}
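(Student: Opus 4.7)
The argument naturally splits into (i) a landscape analysis for the empirical loss $\hat L_1$ and (ii) an algorithmic step that invokes a convergence guarantee for perturbed gradient descent on smooth nonconvex objectives. My plan is to first understand the population loss $L_1(\bgamma) = \EE[f(\alpha+\bbeta^\top \bX)] + \tfrac{1}{2}(\alpha+\bbeta^\top\bmu_0)^2$, then transfer properties to $\hat L_1$ by uniform concentration, and finally couple the landscape with an off-the-shelf PGD rate.

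\textbf{Step 1: Population landscape.} I would first solve the first-order optimality conditions of $L_1$. Using the spherical symmetry of $\bZ$, the conditional structure of $\bX$ given $Y$, and the fact that $f$ is a smooth, coercive double-well with valleys at $\pm 1$, I expect the set of global minimizers to be exactly $\{\pm c\bgamma^{\mathrm{Bayes}}\}$ for a specific $c>0$ determined by the one-dimensional moment equation matching $f'$. I would then classify all second-order stationary points (SOSP): the two global minima are strongly convex (positive Hessian); the origin is a saddle with negative curvature in the direction of $\bgamma^{\mathrm{Bayes}}$; any remaining critical points should have strictly negative minimum Hessian eigenvalue. The leptokurtic assumption $\EE Z^4 - 3 > \kappa_0$ is crucial here: it is what rules out flat saddles at the origin and quantifies the strict negative curvature, analogous to the role played by kurtosis in projection pursuit / ICA identifiability.

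\textbf{Step 2: From population to empirical landscape.} The clipping in the definition \eqref{eqn-test-function} makes $f$, $f'$, $f''$ uniformly bounded and Lipschitz, so standard sub-Gaussian covering/$\epsilon$-net arguments over a ball $B(\mathbf{0},R)$ of moderate radius give
\[
\sup_{\bgamma\in B(\mathbf{0},R)} \bigl\{ |\hat L_1 - L_1| + \|\nabla \hat L_1 - \nabla L_1\|_2 + \|\nabla^2 \hat L_1 - \nabla^2 L_1\|_2 \bigr\} \;\lesssim\; \sqrt{(d/n)\log(n/d)}
\]
with the stated probability, including the $(d/n)^{C_2 d}$ term coming from the covering number. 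Combined with strong convexity of $L_1$ near $\pm c\bgamma^{\mathrm{Bayes}}$, this yields that any approximate SOSP of $\hat L_1$ at tolerance $\epsilon_g, \epsilon_H$ lies within $O(\epsilon_g) + O(\sqrt{(d/n)\log(n/d)})$ of one of $\pm c\bgamma^{\mathrm{Bayes}}$, and no spurious SOSPs exist elsewhere inside $B(\mathbf{0},R)$. A separate coercivity argument confirms the iterates cannot leave $B(\mathbf{0},R)$.

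\textbf{Step 3: Algorithmic convergence and error balancing.} With the Hessian-Lipschitz, gradient-Lipschitz, bounded loss on $B(\mathbf{0},R)$ in hand, I invoke the perturbed gradient descent guarantee of \citet{jin2017escape}: PGD from $\mathbf{0}$ escapes the saddle at the origin thanks to the uniform-ball perturbation (using the strict negative curvature from Step~1 and Step~2) and finds an $(\epsilon_g,\sqrt{\epsilon_g})$-SOSP in $\tilde O(1/\epsilon_g^2)$ iterations. Choosing $\epsilon_g \asymp \sqrt{(d/n)\log(n/d)}$ makes the statistical error dominate and yields the claimed $\min_{s=\pm 1}\|s\hat\bgamma - c\bgamma^{\mathrm{Bayes}}\|_2 \lesssim \sqrt{(d/n)\log(n/d)}$; the two regimes in the iteration bound $\tilde O(n/d + d^2/n)$ arise from balancing the $1/\epsilon_g^2$ rate against the statistical precision and a tail correction.

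\textbf{Main obstacles.} The hardest piece is the global classification of SOSPs of $L_1$: near the origin one needs the leptokurtic condition to exhibit strict negative curvature, while at arbitrary critical points one must rule out degenerate saddles using the full elliptical structure. A second nontrivial point is ensuring the saddle-escape argument applies at $\mathbf{0}$ after sampling noise has slightly perturbed the Hessian --- this requires showing the negative eigenvalue of $\nabla^2 L_1(\mathbf{0})$ is of constant order so that $\nabla^2 \hat L_1(\mathbf{0})$ inherits a strictly negative eigenvalue of the order needed by Jin et al.'s bound. Everything else (concentration of $f$, $f'$, $f''$ on a net, local strong convexity at $\pm c\bgamma^{\mathrm{Bayes}}$, and the reduction to a one-dimensional projection problem) should follow from standard sub-Gaussian machinery and the regularity assumptions of Model~\ref{model:1}.
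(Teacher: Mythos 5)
Your three-stage architecture — population landscape, empirical landscape by concentration, then Jin et al.'s perturbed gradient descent — matches the paper's, and you correctly flag the leptokurtic condition and the need to verify the saddle at $\mathbf{0}$ survives sampling noise. Two of your intermediate claims do not hold as stated, however. First, in Step~1 you propose solving the first-order conditions of $L_1$ directly; with the clipped loss $f$ of \eqref{eqn-test-function}, $\EE f'(\alpha+\bbeta^\top\bX)$ and $\EE[\bX f'(\alpha+\bbeta^\top\bX)]$ have no closed form, so this route stalls. The paper instead computes everything for the \emph{auxiliary quartic} loss $L^h_\lambda$ built from $h(x)=(x^2-1)^2/4$ (Theorem~\ref{thm:landscape-general-quartic}), where the gradient and Hessian reduce to low-order moments of $(\alpha,\bbeta^\top\bmu,\bbeta^\top\bZ)$, and then transfers both the stationary-set structure and the quantitative gradient/curvature bounds to $L_\lambda$ via the exponentially small perturbation bound of Lemma~\ref{lemma-perturbation-population}, which exploits that $f-h$ vanishes on $\{|x|<a\}$ together with the sub-Gaussian tails. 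Note also that the saddle set $S$ is not just $\{\mathbf{0}\}$: it includes the sphere $\{(-\bbeta^\top\bmu_0,\bbeta):\bmu^\top\bbeta=0,\ \bbeta^\top\bSigma\bbeta=1/M_Z\}$, along all of which a single fixed escape direction $\bu=(0,\bSigma^{-1}\bmu/\|\bSigma^{-1}\bmu\|_2)$ gives the negative curvature you need.

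Second, in Step~2 you assert $\sup_{\bgamma\in B(\mathbf{0},R)}\|\nabla^2\hat L_1-\nabla^2 L_1\|_2\lesssim\sqrt{(d/n)\log(n/d)}$. The paper neither proves nor needs such a uniform operator-norm bound; Theorem~\ref{theorem-landscape-sample} controls only the directional quadratic form $\bu^\top[\nabla^2\hat L_1-\nabla^2 L_1]\bu$ at that rate for the single fixed $\bu$ above, while the empirical Hessian Lipschitz constant is the much larger $M_1\max\{1,\ d\log(n/d)/\sqrt n\}$, driven by $\sup_{\bu\in\SSS^{d}}\frac{1}{n}\sum_i|\bu^\top(1,\bX_i)|^3$ (Lemma~\ref{lemma-cubic-sup}). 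That inflated $\rho$, not a ``tail correction,'' is where the $d^2/n$ term in the iteration complexity comes from: with $\ell\asymp1$ and $\varepsilon_{\mathrm{pgd}}=\min\{\sqrt{d\log(n/d)/n},\ \ell^2/\rho,\ \eta^2/\rho,\ \varepsilon\}$ — chosen precisely so that the PGD output satisfies $\lambda_{\min}(\nabla^2\hat L_1(\hat\bgamma))\geq-\sqrt{\rho\varepsilon_{\mathrm{pgd}}}\geq-\eta$ and Corollary~\ref{cor:approx_beta_dist1} applies — Jin et al.'s bound $T\lesssim\ell\Delta_{\mathrm{pgd}}/\varepsilon_{\mathrm{pgd}}^2$ yields $\tilde O(n/d)$ when $n\gtrsim d^2$ and $\tilde O(d^2/n)$ when $n\lesssim d^2$. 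Under your assumed Hessian concentration, $\rho$ would be a constant and you would report only $\tilde O(n/d)$ iterations, understating the cost in the small-sample regime.
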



Up to a $\sqrt{ \log (n/d) }$ factor, this matches the optimal rate of convergence $O(\sqrt{d/n})$ for the supervised problem with $\{ Y_i \}_{i=1}^n$ observed, which is even easier than the current one. Theorem \ref{thm:main} asserts that we can achieve a near-optimal rate efficiently without good initialization, although the loss function is non-convex. The two terms $n/d$ and $d^2 / n$ in the iteration complexity have nice interpretations. When $n$ is large, we want a small computational error in order to achieve statistical optimality. The term $n/d$ reflects the cost for this. When $n$ is small, the empirical loss function does not concentrate well and is not smooth enough either. Hence we choose a conservative step-size and pay the corresponding price $d^2/n$. A byproduct of Theorem \ref{thm:main} is the following corollary which gives a tight bound for the excess risk. Here $\| g \|_{\infty} = \sup_{x \in \RR} |g(x)|$ for any $g:~\RR \to \RR$. The proof is deferred to Appendix \ref{appendix:misclassification}.
\begin{corollary}[Misclassification rate]\label{cor:misclassification}
Consider the settings in Theorem~\ref{thm:main} and suppose that $Z = \be_1^{\top} \bZ$ has density $p \in C^1(\mathbb{R})$ satisfying $\Vert p\Vert_\infty\leq C_3$ and $\Vert p^\prime\Vert_\infty\leq C_3$ for some constant $C_3>0$. For $\bgamma = (\alpha, \bbeta) \in \RR \times \RR^d$, define its misclassification rate (up to a global sign flip) as
\begin{align*}
\cR ( \bgamma ) = \min_{s=\pm 1}\mathbb{P}\left(s\sgn\big( \alpha + \bm{\beta}^\top\bm{X}\big)\neq Y\right).
\end{align*}
There exists a constant $C_4$ such that
	\[
	\PP \bigg(
	\cR ( \hat\bgamma ) \leq \cR ( \bgamma^{\mathrm{Bayes}}  ) +  \frac{C_4 d \log (n/d) }{n}  \bigg)
	\geq 1-C_1 [ (d/n)^{C_2d} + e^{-C_2n^{1/3}} + n^{-10} ].
	\]
\end{corollary}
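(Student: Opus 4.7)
My approach is to convert the $\ell_2$ parameter-recovery guarantee of Theorem~\ref{thm:main} into an excess-risk bound via a second-order Taylor expansion of the population misclassification rate around the Bayes direction. Since $\cR(\bgamma)=\cR(-\bgamma)$ (both correspond to the same classifier up to a label flip), I may replace $\hat\bgamma$ by $s\hat\bgamma$ for the sign $s\in\{\pm1\}$ attaining the minimum in Theorem~\ref{thm:main} and assume throughout that $\|\hat\bgamma-\bgamma^\star\|_2\lesssim\sqrt{d\log(n/d)/n}$ on the high-probability event, where $\bgamma^\star:=c\bgamma^{\mathrm{Bayes}}$. Since this quantity is $o(1)$, $\hat\bgamma$ lies in a small fixed neighborhood of $\bgamma^\star$.

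On that neighborhood I would first derive a closed-form expression for $\cR$. Exploiting the spherical symmetry of $\bZ$, the scalar $\bbeta^\top\bSigma^{1/2}\bZ$ is distributed as $\|\bSigma^{1/2}\bbeta\|_2\cdot Z$, so conditioning on $Y$ and using $Y\stackrel{d}{=}-Y$ gives
\[
\cR(\bgamma)=\tfrac{1}{2}-\tfrac{1}{2}\bigl[F\bigl(a(\bgamma)+b(\bgamma)\bigr)-F\bigl(a(\bgamma)-b(\bgamma)\bigr)\bigr],
\]
where $F$ is the CDF of $Z$, $a(\bgamma):=(\alpha+\bbeta^\top\bmu_0)/\|\bSigma^{1/2}\bbeta\|_2$, and $b(\bgamma):=\bbeta^\top\bmu/\|\bSigma^{1/2}\bbeta\|_2$. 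At $\bgamma^\star$ one has $a^\star=0$ and $b^\star=\sqrt{\bmu^\top\bSigma^{-1}\bmu}$, recovering the Bayes error $\cR(\bgamma^\star)=1-F(b^\star)$.

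Next I would verify that $\bgamma^\star$ is a critical point of $\cR$ with Hessian bounded in operator norm. Writing $g(a,b)=\tfrac{1}{2}-\tfrac{1}{2}[F(a+b)-F(a-b)]$, one has $\partial_a g|_{(0,b^\star)}=-\tfrac{1}{2}[p(b^\star)-p(-b^\star)]=0$ because $p=F'$ is even (a consequence of the spherical symmetry of $\bZ$), and $\nabla_\bgamma b(\bgamma^\star)=\mathbf{0}$ by Cauchy--Schwarz ($b$ is maximized over nonzero $\bbeta$ exactly at $\bbeta\propto\bSigma^{-1}\bmu$); combined, these give $\nabla\cR(\bgamma^\star)=\mathbf{0}$. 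The chain rule then produces
\[
\nabla^2\cR(\bgamma)=J(\bgamma)^\top H_{ab}(\bgamma)J(\bgamma)+(\partial_a g)\nabla^2 a(\bgamma)+(\partial_b g)\nabla^2 b(\bgamma),
\]
where $J$ is the Jacobian of $(a,b)$ in $\bgamma$ and $H_{ab}$ is the $2\times2$ Hessian of $g$, whose entries are linear combinations of $p'(a\pm b)$ and therefore bounded by $\|p'\|_\infty\leq C_3$; the remaining factors are controlled by the regularity assumptions of Model~\ref{model:1} on a fixed neighborhood of $\bgamma^\star$ (in particular $\|\bSigma^{1/2}\bbeta\|_2$ stays bounded away from $0$).

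The conclusion then follows from the integral form of Taylor's theorem: since $\nabla\cR(\bgamma^\star)=\mathbf{0}$,
\[
\cR(\bgamma)-\cR(\bgamma^\star)\leq\tfrac{1}{2}\sup_{\bgamma'\in[\bgamma^\star,\bgamma]}\|\nabla^2\cR(\bgamma')\|_2\cdot\|\bgamma-\bgamma^\star\|_2^2\lesssim\|\bgamma-\bgamma^\star\|_2^2,
\]
valid as soon as $\|\bgamma-\bgamma^\star\|_2$ is below a fixed constant. Applying this to $\bgamma=\hat\bgamma$ and invoking Theorem~\ref{thm:main} yields $\cR(\hat\bgamma)-\cR(\bgamma^{\mathrm{Bayes}})\lesssim d\log(n/d)/n$. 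The main obstacle I foresee is the uniform-in-neighborhood control of $\nabla^2\cR$: the hypothesis $\|p'\|_\infty\leq C_3$ (the only part of the corollary's assumptions beyond Theorem~\ref{thm:main}) is exactly what is needed to bound $H_{ab}$ over the whole segment $[\bgamma^\star,\bgamma]$, and one must simultaneously verify that the chain-rule pieces do not blow up, which uses the lower bound $\|\bSigma^{1/2}\bbeta\|_2\gtrsim 1$ that holds because $\bgamma^\star\neq\mathbf{0}$ and $\|\bgamma-\bgamma^\star\|_2=o(1)$.
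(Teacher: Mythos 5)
Your proposal is correct and follows essentially the same line as the paper: write the closed-form expression for the population misclassification rate via the CDF of $Z$, Taylor-expand around the Bayes direction, and observe that the first-order term vanishes — so the excess risk is bounded by $\|\hat\bgamma - \bgamma^\star\|_2^2 \lesssim (d/n)\log(n/d)$. The paper achieves the first-order cancellation more computationally (normalizing to $\|\bbeta\|_2=1$, Taylor-expanding the two conditional error probabilities separately, invoking the evenness of $p$ and the unit-sphere identity $(\bbeta_1-\bbeta_0)^\top\bbeta_1=\tfrac12\|\bbeta_1-\bbeta_0\|_2^2$), whereas you obtain it more structurally by reparametrizing through the scale-invariant pair $(a,b)$ and noting that $\bgamma^\star$ is a critical point of $\cR$ via evenness of $p$ and Cauchy--Schwarz; the two are just different bookkeepings of the same cancellation.
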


\subsection{Sketch of proof}

The loss function $\hat{L}_1$ is non-convex in general. To find an approximate minimizer efficiently without good initialization, we need $\hat{L}_1$ to exhibit benign geometric properties that can be exploited by a simple algorithm. Our choice is the perturbed gradient descent algorithm in \cite{jin2017escape}, see Algorithm \ref{alg:PGD} in Appendix~\ref{sec:algorithm} for more details. Provided that the function is smooth enough, it provably converges to an approximate second-order stationary point where the norm of gradient is small and the Hessian matrix does not have any significantly negative eigenvalue. Then it boils down to landscape analysis of $\hat{L}_1$ with precise characterizations of approximate stationary points.
To begin with, define the population version of $\hat{L}_1$ as
\begin{equation*}
L_1 \left( \alpha, \bbeta \right) = \mathbb{E}_{\bm{X}\sim\rho}f(\alpha+\bm{\beta}^{\top}\bm{X})+\frac{1}{2}(\alpha+\bbeta^\top\bmu_0)^2.
\end{equation*}

\begin{prop}\label{prop-population-landscape}
There exist positive constants $c, \varepsilon,\delta, \eta$ and a set $S \subseteq \RR\times \RR^d$ such that
	\begin{enumerate}
		\item The only two local minima of $L_1$ are $\pm \bgamma^\star$ with $\bgamma^\star = - c \bgamma^{\mathrm{Bayes}}$;
		\item All the other first-order critical points (i.e. with zero gradient) are within $\delta$ distance to $S$;
		\item $\Vert\nabla L_1(\bm{\gamma})\Vert_{2}\geq\varepsilon$ if $\mathrm{dist}(\bm{\gamma}, \{ \pm \bgamma^\star \} \cup S)\geq\delta$;
		\item $\nabla^{2}L_1(\bm{\gamma})\succeq\eta\bm{I}$ if $\mathrm{dist}(\bm{\gamma}, \{ \pm \bgamma^\star \} )\leq\delta$,
		and $\lambda_{\min} [ \nabla^{2}L_1(\bm{\gamma}) ] \leq-\eta$ if
		$\mathrm{dist}(\bm{\gamma},S)\leq\delta$.
	\end{enumerate}
\end{prop}

Proposition \ref{prop-population-landscape} shows that all of the approximate second-order critical points of $L_1$ are close to that corresponding to the Bayes-optimal classifier. 
Then we will prove similar results for the empirical loss $\hat{L}_1$ using concentration inequalities, which leads to the following proposition translating approximate second-order stationarity to estimation error.

\begin{prop}\label{prop-conversion}
There exists a constant $C$ such that the followings happen with high probability: for any $\bm{\gamma} \in \RR\times \RR^d $ satisfying
	$\Vert\nabla \hat{L}_1(\bm{\gamma} )\Vert_{2}\leq\varepsilon / 2$ and $\lambda_{\min}[\nabla^{2}\hat{L}_1(\bm{\gamma} )]>-\eta / 2$,
	\[
	\min_{s=\pm1}\left\Vert s\bm{\gamma} -\bm{\gamma}^{\star}\right\Vert _{2}\leq
	C \bigg(
	\big\Vert \nabla \hat{L}_1\left(\bm{\gamma} \right)\big\Vert _{2} + \sqrt{\frac{d}{n}\log\left( \frac{n}{d} \right)}
	\bigg).
	\]
\end{prop}

To achieve near-optimal statistical error (up to a $\sqrt{\log(n/d)}$ factor), Proposition \ref{prop-conversion} asserts that it suffices to find any $\hat\bgamma$ such that $\| \nabla \hat{L}_1 (\hat\bgamma) \|_2 \lesssim \sqrt{d/n}$ and $\lambda_n [ \nabla^{2} \hat L_1(\hat\bgamma) ] > - \eta / 2$. Here the perturbed gradient descent algorithm comes into play, and we see the light at the end of the tunnel. It remains to estimate the Lipschitz smoothness of $\nabla \hat{L}_1$ and $\nabla^2 \hat{L}_1$ with respect to the Euclidean norm. Once this is done, we can directly apply the convergence theorem in \cite{jin2017escape} for the perturbed gradient descent. A more comprehensive outline of the proof and all the details are deferred to the Appendix.

\section{Numerical experiments} \label{sec:numerical}

\begin{figure}[t]
	\centering
	\includegraphics[width=0.7\textwidth]{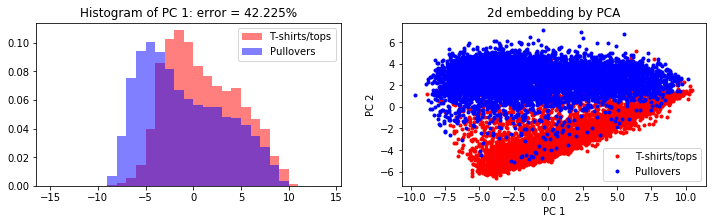}
	\caption{Visualization of the dataset via PCA. The left plot shows the transformed data via PCA. The right polt is a 2-dimensional visualization of the dataset using PCA.}
	\label{fig:shape}
\end{figure}

	\begin{figure}[t]
	\centering
	\includegraphics[width=0.9\textwidth]{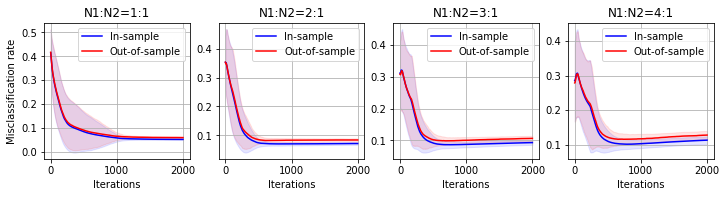}
	\caption{In sample and out-of-sample misclassification rate (with error bar quantifying one standard deviation) vs.~iteration count for \textsc{CURE} over 50 independent trials. The four plots corresponds to $N_2=6000$, $3000$, $2000$ and $1500$ respectively, while $N_1$ is always fixed to be $6000$.}
	\label{fig:err_iter}
\end{figure}

\begin{figure}[t]
	\centering
	\includegraphics[width=0.8\textwidth]{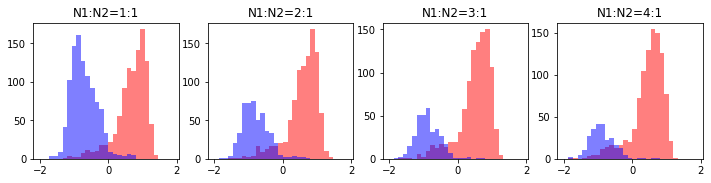}
	\caption{Histograms of transformed out-of-sample data for \textsc{CURE}. The red bins correspond to T-shirts/tops, and the blue bins correspond to pullovers.}
	\label{fig:hist}
\end{figure}

\begin{table}[t]
	\caption{Misclassification rate of \textsc{CURE} and other methods.  }\label{table:comparison}
	\vspace{0.8em}
	\centering
	\begin{tabular}{|c|c|c|c|c|}
		\hline 
		\multirow{1}{*}{} \backslashbox{Method}{$N_1:N_2$}& $1:1$ & $2:1$ & $3:1$ & $4:1$\tabularnewline
		\hline 
		CURE & $5.2\pm0.2\%$ & $7.1\pm0.4\%$ & $9.3\pm0.7\%$ & $11.3\pm1.1\%$\tabularnewline
		\hline 
		K-means & $45.1\%$ & $49.7\%$ & $46.8\%$ & $45.1\%$\tabularnewline
		\hline 
		Spectral method (vanilla) & $42.2\%$ & $46.9\%$ & $49.7\%$ & $49.0\%$\tabularnewline
		\hline 
		Spectral method (Gaussian kernel) & $49.9\%$ & $33.4\%$ & $25.0\%$ & $20.0\%$\tabularnewline
		\hline 
	\end{tabular}
	
\end{table}

	In this section, we conduct numerical experiments on a real dataset. We randomly select $N_1$ (resp.~$M_1$) T-shirts/tops and $N_2$ (resp.~$M_2$) pullovers from the Fashion-MNIST \citep{XRV17} training (resp.~testing) dataset, each of which is a $28\times 28$ grayscale image represented by a vector in $[0,1]^{28 \times 28}$. The goal is clustering, i.e. learning from those $N=N_1+N_2$ unlabeled images to predict the class labels of both $N$ training samples and $M=M_1+M_2$ testing samples. The inputs for \textsc{CURE} and other methods are raw images and their pixel-wise centered versions, respectively. To get a sense why this problem is difficult, we set $N_1=N_2=6000$ and plot the transformed data via PCA in the left panel of Figure \ref{fig:shape}: the transformation does not give meaningful clustering information, and the misclassification rate is $42.225\%$. A 2-dimensional visualization of the dataset using PCA (right panel of Figure \ref{fig:shape}) shows two stretched clusters, which cause the PCA to fail.
	In this dataset, the bulk of a image corresponds to the belly part of clothing with different grayscales, logos and hence contributes to the most of variability. However, T-shirts and Pullovers are distinguished by sleeves. Hence the two classes can be separated by a linear function that is not related to the leading principle component of data. \textsc{CURE} aims for such direction onto which the projected data exhibit cluster structures.
	

	To show that \textsc{CURE} works beyond our theory, we set $N_1$ to be $6000$ and choose $N_2$ from $\{6000, 3000, 2000, 1500\}$ to include unbalanced cases. We set $M_1$ to be $1000$ and choose $M_2$ from $\{1000,500,333,250\}$. We use gradient descent with random initialization from the unit sphere and learning rate $10^{-3}$ (instead of perturbed gradient descent) to solve \eqref{eqn-cure} as that requires less tuning. Figure \ref{fig:err_iter} shows the learning curves of \textsc{CURE} over 50 independent trials. Even when the classes are unbalanced, CURE still reliably achieves low misclassification rates. Figure \ref{fig:hist} presents histograms of testing data under the feature transform learned by the last (50th) trial of \textsc{CURE}, showing two seperated clusters around $\pm1$ corresponding to the two classes.
	To demonstrate the efficacy of \textsc{CURE}, we compare its misclassification rates with those of K-means and spectral methods on the training sets. We include the standard deviation over 50 independent trials for \textsc{CURE} due to its random initializations; other methods use the default settings (in Python) and thus are regarded as deterministic algorithms. As is shown in Table \ref{table:comparison}, \textsc{CURE} has the best performance under all settings.
	
	
	

\section{Discussion}\label{sec:discussion}

Motivated by the elliptical mixture model (Model~\ref{model:1}), we propose a discriminative clustering method \textsc{CURE} and establish near-optimal statistical guarantees for an efficient algorithm.
It is worth pointing out that CURE learns a classification rule that readily predicts labels for any new data. This is an advantage over many existing approaches for clustering and embedding 
whose out-of-sample extensions are not so straightforward.
We impose several technical assumptions (spherical symmetry, constant condition number, positive excess kurtosis, etc.) to simplify the analysis, which we believe can be relaxed. Achieving Bayes optimality in multi-class clustering is indeed very challenging. Under parametric models such as Gaussian mixtures, one may construct suitable loss functions for CURE based on likelihood functions and obtain statistical guarantees.
Other directions that are worth exploring include the optimal choice of the target distribution and the discrepancy measure, high-dimensional clustering with additional structures, estimation of the number of clusters, to name a few. We also hope to further extend our methodology and theory to other tasks in unsupervised learning and semi-supervised learning.

The general \textsc{CURE} (Algorithm \ref{alg-cure}) provides versatile tools for clustering problems. In fact, it is related to several methods in the deep learning literature \citep{Spr15, XGF16,YFS17}. When we were finishing the paper, we noticed that \cite{GDV19} develop a deep clustering algorithm based on $k$-means and use optimal transport to incorporate prior knowledge of class proportions. Those methods are built upon certain network architectures (function classes) or loss functions while \textsc{CURE} offers more choices. In addition to the preliminary numerical results
, it would be nice to see how \textsc{CURE} tackles more challenging real data problems.



\section*{Acknowledgements}

We thank Philippe Rigollet and Damek Davis for insightful and stimulating discussions. Kaizheng Wang acknowledges support from the Harold W. Dodds Fellowship at Princeton University where part of the work was done. Yuling Yan is supported in part by the AFOSR grant FA9550-19-1-0030. Mateo D\'iaz would like to thank his advisor, Damek Davis, for research funding during the completion of this work.


\appendix

\section{Additional numerical experiments} \label{appendix:numerical}
\subsection{Two classes}\label{appendix:numerical-1}
	In this section, we provide additional numerical experiments to compare \textsc{CURE} in (\ref{eqn-cure}) with other clustering methods on the same real dataset as Section \ref{sec:numerical}. We focus on six methods: (i) discriminative K-means (DisKmeans) in \cite{YZW08}; (ii) a discriminative clustering formulation described in \cite{BHa08,FPB17}; (iii) Model-based clustering (Mclust) in \cite{fraley1999mclust}; (iv) Projection Pursuit (PP) in \cite{pena2001cluster}; (v) Adaptive LDA-guided K-means Clustering in \cite{ding2007adaptive}; and (vi) Minimum Density Hyperplane (MDH) in \cite{pavlidis2016minimum}.
	
	As suggested by \cite{YZW08}, the regularization parameter $\lambda$ therein has a significant impact on the performance of DisKmeans. To resolve this issue, they provide an automatic tuning framework. Here we provide a comparison between \textsc{CURE} and DisKmeans. For the DisKmeans, we consider pre-chosen $\lambda \in \{0,1,10,100\}$ as well as $\lambda$ from the automatic tuning procedure suggested by \cite{YZW08}, initialized from 1. Due to high computational cost of DisKmeans with automatic tuning (which includes eigendecomposition of $(N_1+N_2)\times(N_1+N_2)$ matrix in each iteration), we conduct the experiment on smaller dataset: we fix $N_1=1000$ and choose $N_2$ from $\{1000,500,333,250\}$. As is shown in Table \ref{table:comparison2}, \textsc{CURE} has lower misclassification rate under all settings. It is also worth mentioning that the automatic tuning procedure sends $\lambda\to\infty$, in which case DisKmeans is equivalent to classical K-means.
	
\begin{table}[htpb]
	\caption{Misclassification rate of \textsc{CURE} and disciminative K-means.  }\label{table:comparison2}
	\vspace{0.8em}
	\centering
	\begin{tabular}{|c|c|c|c|c|c|}
		\hline 
		\multicolumn{2}{|c|}{\backslashbox{Method}{$N_1:N_2$}} & $1:1$ & $2:1$ & $3:1$ & $4:1$\tabularnewline
		\hline 
		CURE & / & $5.2\pm0.3\%$ & $6.7\pm0.6\%$ & $9.1\pm0.9\%$ & $11.2\pm1.2\%$\tabularnewline
		\hline 
		& $\lambda=0$ & $49.9\%$ & $49.5\%$ & $49.5\%$ & $47.7\%$\tabularnewline
		\cline{2-6} \cline{3-6} \cline{4-6} \cline{5-6} \cline{6-6} 
		Discriminative & $\lambda=1$ & $48.8\%$ & $46.6\%$ & $49.4\%$ & $48.3\%$\tabularnewline
		\cline{2-6} \cline{3-6} \cline{4-6} \cline{5-6} \cline{6-6} 
		K-means  & $\lambda=10$ & $46.5\%$ & $44.2\%$ & $47.4\%$ & $41.8\%$\tabularnewline
		\cline{2-6} \cline{3-6} \cline{4-6} \cline{5-6} \cline{6-6} 
		\cite{YZW08} & $\lambda=100$ & $6.6\%$ & $49.4\%$ & $46.5\%$ & $27.2\%$\tabularnewline
		\cline{2-6} \cline{3-6} \cline{4-6} \cline{5-6} \cline{6-6} 
		& automatic tuning & $43.3\%$ & $49.4\%$ & $47.5\%$ & $45.8\%$\tabularnewline
		\hline 
	\end{tabular}
	
\end{table}

For experiments comparing \textsc{CURE} with other five methods, we still adopt the usual setting of sample size: we fix $N_1=6000$ and choose $N_2$ from $\{6000,3000,2000,1500\}$. Model-based clustering (Mclust) in \cite{fraley1999mclust}, Projection Pursuit (PP) in \cite{pena2001cluster} and Minimum Density Hyperplane (MDH) in \cite{pavlidis2016minimum} are implemented using open-source R packages with default settings. In addition:
\begin{enumerate}
	\item The discriminative clustering method appeared in \cite{BHa08,FPB17} stems from the optimization problem
	\begin{equation}
	\label{eq:alt}
	\underset{\bm{v}\in\mathbb{R}^d,\bm{y}\in\{\pm1\}^d}{\min}\left\Vert\bm{y}-\bm{X}\bm{v}\right\Vert_2^2,
	\end{equation}
	where $\bX$ is the centered data matrix. We adopt the alternating minimization scheme: given $\bm{v}$, the optimal $\by$ is
	obtained by $\sgn(\bm{X}\bm{v})$ (or by running K-means on $\bm{X}\bm{v}$, which has similar empirical performance) while given $\bm{y}$, the optimal $\bm{v}$ is obtained from solving a least squares problem. In the first step, $\bm{v}$ is initialized from a uniform distribution over the unit sphere. The iterative algorithm is terminated when $\bm{y}$, the predicted label, no longer changes. 
	\item  Following the instructions in \cite{ding2007adaptive}, we implement the adaptive LDA-guided K-means clustering algorithm (Algorithm 1 therein) by alternating between linear discriminant analysis and K-means until convergence.
\end{enumerate}
Table \ref{table:comparison3} shows the misclassification rate and the standard deviation of \textsc{CURE} and the other five methods over 50 independent trials. It is clear that \textsc{CURE} is more accurate and stable than these five methods under all settings.
\begin{table}[htpb]
	\caption{Misclassification rate of \textsc{CURE} and other methods.  }\label{table:comparison3}
	\vspace{0.8em}
	\centering
	\begin{tabular}{|c|c|c|c|c|}
		\hline 
		\backslashbox{Method}{$N_1:N_2$} & $1:1$ & $2:1$ & $3:1$ & $4:1$\tabularnewline
		\hline 
		CURE & $5.2\pm0.2\%$ & $7.1\pm0.4\%$ & $9.3\pm0.7\%$ & $11.3\pm1.1\%$\tabularnewline
		\hline 
		Method \eqref{eq:alt} & $31.1\pm13.8\%$ & $32.9\pm13.3\%$ & $34.7\pm12.7\%$ & $36.8\pm11.2\%$\tabularnewline
		\hline 
		Mclust & $48.7\pm1.3\%$ & $39.1\pm4.8\%$ & $34.1\pm8.0\%$ & $28.2\pm7.8\%$\tabularnewline
		\hline
		Projection Pursuit & $36.9\pm9.8\%$ & $37.4\pm9.6\%$ & $39.7\pm6.9\%$ & $40.6\pm7.3\%$\tabularnewline
		\hline
		LDA-guided K-means & $45.9\%$ & $49.0\%$ & $45.6\%$ & $44.3\%$\tabularnewline
		\hline
		MDH & $48.6\%$ & $43.1\%$ & $38.3\%$ & $35.2\%$\tabularnewline
		\hline
	\end{tabular}

\end{table}

\subsection{Multiple classes}\label{appendix:numerical-2}

To illustrate how the general \textsc{CURE} in Section \ref{sec-cure-general} works, we consider the clustering problem with the first 4 classes in Fashion-MNIST (T-shirt/top, Trouser, Pullover, Dress), each of which has 6000 training samples and 1000 testing samples. Our training process only uses features of training samples and does not touch any labels.

We let the number of classes $K$ be 4, the embedding space $\cY$ be $\RR^K$, the target distribution $\nu$ be $\frac{1}{K} \sum_{j=1}^{K} \delta_{ \be_j }$, the discrepancy measure $D$ be the Wasserstein-1 distance, and define the classification rule $g (\by) = \argmin_{j \in [K] } \| \by - \be_j \|_2$. We compare two classes $\cF$ of feature mappings: linear functions and fully-connected neural networks with one hidden layer that has 100 nodes. Initial values All of the weight parameters are initialized using i.i.d.~samples from $N(0, 0.05^2)$.

Let $f_{\btheta}$ be a feature transform in $\cF$, parametrized by $\btheta$. Denote by $\{ \bx_i \}_{i=1}^n$ the samples, where $n = 4 \times 6000 = 24000$. The loss function is
\begin{align*}
L(\btheta) =  W_1 \bigg(
\frac{1}{n} \sum_{i = 1}^n \delta_{ f_{\btheta} (\bx_i) } ,~ \nu
\bigg) 
= \min_{ \bP \in [ 0, 1 ]^{n\times K},~ \mathbf{1}_{n}^{\top} \bP = \mathbf{1}_K^{\top} / K,~\bP \mathbf{1}_{K} = \mathbf{1}_n / n } \sum_{i=1}^{n}  \sum_{j=1}^{K} p_{ij} |  f_{\btheta} (\bx_i) - \be_j |.
\end{align*}
It is natural to optimize with respect to $\bP$ and $\btheta$ in an alternating manner. We apply random sampling techniques to speedup computation. In the $t$-th iteration,
\begin{enumerate}
\item Draw $B = 200$ samples $\{ \bx_{ti} \}_{i=1}^B$ uniform at random (with replacement) from the dataset;
\item Use the Python function \texttt{ot.sinkhorn2} in library \texttt{POT} \citep{FCo17} with \texttt{reg = 0.1} to obtain the solution $\bP_t$ to an entropy-regularized version of
\begin{align*}
\min_{ \bP \in [ 0, 1 ]^{B\times K},~ \mathbf{1}_{B}^{\top} \bP = \mathbf{1}_K^{\top} / K,~\bP \mathbf{1}_{K} = \mathbf{1}_B / B } \sum_{i=1}^{B}  \sum_{j=1}^{K} p_{ij} |  f_{\btheta_t} (\bx_{ti}) - \be_j |;
\end{align*}
\item Update model parameters by $\btheta_{t+1} = \btheta_t - \eta \partial  L_t (\btheta_t)$, where $\partial$ is the sub-differential operator, $\eta = 10^{-3}$ and
\begin{align*}
L_t (\btheta) = \sum_{j=1}^{K} \hat{p}_{ij} |  f_{\btheta} (\bx_{ti}) - \be_j | , \qquad \forall \btheta.
\end{align*}
\end{enumerate}

An epoch refers to $n/B = 12$ consecutive iterations. The learning curves in Figure \ref{fig-NN} shows the advantage of neural network and demonstrates the flexibility of CURE with nonlinear function classes. 

\begin{figure}[t]
\begin{center}
\includegraphics[width=0.5\textwidth]{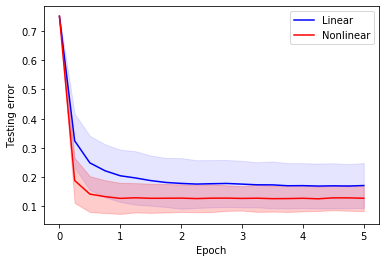}
\end{center}
\caption{4-class Fashion-MNIST: Testing errors of linear functions and neural networks, with error bar quantifying one standard deviation.
}
\label{fig-NN}
\end{figure}


\section{Proof sketch of Theorem \ref{thm:main}}\label{sec:theory}
\subsection{Step 1: properties of the test function $f$}
We now investigate the function $f$ defined in \eqref{eqn-test-function} and relate it to $h(x) = (x^2 - 1)^2 / 4$. As Lemma \ref{lem:f-property} suggests, $|f'|$, $|f''|$ and $|f'''|$ are all bounded by constants determined by $a$ and $b$; $|f'-h'|$ and $|f''-h''|$ are bounded by polynomials that are independent of $a$ and $b$. See Appendix \ref{appendix:f-property} for a proof.
\begin{lemma}\label{lem:f-property}
	When $a$ is sufficiently large and $b\geq 2a$, $f$ has the following properties:
	\begin{enumerate}
		\item $f^\prime$ is continuous with $F_1\triangleq\sup_{x\in\mathbb{R}}\vert f^\prime(x)\vert\leq2a^2b$ and $\vert f^\prime(x)-h^\prime(x)\vert\leq7\vert x\vert^3\mathbf{1}_{\{ \vert x\vert\geq a\} }$;
		\item  $f^{\prime\prime}$ is continuous with $F_2\triangleq\sup_{x\in\mathbb{R}}\vert f^{\prime\prime}(x)\vert\leq3a^2$ and $\vert f^{\prime\prime}(x)-h^{\prime\prime}(x)\vert\leq 9x^2 \mathbf{1}_{\{ \vert x\vert\geq a\} }$;
		\item $f^{\prime\prime\prime}$ exists in $\mathbb{R}\setminus\{\pm a,\pm b\}$ with $F_3\triangleq\sup_{x\in\mathbb{R}\setminus\{\pm a,\pm b\}}\vert f^{\prime\prime\prime}(x)\vert\leq 6a$.
	\end{enumerate}
\end{lemma}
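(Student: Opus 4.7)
The plan is to verify each of the three assertions by straightforward piecewise case analysis on the four regions defining $f$: the quartic interior $[-a,a]$, the two cubic-spline bridges on $[-b,-a]$ and $[a,b]$, and the two linear tails in $\{|x|>b\}$. Since $f$ is even by construction, it suffices to treat $x\geq 0$. Continuity of $f'$ and $f''$ and existence of $f'''$ off of $\{\pm a,\pm b\}$ will follow from direct computation of one-sided limits: by design, the cubic spline was chosen so that $f,f',f''$ match $h,h',h''$ at $x=a$ and so that $f,f'$ match the linear extension with $f''(b^{\pm})=0$ at $x=b$. The third derivative has simple closed forms, namely $h'''(x)=6x$ on the interior, the constant $-h''(a)/(b-a)$ on the spline, and $0$ on the tails, and fails to be continuous at the four breakpoints.

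For the uniform sup-norm bounds, I would compute explicit formulas on each piece. On $[a,b]$, $f'(x)=h'(a)+h''(a)(x-a)-\frac{h''(a)}{2(b-a)}(x-a)^2$ and $f''(x)=h''(a)[1-(x-a)/(b-a)]$. Since $f''\geq 0$ on $[a,b]$, $f'$ is increasing and maximized at $x=b$, giving $h'(a)+h''(a)(b-a)/2$; the assumption $b\geq 2a$ then converts this into $\leq 2a^2 b$. The bound $|f''|\leq h''(a)\leq 3a^2$ is immediate from the spline formula, and on the tails $f''=0$. The bound $F_3\leq 6a$ follows from $|f'''|=h''(a)/(b-a)\leq 3a^2/a=3a$ on the spline (using $b-a\geq a$) and $|f'''|=|6x|\leq 6a$ on the interior.

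For the comparison bounds, both $|f'-h'|$ and $|f''-h''|$ vanish on $[-a,a]$, so it suffices to estimate them on the two outer regions. On the spline piece I would Taylor-expand $h'$ around $a$ to second order, $h'(x)=h'(a)+h''(a)(x-a)+3\eta(x-a)^2$ for some $\eta\in(a,x)$. Subtracting from the explicit $f'(x)$ above produces $|f'(x)-h'(x)|\leq(\tfrac{3a}{2}+3x)(x-a)^2\leq 6|x|^3$ after invoking $b-a\geq a$ and $(x-a)^2\leq x^2$. On the linear tail $x>b$, the key trick is that $b\geq 2a$ implies $a\leq x/2$ and $b\leq x$, so the already-established $|f'|\leq 2a^2 b$ gives $|f'(x)|\leq x^3/2$; combining with $|h'(x)|\leq |x|^3$ for $|x|\geq 1$ yields $|f'-h'|\leq \tfrac{3}{2}|x|^3\leq 7|x|^3$. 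The bound $|f''-h''|\leq 9x^2$ is argued analogously by the triangle inequality, using $|f''|\leq 3a^2\leq 3x^2$ when $|x|\geq a$ on the spline region, and $f''=0$ with $|h''|=|3x^2-1|\leq 3x^2$ on the tails.

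There is no substantive obstacle; the lemma is a matter of careful piecewise computation. The only subtle points are the systematic use of $b\geq 2a$ to convert $a$-bounds into $x$-bounds on the outer regions (so the polynomial remainders acquire their claimed $|x|^3$ and $x^2$ form), and the assumption that $a$ is sufficiently large (say $a>1$) so that $h'(a)$ and $h''(a)$ are both positive and the lower-order subtractive terms can be discarded without case splitting.
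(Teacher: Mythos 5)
Your proposal is correct and follows essentially the same blueprint as the paper (explicit piecewise formulas for $f', f'', f'''$, then sup bounds and comparison bounds term by term). The one genuine difference is in how you control $|f'-h'|$ on the spline region $a<|x|\le b$: the paper bounds it by the crude triangle inequality $|f'|+|h'|$ and then estimates each of the four resulting terms against $|x|^3$, whereas you Taylor-expand $h'$ about $a$ to second order so that the leading terms $h'(a)+h''(a)(x-a)$ cancel against $f'$ exactly, leaving only the two quadratic remainders $\frac{h''(a)}{2(b-a)}(x-a)^2$ and $3\eta(x-a)^2$ to estimate. This cancellation is tighter (it actually gives $\approx \frac{9}{2}|x|^3$, and avoids having to separately control the term $h''(a)(x-a)$), at the cost of invoking the mean-value form of the remainder; the paper's version is blunter but more self-contained. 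On the linear tail you revert to the triangle inequality with the conversion $a\le x/2$, $b\le x$, which is equivalent to what the paper does. Either route delivers the stated constant $7$, and the $|f''-h''|\le 9x^2$ and $F_1,F_2,F_3$ bounds are argued identically in both.
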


\subsection{Step 2: landscape analysis of the population loss}
To kick off the landscape analysis we investigate the population version of $\hat L_1$, namely
\begin{equation}
L_1 \left( \alpha, \bbeta \right) = \mathbb{E}_{\bm{X}\sim\rho}f(\alpha+\bm{\beta}^{\top}\bm{X})+\frac{1}{2}(\alpha+\bbeta^\top\bmu_0)^2.
\end{equation}
One of the main obstacles is the complicated piecewise definition of $f$, which prevent us from obtaining closed form formulae. We bypass this problem by relating the population loss with $f$ to that with the quartic function $h$. See Appendix \ref{appendix:general-landscape-population} for a proof.
\begin{theorem}[Landscape of the population loss]\label{thm:landscape-general}
Consider Model~\ref{model:1} and assume that $b \geq 2 a$.
There exist positive constants $A, \varepsilon,\delta$ and $\eta$ determined by $M$, $\EE Z^4$, $\| \bmu \|_{2}$, $\lambda_{\max} ( \bSigma )$ and $\lambda_{\min} ( \bSigma )$ but independent of $d$ and $n$, such that when $a > A$,
	\begin{enumerate}
		\item The only two global minima of $L_1$ are $\pm \bgamma^\star$, where $\bgamma^\star=( - c \bm{\beta}^{h\top}\bm{\mu}_{0}, c \bm{\beta}^{h})$ for some $c\in(1/2,2)$ and
		\[
		\bm{\beta}^{h}=\left(\frac{1+1/\left\Vert \bm{\mu}\right\Vert _{\bm{\Sigma}^{-1}}^{2}}{\left\Vert \bm{\mu}\right\Vert _{\bm{\Sigma}^{-1}}^{4} +6\left\Vert \bm{\mu}\right\Vert _{\bm{\Sigma}^{-1}}^{2}+ M_Z }\right)^{1/2}\bm{\Sigma}^{-1}\bm{\mu};
		\]
		\item $\Vert\nabla L_1(\bm{\gamma})\Vert_{2}\geq\varepsilon$ if $\mathrm{dist}(\bm{\gamma}, \{ \pm \bgamma^\star \} \cup S)\geq\delta$, where $S = \{ \mathbf{0} \} \cup \{ ( -\bm{\beta}^{\top}\bm{\mu}_{0} , \bbeta) :~ \bmu^{\top} \bbeta = \mathbf{0},~ \bbeta^{\top} \bSigma \bbeta = 1 / M_Z \}
		$;
		\item $\nabla^{2}L_1(\bm{\gamma})\succeq\eta\bm{I}$ if $\mathrm{dist}(\bm{\gamma}, \{ \pm \bgamma^\star \} )\leq\delta$,
		and $\bm{u}^{\top}\nabla^{2}L_1(\bm{\gamma})\bm{u}\leq-\eta$ if
		$\mathrm{dist}(\bm{\gamma},S)\leq\delta$ with $\bm{u}=(0, \bm{\Sigma}^{-1}\bm{\mu} /\Vert\bm{\Sigma}^{-1}\bm{\mu}\Vert_{2})$.
	\end{enumerate}
\end{theorem}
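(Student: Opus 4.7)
The plan is to reduce the analysis to a quartic proxy
\begin{equation*}
L_1^h(\alpha,\bbeta) = \EE h(\alpha+\bbeta^\top\bX) + \tfrac{1}{2}(\alpha+\bbeta^\top\bmu_0)^2, \qquad h(x)=(x^2-1)^2/4,
\end{equation*}
whose landscape can be described in closed form, and then perturb. Writing $\tilde\alpha=\alpha+\bbeta^\top\bmu_0$, $m=\bbeta^\top\bmu$, $\sigma^2=\bbeta^\top\bSigma\bbeta$ and exploiting the balance of $Y$, the spherical symmetry of $\bZ$, and $\EE Z^4 = M_Z := 3+\kappa_0 > 3$, expansion of $\EE(W^2-1)^2$ with $W=\alpha+\bbeta^\top\bX$ gives
\begin{equation*}
L_1^h = \tfrac{1}{4}\tilde\alpha^4 + \tfrac{3}{2}\tilde\alpha^2(m^2+\sigma^2) + \tfrac{1}{4}\bigl[m^4 + 6m^2\sigma^2 + M_Z\sigma^4 - 2m^2 - 2\sigma^2 + 1\bigr]
\end{equation*}
(the quadratic penalty cancels the $-\tfrac12\tilde\alpha^2$ produced by the $\EE W^2$ cross term). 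Setting $\nabla L_1^h=0$ forces $\tilde\alpha=0$ and yields three families of critical points: the origin; the ``off-signal'' set $S=\{\bmu^\top\bbeta=0,\ \sigma^2=1/M_Z\}$; and two points $\bbeta=\pm t\bSigma^{-1}\bmu$ with $t^2=(1+1/\lambda)/(\lambda^2+6\lambda+M_Z)$, $\lambda=\|\bmu\|_{\bSigma^{-1}}^2$, matching $\pm\bbeta^h$ in the theorem statement.

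Second, I compute Hessians of $L_1^h$ at these critical points. At $\pm\bbeta^h$ the $\tilde\alpha$-$\bbeta$ cross block vanishes at $\tilde\alpha=0$ by parity, the $\tilde\alpha\tilde\alpha$ block equals $3(m^2+\sigma^2)>0$, and the $\bbeta\bbeta$ block, obtained by differentiating the explicit formula $\nabla_\bbeta L_1^h = m(m^2+3\sigma^2-1)\bmu + (3m^2+M_Z\sigma^2-1)\bSigma\bbeta$ and substituting $\bbeta=t\bSigma^{-1}\bmu$, turns out to be positive definite with a constant-order spectral gap, using the defining relation for $t$ and the lower bound $M_Z>3$. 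At a point of $S$, direct differentiation gives $\nabla_\bbeta^2 L_1^h = (3/M_Z-1)\bmu\bmu^\top + 2M_Z(\bSigma\bbeta)(\bSigma\bbeta)^\top$, and probing with $\bu=(0,\bSigma^{-1}\bmu/\|\bSigma^{-1}\bmu\|_2)$ yields $\bu^\top\nabla^2L_1^h\bu = (3/M_Z-1)\lambda^2/\|\bSigma^{-1}\bmu\|_2^2$ (the cross term drops since $\bmu^\top\bbeta=0$), which is strictly negative precisely because of the leptokurtic assumption $\kappa_0>0$. At the origin $\nabla_\bbeta^2L_1^h=-\bSigma\prec 0$, so the same $\bu$ certifies a negative direction.

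Third, I transfer the landscape from $L_1^h$ to $L_1$ via perturbation. By Lemma~\ref{lem:f-property}, the integrands of $f-h$, $f'-h'$, $f''-h''$ are supported on $\{|\cdot|\geq a\}$ and polynomially bounded by degree-$4$ expressions independent of $a,b$. Combined with the sub-Gaussianity of $W$ under $\rho$, on any ball $\{\|\bgamma\|_2\leq R\}$ one obtains
\begin{equation*}
\|\nabla L_1-\nabla L_1^h\|_2 + \|\nabla^2 L_1-\nabla^2 L_1^h\|_2 \leq \epsilon(a,R),
\end{equation*}
with $\epsilon(a,R)\to 0$ as $a\to\infty$ for fixed $R$. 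Since $L_1^h$ has nondegenerate Hessians of constant-order gap at every critical point above, a quantitative implicit-function / Morse-type argument shows that the critical points of $L_1$ in such a ball sit in small neighborhoods of those of $L_1^h$; invariance of $L_1$ along the line $\RR\cdot\bSigma^{-1}\bmu$ (inherited from the structure of the model) confines the signal-type minimizers to that line, producing $\bgamma^\star=(-c\bbeta^{h\top}\bmu_0, c\bbeta^h)$ with $c\in(1/2,2)$ once $a$ is sufficiently large. The same argument preserves the negative direction $\bu$ near $S\cup\{0\}$ and the positive-definite Hessian near $\pm\bgamma^\star$, giving items~1 and~3 of the theorem.

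The main obstacle is the global gradient lower bound in item~2, specifically ruling out spurious critical points with $\|\bbeta\|_2$ large, where the perturbation analysis above no longer applies. There I would exploit the linear growth of $f$ at infinity: since $f'(x)\sgn(x)\geq c_0>0$ for $|x|>b$ and $W$ is spread on scale $\sigma=\|\bbeta\|_{\bSigma}$, a direct computation of $\langle(0,\bbeta),\nabla L_1\rangle = \EE[f'(W)\,\bbeta^\top\bX]$ shows this inner product is strictly positive once $\sigma$ exceeds a moderate constant (with the quadratic penalty simultaneously pinning $\tilde\alpha$ to be small). Patching the ``large-$\sigma$'' regime with the bounded-$R$ regime via compactness yields a uniform $\varepsilon$ depending only on the model constants and not on $d$, since every quantity involved is a scalar functional of the three-dimensional summary $(\tilde\alpha,m,\sigma)$.
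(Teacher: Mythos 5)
Your proposal follows essentially the same route as the paper: analyze the quartic proxy $L_1^h$ in closed form (your $(\tilde\alpha,m,\sigma^2)$ expansion matches the paper's after its reduction to $\bmu_0=\mathbf{0},\ \bSigma=\bI$), transfer the landscape to $L_1$ on a fixed ball via Lemma~\ref{lem:f-property} combined with sub-Gaussian tail bounds (the paper's Lemma on perturbation of the population loss), and close the argument outside the ball by coercivity from the linear growth of $f$ (the paper pairs $\nabla L_1$ with $\bgamma$ rather than $(0,\bbeta)$, but the same $\langle\cdot,\nabla L_1\rangle$ lower bound does the work). The only cosmetic difference is that you work directly in the reparametrized scalar variables whereas the paper first reduces to the isotropic centered case and invokes an explicit existence lemma to pin the signal critical point to the line $\RR\,\bSigma^{-1}\bmu$; both give the same result, and your computations of the critical-point families, the Hessian at $S$, and the negative-curvature direction $\bu$ all check out.
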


Theorem \ref{thm:landscape-general} precisely characterizes the landscape of $L_1$. In particular, all of its critical points make up the set $\{\pm\bgamma^\star\}\cup S$, where $\pm\bgamma^\star$ are global minima and $S$ consists of strict saddles. The local geometry around critical points is also desirable.


\subsection{Step 3: landscape analysis of the empirical loss} \label{sec:landscape-empirical-loss}
Based on geometric properties of the population loss $L_1$, we establish similar results for the empirical loss $\hat L_1$ through concentration analysis. See Appendix \ref{appendix:theorem-landscape-sample} for a proof.

\begin{theorem}[Landscape of the empirical loss]\label{theorem-landscape-sample}
Consider Model~\ref{model:1} and assume that $b \geq 2 a \geq 4$. Let $\bgamma^\star$ and $S$ be defined as in Theorem \ref{thm:landscape-general}. 
	There exist positive constants $A, C_0, C_1, C_2, M_1, \varepsilon, \delta$ and $\eta$ determined by $M$, $M_Z$, $\| \bmu \|_{2}$, $\lambda_{\max} ( \bSigma )$ and $\lambda_{\min} ( \bSigma )$ but independent of $d$ and $n$, such that when $a \geq A$ and $n \geq C_0 d$, the followings hold with probability exceeding $1-C_1(d/n)^{C_2d}-C_1\exp(-C_2n^{1/3})$:
	\begin{enumerate}
		\item $\| \nabla \hat L_1 (\bgamma) \|_2 \geq \varepsilon$ if $\mathrm{dist} ( \bgamma , \{ \pm \bgamma^\star \} \cup S ) \geq \delta$;
		\item $ \bu^{\top}  \nabla^2 \hat L_1 (\bgamma) \bu \leq - \eta$ if $\mathrm{dist} ( \bgamma , S ) \leq \delta$, with $\bm{u}=(0, \bm{\Sigma}^{-1} \bm{\mu} /\Vert\bm{\Sigma}^{-1}\bm{\mu}\Vert_{2}) $;
		\item $\| \nabla \hat L_1 (\bgamma_1) -  \nabla \hat L_1 (\bgamma_2) \|_2 \leq M_1 \| \bgamma_1 - \bgamma_2 \|_2$ and $\| \nabla^2 \hat L_1 (\bgamma_1) -  \nabla^2 \hat L_1 (\bgamma_2) \|_2 \leq M_1 [ 1 \vee (d\log(n/d) / \sqrt{n}) ] \| \bgamma_1 - \bgamma_2 \|_2$ hold for all $ \bgamma_1,\bgamma_2 \in \RR \times \RR^{d}$.
	\end{enumerate}
\end{theorem}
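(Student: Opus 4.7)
The plan is to transfer the population landscape of Theorem \ref{thm:landscape-general} to the empirical object $\hat L_1$ via uniform concentration of the gradient and Hessian on a bounded domain, and to establish the smoothness bounds in part 3 by a direct empirical-process estimate. The driver of all concentration arguments is Lemma \ref{lem:f-property}, which supplies uniform bounds on $f',f''$ and the almost-everywhere bound on $f'''$.

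\textbf{Bounded-domain reduction and uniform deviation.} I first argue that one need only verify parts 1 and 2 for $\bgamma$ in a ball $\cB = B(\zero, R) \subset \RR^{d+1}$ of $O(1)$ radius. Outside $\cB$ the gradient is automatically $\Omega(1)$: by Lemma \ref{lem:f-property}, $f'$ asymptotes to a nonzero constant with the sign of its argument once $|x|>b$, so when $\|\bbeta\|_2$ is large a constant fraction of the projections $\alpha + \bbeta^\top\bX_i$ fall outside $[-b,b]$ and their contributions to $\nabla_{\bbeta}\hat L_1$ align with $\bbeta$; the empirical lower bound $\lambda_{\min}(\tfrac{1}{n}\sum_i \bX_i\bX_i^\top) = \Omega(1)$, valid with probability $1 - e^{-cn}$ when $n \gtrsim d$, then yields a constant-order gradient. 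Theorem \ref{thm:landscape-general} already places $\{\pm\bgamma^\star\}\cup S$ inside $\cB$. On $\cB$, writing $\tilde\bX_i = (1,\bX_i^\top)^\top$, vector and matrix Bernstein inequalities applied to the sub-Gaussian summands $f'(\alpha+\bbeta^\top\bX_i)\tilde\bX_i$ and the sub-exponential summands $f''(\alpha+\bbeta^\top\bX_i)\tilde\bX_i\tilde\bX_i^\top$ produce pointwise deviations of order $\sqrt{d/n}$ with probability $1 - e^{-c_0 d}$. A standard $\varepsilon$-net of $\cB$ with $\varepsilon \asymp \sqrt{d/n}$, combined with the trivial Lipschitz property of the empirical objects on $\cB$ (valid on the event $\{\max_i\|\bX_i\|_2 \lesssim \sqrt{d\log n}\}$ of probability $\geq 1 - n^{-10}$), upgrades this to a uniform bound with total failure probability $\leq C_1(d/n)^{C_2 d} + n^{-10}$. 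Substituting these deviations into Theorem \ref{thm:landscape-general} yields parts 1 and 2 with slightly shrunken constants.

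\textbf{Part 3 and main obstacle.} Using $|f'''|\leq F_3$ almost everywhere and continuity of $f''$, the integral representation
\[
\nabla^2\hat L_1(\bgamma_1) - \nabla^2\hat L_1(\bgamma_2) = \frac{1}{n}\sum_{i=1}^n \int_0^1 f'''\bigl((1-t)u_i^{(2)} + t\, u_i^{(1)}\bigr)\bigl((\bgamma_1-\bgamma_2)^\top\tilde\bX_i\bigr)\tilde\bX_i\tilde\bX_i^\top\, dt,
\]
with $u_i^{(j)} = \alpha_j + \bbeta_j^\top\bX_i$, reduces the Hessian-Lipschitz modulus to controlling the random trilinear form $\sup_{\|\bu\|=\|\bv\|=\|\bw\|=1} \frac{1}{n}\sum_i |\bu^\top\tilde\bX_i|\,|\bv^\top\tilde\bX_i|\,|\bw^\top\tilde\bX_i|$. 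Splitting into expectation (an $O(1)$ quantity by sub-Gaussianity) plus fluctuation, and controlling the latter via a symmetrization / uniform-concentration argument over the unit sphere, yields the target rate $1 \vee d\log(n/d)/\sqrt{n}$; the probability contribution $e^{-C_2 n^{1/3}}$ tracks the slower concentration of $\|\bX_i\|_2^3$. The gradient-Lipschitz estimate is much easier, flowing from $|f''|\leq F_2$ and standard sample-covariance concentration. The main obstacle is isolating the sharper $d\log(n/d)/\sqrt{n}$ rate rather than the crude per-summand $d^{3/2}$ bound, since this factor is exactly what governs the $\tilde O(d^2/n)$ contribution to the iteration complexity in Theorem \ref{thm:main}.
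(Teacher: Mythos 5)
Your strategy---pass to the population landscape from Theorem \ref{thm:landscape-general} by proving uniform concentration of $\nabla\hat L_1$ and of directional Hessians on a bounded ball, handle the far field by showing $\langle\bgamma,\nabla\hat L_1(\bgamma)\rangle$ is large once $\|\bgamma\|_2$ is large, and reduce the Hessian-Lipschitz modulus to a supremum of $\frac{1}{n}\sum_i|\bu^\top\tilde\bX_i|^3$-type empirical processes---is exactly the paper's (cf. Lemmas \ref{lemma-landscape-sample-gradient}, \ref{lemma:gradient-concentration}, \ref{lemma-landscape-sample}, \ref{lemma-cubic-sup}, all driven by an $\varepsilon$-net/chaining bound playing the role of Wang (2019), Theorem~1). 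The only substantive deviation is that you control the $\varepsilon$-net glue by truncating $\max_i\|\bX_i\|_2$, which costs an extra $n^{-10}$ in the failure probability that the stated theorem does not have (the paper avoids it via the two-scale $O_{\mathbb P}(\cdot\,;\cdot)$ bookkeeping in Lemmas \ref{lemma-power-truncation} and \ref{lemma-cubic-sup}); you also write the trilinear supremum over three independent directions, whereas the paper observes (Lemma \ref{lemma-cubic}) that only the diagonal supremum $\sup_\bu\frac1n\sum_i|\bu^\top\tilde\bX_i|^3$ is needed, a small simplification that makes the chaining cleaner.
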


Theorem \ref{theorem-landscape-sample} shows that a sample of size $n \gtrsim d$ suffices for the empirical loss to inherit nice geometric properties from its population counterpart. The corollary below illustrates that as long as we can find an approximate second-order stationary point, then the statistical estimation error can be well controlled by the gradient. We defer the proof of this to Appendix \ref{appendix:approx_beta_dist1}.

\begin{corollary}\label{cor:approx_beta_dist1}
	Under the settings in Theorem \ref{theorem-landscape-sample}, there exist constants $C,C_1',C_2'$ such that the followings happen with probability exceeding $1-C_1'(d/n)^{C_2'd}-C_1'\exp(-C_2'n^{1/3})$: for any $\bm{\gamma} \in \RR\times \RR^d $ satisfying
	$\Vert\nabla \hat{L}_1(\bm{\gamma} )\Vert_{2}\leq\varepsilon$ and $\lambda_{\min}[\nabla^{2}\hat{L}_1(\bm{\gamma} )]>-\eta$,
	\[
	\min_{s=\pm1}\left\Vert s\bm{\gamma} -\bm{\gamma}^{\star}\right\Vert _{2}\leq
	C \bigg(
	\big\Vert \nabla \hat{L}_1\left(\bm{\gamma} \right)\big\Vert _{2} + \sqrt{\frac{d}{n}\log\left( \frac{n}{d} \right)}
	\bigg).
	\]
As a result, when the event above happens, any local minimizer $\tilde\bgamma$ of $\hat{L}_1$ satisfies
\[
\min_{s=\pm1}\left\Vert s\tilde\bgamma -\bm{\gamma}^{\star}\right\Vert _{2} \leq C \sqrt{\frac{d}{n}\log \left( \frac{n}{d} \right) } .
\]
\end{corollary}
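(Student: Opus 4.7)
The plan is to first localize $\bgamma$ to a small neighborhood of one of the global minima $\pm\bgamma^{\star}$ using the landscape guarantees of Theorem \ref{theorem-landscape-sample}, and then to exploit local strong convexity around $\bgamma^{\star}$ together with concentration of $\nabla \hat L_1$ at that fixed point. I condition throughout on the high-probability event delivered by Theorem \ref{theorem-landscape-sample}, enlarging $C_1',C_2'$ as needed.

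For the localization, if $\dist(\bgamma,S)\leq \delta$ then part 2 of Theorem \ref{theorem-landscape-sample} exhibits a unit vector $\bu$ with $\bu^{\top}\nabla^{2}\hat L_1(\bgamma)\bu\leq -\eta$, contradicting the hypothesis $\lambda_{\min}[\nabla^{2}\hat L_1(\bgamma)]>-\eta$. Hence $\dist(\bgamma,S)>\delta$, and part 1 combined with $\|\nabla\hat L_1(\bgamma)\|_2\leq \varepsilon$ (after absorbing a harmless slackness into the constants of Theorem \ref{theorem-landscape-sample}) forces $\dist(\bgamma,\{\pm\bgamma^{\star}\})\leq \delta$. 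By the symmetry $\bgamma\mapsto -\bgamma$ of $\hat L_1$ I may assume $\|\bgamma-\bgamma^{\star}\|_2\leq \delta$.

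The second step upgrades this to a quantitative bound via local strong convexity. Theorem \ref{thm:landscape-general}(3) yields $\nabla^{2}L_1\succeq \eta\bI$ on $B(\bgamma^{\star},\delta)$, and I expect the same $\varepsilon$-net concentration argument that powers Theorem \ref{theorem-landscape-sample} — combined with the uniform bounds $F_2,F_3$ on $\|f''\|_\infty,\|f'''\|_\infty$ from Lemma \ref{lem:f-property} and the sub-Gaussian tail of $\bX$ — to give $\sup_{\bgamma'\in B(\bgamma^{\star},\delta)}\|\nabla^{2}\hat L_1(\bgamma')-\nabla^{2}L_1(\bgamma')\|_2\leq \eta/2$ with the required probability, possibly after shrinking $\delta$. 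The fundamental theorem of calculus applied to $\nabla \hat L_1$ along the segment from $\bgamma^{\star}$ to $\bgamma$ then produces $\|\bgamma-\bgamma^{\star}\|_2\leq (2/\eta)\|\nabla\hat L_1(\bgamma)-\nabla\hat L_1(\bgamma^{\star})\|_2$. Since $\nabla L_1(\bgamma^{\star})=\zero$, the residual $\nabla\hat L_1(\bgamma^{\star})$ is a centered average of $n$ independent sub-exponential vectors (boundedness of $f'$ by $F_1$, sub-Gaussianity of $\bX$), and a standard vector Bernstein / covering argument gives
\[
\|\nabla\hat L_1(\bgamma^{\star})\|_2\lesssim \sqrt{\tfrac{d}{n}\log(n/d)}
\]
with the required probability. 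Combining yields the main estimate; the local-minimizer consequence follows from $\nabla\hat L_1(\tilde\bgamma)=\zero$ and $\nabla^{2}\hat L_1(\tilde\bgamma)\succeq \zero\succ -\eta\bI$.

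The main technical obstacle is the uniform Hessian concentration on $B(\bgamma^{\star},\delta)$ used in the second step: although it is morally present inside the proof machinery for Theorem \ref{theorem-landscape-sample}, it must be extracted explicitly via a net argument exploiting $\|f'''\|_\infty\leq F_3$ to pass from a discretization to the full supremum. A minor bookkeeping subtlety is to calibrate constants so that the non-strict inequalities in the hypothesis of the corollary really exclude the $\delta$-neighborhood of $S$ and force $\dist(\bgamma,\{\pm\bgamma^{\star}\})\leq \delta$; this is handled by choosing the $\varepsilon,\eta$ appearing in the corollary strictly smaller than the corresponding constants in Theorem \ref{theorem-landscape-sample}.
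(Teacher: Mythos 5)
Your localization step to a $\delta$-neighborhood of $\{\pm\bgamma^{\star}\}$ is exactly the paper's, but your quantitative step runs in the opposite direction. The paper keeps the strong convexity on the \emph{population} loss: since $\nabla^{2}L_1\succeq\eta\bI$ on $B(\bgamma^{\star},\delta)$ and $\nabla L_1(\bgamma^{\star})=\zero$, it writes $\|\bgamma-\bgamma^{\star}\|_2\leq \eta^{-1}\|\nabla L_1(\bgamma)\|_2\leq \eta^{-1}\|\nabla\hat L_1(\bgamma)\|_2+\eta^{-1}\|\nabla\hat L_1(\bgamma)-\nabla L_1(\bgamma)\|_2$ and invokes the \emph{uniform gradient} concentration $\sup_{\|\bgamma\|_2\leq R}\|\nabla\hat L_1(\bgamma)-\nabla L_1(\bgamma)\|_2\lesssim\sqrt{(d/n)\log(n/d)}$ (Lemma~\ref{lemma:gradient-concentration}), which is already proved and reused elsewhere. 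You instead transfer strong convexity to $\hat L_1$ and then only need a \emph{pointwise} gradient bound at the single point $\bgamma^{\star}$. That works in principle and is a legitimate alternative, but the price is the uniform \emph{operator-norm} Hessian concentration $\sup_{B(\bgamma^{\star},\delta)}\|\nabla^{2}\hat L_1-\nabla^{2}L_1\|_2\leq\eta/2$, and this is not available off the shelf in the paper: Lemma~\ref{lemma-landscape-sample} only gives the rank-one form $\sup_{\bgamma}|\bu^{\top}[\nabla^{2}\hat L_1(\bgamma)-\nabla^{2}L_1(\bgamma)]\bu|\lesssim\sqrt{(d/n)\log(n/d)}$ for a single fixed $\bu$. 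You would have to add a second covering over $\SSS^{d}$ (standard, and costing only an absolute factor since the bound need only be $O(1)$, not vanishing) to upgrade it to the spectral norm — you correctly flag this as the main obstacle, and it is solvable, but it is genuinely extra work relative to the paper's route, which avoids it entirely by working with the population Hessian. Net effect: your proof is sound once that lemma is supplied, but the paper's transfer step is leaner because the needed uniform concentration is on first derivatives, for which the tooling is already in place.
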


\subsection{Step 4: convergence guarantees for perturbed gradient descent}\label{sec:algorithm}

The landscape analysis above shows that all local minimizers of $\hat{L}_1$ are statistically optimal (up to logarithmic factors), and all saddle points are non-degenerate. Then it boils down to finding any $\bgamma$ whose gradient size is sufficiently small and Hessian has no significantly negative eigenvalue. Thanks to the Lipschitz smoothness of $\nabla \hat{L}_1$ and $\nabla^2 \hat{L}_1$, this can be efficiently achieved by the perturbed gradient descent algorithm (see Algorithm~\ref{alg:PGD}) proposed by \cite{jin2017escape}. Small perturbation is occasionally added to the iterates, helping escape from saddle points efficiently and thus converge towards local minimizers. Theorem \ref{theorem:algorithmic} provides algorithmic guarantees for \textsc{CURE} on top of that. We defer the proof to Appendix \ref{appendix:theorem-algorithmic}. 

Implementation of the algorithm requires specification of hyperparameters $a$, $b$, $M_1$, $\varepsilon$ and $\eta$. Under the regularity assumptions in Model \ref{model:1}, many structural parameters are well-behaved constants and that helps choose hyperparameters at least in a conservative way. In theory, we can let $b = 2a$; $a$ and $M_1$ be sufficiently large; $\varepsilon$ and $\eta$ be sufficiently small. In our numerical experiments, the algorithm does not appear to be sensitive to choices of hyperparameters. 
We do not go into much details to avoid distractions.

	\begin{algorithm}
	$\chi\leftarrow3\max\{\log(d\ell\Delta_{\mathrm{pgd}}/(c_{\mathrm{pgd}}\varepsilon_{\mathrm{pgd}}^2\delta_{\mathrm{pgd}})),4\}$, $\eta_{\mathrm{pgd}}\leftarrow c_{\mathrm{pgd}}/\ell$, $r\leftarrow \sqrt{c_{\mathrm{pgd}}}\varepsilon_{\mathrm{pgd}}/(\chi^2\ell)$, $g_{\mathrm{thres}}\leftarrow\sqrt{c_{\mathrm{pgd}}}\varepsilon_{\mathrm{pgd}}/\chi^2$, $f_{\mathrm{thres}}\leftarrow c_{\mathrm{pgd}}\varepsilon_{\mathrm{pgd}}^{1.5}/(\chi^3\sqrt{\rho})$, $t_{\mathrm{thres}}\leftarrow\chi\ell/(c_{\mathrm{pgd}}^2\sqrt{\rho\varepsilon_{\mathrm{pgd}}})$, $t_{\mathrm{noise}}\leftarrow-t_{\mathrm{thres}}-1$.
	
	{\bf Initialize} $\bgamma^0=\bgamma_{\mathrm{pgd}}$.\\
	{\bf For $t = 0,1,\ldots$ do}\\
	\hspace*{.5cm}{\bf If} $\Vert \nabla \hat{L}_1(\bgamma^t)\Vert_2 \leq g_{\mathrm{thres}}$ {\bf and} $t - t_{\mathrm{noise}} > t_{\mathrm{thres}}$: \\
	\hspace*{1cm}Update $t_{\mathrm{noise}}\leftarrow t$, \\
	\hspace*{1cm}Perturb $\displaystyle \bgamma^t \leftarrow \bgamma^t + \bxi^t$ with $\bxi^t\sim \cU (B(\bm{0},r))$ \\
	\hspace*{.5cm}{\bf If} $t-t_{\mathrm{noise}}=t_{\mathrm{thres}}$ {\bf and} $\hat{L}_1(\bgamma^t) - \hat{L}_1(\tilde{\bgamma}^{t_{\mathrm{noise}}})>-f_{\mathrm{thres}}$: \\
	\hspace*{1cm} {\bf Return} $\tilde{\bgamma}^{t_{\mathrm{noise}}}$ \\
	\hspace*{.5cm} {\bf Update} $\bgamma^{t+1} \leftarrow \bgamma^{t} - \eta_{\mathrm{pgd}}\nabla \hat{L}_1(\bgamma^t)$.
	\caption{Perturbed gradient descent~~ $\mathrm{PerturbedGD}( \bgamma_{\mathrm{pgd}},  \ell,\rho,\varepsilon_{\mathrm{pgd}},c_{\mathrm{pgd}},\delta_{\mathrm{pgd}},\Delta_{\mathrm{pgd}})$}
	\label{alg:PGD}
\end{algorithm}

\begin{theorem}[Algorithmic guarantees] \label{theorem:algorithmic}
	Consider the settings in Theorem \ref{theorem-landscape-sample} and adopt the constants $M_1$, $\varepsilon$ and $\eta$ therein. With
	probability exceeding $1-C_1 [ (d/n)^{C_2d} + e^{-C_2n^{1/3}}+n^{-10}]$, Algorithm~\ref{alg:PGD} with parameters $\bgamma_{\mathrm{pgd}} = \mathbf{0}$, $\ell=M_1$, $\delta_{\mathrm{pgd}}=n^{-11}$, $\rho=M_1\max\{1,d\log(n/d)/\sqrt{n}\}$, $\varepsilon_{\mathrm{pgd}}=\min\{\sqrt{d\log(n/d)/n},\ell^2/\rho,\eta^2/\rho,\varepsilon\}$ and $\Delta_{\mathrm{pgd}}=1/4$
	terminates within $\tilde O ({n}/{d}+{d^2}/{n})$ iterations and the output $\hat{\bm{\gamma}}$ satisfies
	\[
	\bigl\Vert\nabla\hat{L}_{1}(\hat{\bm{\gamma}})\bigr\Vert_{2}\leq\sqrt{\frac{d}{n}\log
\left(\frac{n}{d}\right)	
}\leq\varepsilon\qquad\text{and}\qquad\lambda_{\min}\bigl(\nabla^{2}\hat{L}_{1}(\hat{\bm{\gamma}})\bigr)\geq-\eta.
	\]
\end{theorem}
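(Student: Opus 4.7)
The plan is to reduce the theorem to a direct application of the perturbed gradient descent (PGD) convergence theorem of \cite{jin2017escape}, invoked on the high-probability event produced by Theorem \ref{theorem-landscape-sample}. All ingredients needed for that black box are already in place: Part 3 of Theorem \ref{theorem-landscape-sample} supplies the smoothness constants $\ell = M_1$ for $\nabla \hat L_1$ and $\rho = M_1 \max\{1, d\log(n/d)/\sqrt n\}$ for $\nabla^2 \hat L_1$, which are precisely the parameters fed to Algorithm \ref{alg:PGD}.

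For the remaining input, one needs a bound on the initial function value gap $\Delta_{\mathrm{pgd}}$. Note that $f \geq 0$ everywhere (easy to check piecewise from \eqref{eqn-test-function}, as the cubic and linear extensions are monotone in $|x|$ starting from $f(\pm a) = h(\pm a) > 0$), so $\hat L_1 \geq 0$, while $\hat L_1(\mathbf 0) = f(0) = 1/4$; hence $\hat L_1(\bgamma_{\mathrm{pgd}}) - \inf \hat L_1 \leq 1/4 = \Delta_{\mathrm{pgd}}$. Invoking \cite{jin2017escape}, PGD then returns a point $\hat\bgamma$ satisfying $\|\nabla \hat L_1(\hat\bgamma)\|_2 \leq \varepsilon_{\mathrm{pgd}}$ and $\lambda_{\min}(\nabla^2 \hat L_1(\hat\bgamma)) \geq -\sqrt{\rho\,\varepsilon_{\mathrm{pgd}}}$ with probability at least $1 - \delta_{\mathrm{pgd}} = 1 - n^{-11}$, within $\tilde O(\ell \Delta_{\mathrm{pgd}} / \varepsilon_{\mathrm{pgd}}^2)$ iterations.

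The four-way minimum $\varepsilon_{\mathrm{pgd}} = \min\{\sqrt{d\log(n/d)/n},\,\ell^2/\rho,\,\eta^2/\rho,\,\varepsilon\}$ is chosen so that both conclusions translate into those of Theorem \ref{theorem:algorithmic}: $\varepsilon_{\mathrm{pgd}} \leq \sqrt{d\log(n/d)/n} \wedge \varepsilon$ yields the stated gradient inequality, while $\varepsilon_{\mathrm{pgd}} \leq \eta^2/\rho$ turns $-\sqrt{\rho\,\varepsilon_{\mathrm{pgd}}} \geq -\eta$ into the Hessian inequality, and $\ell^2/\rho$ is merely a technical admissibility condition in Jin et al.'s result. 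For the iteration count, one splits into two regimes of $\rho$: if $d\log(n/d)/\sqrt n \leq 1$ then $\rho = M_1$ and $\varepsilon_{\mathrm{pgd}}^{-2} \lesssim n/(d\log(n/d))$, giving $\tilde O(n/d)$ steps; otherwise $\rho \asymp d\log(n/d)/\sqrt n$ and $\eta^2/\rho$ is the binding constraint, so $\varepsilon_{\mathrm{pgd}}^{-2} \lesssim d^2\log^2(n/d)/n$ and one gets $\tilde O(d^2/n)$ steps. Adding the two recovers the advertised $\tilde O(n/d + d^2/n)$.

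Finally, a union bound between the landscape event (probability at least $1 - C_1(d/n)^{C_2d} - C_1 e^{-C_2 n^{1/3}}$) and the PGD failure event (probability $n^{-11} \leq n^{-10}$) yields the claimed probability. Since Theorem \ref{theorem-landscape-sample} already does the heavy lifting, the only real subtlety here is bookkeeping: verifying that the four-way minimum defining $\varepsilon_{\mathrm{pgd}}$ simultaneously meets the target statistical precision and respects Jin et al.'s admissibility conditions, and correctly tracking the two regimes of $\rho$ that together produce the $n/d + d^2/n$ form of the iteration complexity.
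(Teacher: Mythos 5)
Your proposal is correct and follows essentially the same route as the paper: invoke Theorem 3 of \cite{jin2017escape} on the event of Theorem \ref{theorem-landscape-sample}, using the Lipschitz constants from Part 3 of that theorem as $\ell$ and $\rho$, bounding the initial function gap by $\hat L_1(\mathbf{0}) = f(0) = 1/4$ since $\inf\hat L_1 \geq 0$, and then unwinding the definition of $\varepsilon_{\mathrm{pgd}}$ to obtain the gradient/Hessian conclusions and the iteration count. Your two-regime reading of $\varepsilon_{\mathrm{pgd}}^{-2}$ is slightly glib (in the intermediate range $\sqrt n < d\log(n/d) < n^{2/3}$ the binding term in the minimum is still $\sqrt{d\log(n/d)/n}$, not $\eta^2/\rho$), but since you conclude by adding $n/d$ and $d^2/n$ rather than computing the regime boundary exactly, the final $\tilde O(n/d + d^2/n)$ bound is unaffected; the paper arrives at the same bound via $\varepsilon_{\mathrm{pgd}}^{-2} \lesssim n/(d\log(n/d)) + d^2\log^2(n/d)/n$.
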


Theorem \ref{theorem:algorithmic} and Corollary \ref{cor:approx_beta_dist1} immediately lead to
\[
\min_{s=\pm1}\left\Vert s\hat{\bgamma} -\bm{\gamma}^{\star}\right\Vert_{2}
\lesssim
\big\Vert \nabla \hat{L}_1(\hat{\bm{\gamma}})\big\Vert _{2} + \sqrt{\frac{d}{n}\log\left(\frac{n}{d}\right)}
\lesssim \sqrt{\frac{d}{n}\log\left(\frac{n}{d}\right)},
\]
which finishes the proof of Theorem \ref{thm:main}.

\section{Preliminaries}

Before we start the proof, let us introduce some notations. Recall the definition of the random vector $\bX=\bmu_0+\bmu Y+\bSigma^{1/2}\bZ$ and the i.i.d.~samples $\bX_1,\ldots,\bX_n\in\mathbb{R}^d$. Let $\bar{\bX}=(1,\bX)$, $\bar{\bX}_i=(1,\bX_i)$ and $\bar{\bmu}_0=(1,\bmu_0)$. For any $\bgamma=(\alpha,\bbeta)\in\mathbb{R}\times\RR^d$, define
\begin{equation*}
L_\lambda(\bgamma)=L(\bgamma)+\lambda R(\bgamma)\qquad\text{and}\qquad \hat{L}_\lambda(\bgamma)=\hat{L}(\bgamma)+\lambda \hat{R}(\bgamma),
\end{equation*}
where
\begin{align*}
& L(\bgamma)=\mathbb{E}f(\bgamma^\top\bar{\bX})=\mathbb{E}f(\alpha+\bbeta^\top\bX),\qquad \hat{L}(\bgamma)=\frac{1}{n}\sum_{i=1}^{n}f(\bgamma^\top\bar{ \bX}_i)=\frac{1}{n}\sum_{i=1}^{n}f(\alpha+\bbeta^\top\bX_i), \\
& R(\bgamma)=\frac{1}{2}(\alpha+\bbeta^\top\bmu_0)^2=\frac{1}{2}(\bgamma^\top\bar{\bmu}_0)^2,\qquad \hat{R}(\bgamma)=\frac{1}{2}(\alpha+\bbeta^\top n^{-1}\sum_{i=1}^{n}\bX_i)^2=\frac{1}{2}(\bgamma^\top n^{-1}\sum_{i=1}^{n}\bar{\bX}_i)^2.
\end{align*}
Note that the results stated in Section \ref{sec:main_results} and \ref{sec:theory} focus on the special case when $\lambda=1$. The proof in the appendices allows for general choices of $\lambda \geq 1$.

\section{Proof of Lemma \ref{lem:f-property}} \label{appendix:f-property}
By direct calculation, one has
\begin{align*}
&f'(x) = \begin{cases}
h'(x), & |x| \leq a \\
[ h'(a) + h''(a) (|x|-a) - \frac{h''(a)}{2(b-a)} (|x|-a)^2 ]  \sgn(x), & a < |x| \leq b \\
[ h'(a) + \frac{b-a}{2} h''(a) ] \sgn(x) , & |x| > b
\end{cases}, \\
&f''(x) = \begin{cases}
h''(x), & |x| \leq a \\
h''(a) ( 1 - \frac{|x|-a}{b-a} ) , & a < |x| \leq b \\
0 , & |x| > b.
\end{cases}, \\
&f'''(x) = \begin{cases}
h'''(x), & |x| < a \\
- \frac{h''(a)}{b-a} \sgn (x), & a < |x| < b \\
0 , & |x| > b
\end{cases}.
\end{align*}
When $a$ is sufficiently large and $b\geq2a$, we have $F_1\triangleq\sup_{x \in \RR} |f' (x)| = h'(a) + \frac{b-a}{2} h''(a)\leq2a^2b$, $F_2\triangleq\sup_{x \in \RR} |f'' (x)| = h''(a)\leq3a^2$, and $F_3\triangleq\sup_{|x| \neq a, b} |f''' (x)| = h'''(a) \vee \frac{h''(a)}{b-a}\leq6a$. 

In addition, one can also check that when $a<\vert x\vert\leq b$, we have $\vert h^\prime(a)\vert\leq\vert x\vert^3$ and $\vert h^{\prime\prime}(a)\vert\leq3\vert x\vert^2$, thus
\begin{align*}
\vert f^\prime(x)-h^\prime(x)\vert&\leq\left\vert f^\prime(x)\right\vert+\left\vert h^\prime(x)\right\vert\leq \vert h^\prime(a)\vert+\vert h^{\prime\prime}(a)(\vert x\vert-a)\vert+\vert h''(a)(|x|-a)^2 /(2a)\vert+\vert x^3-x\vert\\
&\leq \vert x\vert^3+3\vert x\vert^2+\frac{3}{2}\vert x\vert^2+\vert x\vert^3\leq 7\vert x\vert^3
\end{align*}
provided that $b\geq 2a \geq 2$. When $\vert x\vert\geq b$, we have
\begin{align*}
\vert f^\prime(x)-h^\prime(x)\vert&\leq\left\vert f^\prime(x)\right\vert+\left\vert h^\prime(x)\right\vert\leq 
\vert h^\prime(a)\vert+\vert (b-a) h''(a) /2 \vert+\vert x^3-x\vert\\
&\leq \vert x\vert^3+\frac{3}{2}\vert x\vert^2+\vert x\vert^3\leq 4\vert x\vert^3.
\end{align*}
This combined with $f^\prime(x)=h^\prime(x)$ when $\vert x\vert\leq a$ gives $\vert f^\prime(x)-h^\prime(x)\vert\leq
\mathbf{1}_{ \{ |x| \geq a \} } 7 \vert x\vert^3$. Similarly we have $\vert f^{\prime\prime}(x)-h^{\prime\prime}(x)\vert\leq \mathbf{1}_{ \{ |x| \geq a \} } 9x^2$.
\section{Proof of Theorem \ref{thm:landscape-general}}\label{appendix:general-landscape-population}


It suffices to focus on the special case $\bmu_0 = \bm{0}$ and $\bSigma = \bI_d$. We first give a theorem that characterizes the landscape of an auxiliary population loss, which serves as a nice starting point of the study of the actual loss functions that we use.

\begin{theorem}[Landscape of the auxillary population loss]\label{thm:landscape-general-quartic}
	Consider model \eqref{model:1} with $\bmu_0 = \mathbf{0}$ and $\bSigma = \bI_d$. Suppose that $M_Z > 3$. 
	Let $h(x)= (x^{2}-1)^{2} / 4$ and $\lambda \geq 1$. The stationary points of the population loss
	\[
	L_\lambda^{h}\left( \alpha, \bbeta \right)
	= \mathbb{E} h\left(\alpha+\bm{\beta}^{\top}\bm{X}\right)+\frac{\lambda}{2} \alpha^2 
	\]
	are $\{ (\alpha, \bbeta) :\nabla L_\lambda^{h}( \alpha, \bbeta )=\bm{0}\}=S_{1}^h \cup S_{2}^h $,
	where
	\begin{enumerate}
		\item $S_{1}^h=\{  (0, \pm \bbeta^h ) \}$ consists of global minima, with 
		\[
		\bm{\beta}^{h}=\left(\frac{1+1/\left\Vert \bm{\mu}\right\Vert _{2}^{2}}{\left\Vert \bm{\mu}\right\Vert _{2}^{4} +6\left\Vert \bm{\mu}\right\Vert _{2}^{2}+ M_Z }\right)^{1/2} \bm{\mu};
		\]
		\item $S_{2}^h=\{ (0,\bm{\beta}):~ \bm{\mu}^{\top}\bm{\beta}=0,~\| \bm{\beta} \|_2^2=1/ M_Z \} \cup \{ \mathbf{0} \}$
		consists of saddle points whose Hessians have negative eigenvalues.
	\end{enumerate}
	We also have the following quantitative results: there exist positive constants $\varepsilon^h, \delta^h$ and $\eta^h$ determined by $M_Z$, $\| \bmu \|_2$ and $\lambda$ such that
	\begin{enumerate}
		\item $\Vert\nabla L^{h}_\lambda( \bgamma )\Vert_{2}\geq\varepsilon^h$ if $\mathrm{dist}( \bgamma ,S_{1}^h\cup S_{2}^h )\geq\delta^h $;
		\item $\nabla^{2}L_\lambda^{h}(\bgamma)\succeq\eta^h \bm{I}$ if $\mathrm{dist}( \bgamma ,S_{1}^h)\leq 3 \delta^h$,
		and $\bm{u}^{\top}\nabla^{2}L_\lambda^{h}( \bgamma )\bm{u}\leq-\eta^h$ if
		$\mathrm{dist}( \bgamma , S_{2}^h )\leq 3 \delta^h$ where $\bm{u}=(0,\bmu / \| \bmu \|_2 ) $.
	\end{enumerate}
\end{theorem}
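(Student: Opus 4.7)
The loss $L^h_\lambda$ is a degree-four polynomial in $(\alpha,\bbeta)$, so the plan is to (i) reduce it to a polynomial in the three scalars $\alpha$, $s\triangleq\bbeta^\top\bmu$, and $t\triangleq\|\bbeta\|_2$, (ii) solve $\nabla L^h_\lambda=\bm{0}$ exactly, (iii) check Hessian signatures at each critical point, and (iv) upgrade these pointwise facts to the quantitative bounds via coercivity plus a compactness argument.

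\textbf{Explicit form and critical points.} Spherical symmetry of $\bZ$ gives $\bbeta^\top\bZ\overset{d}{=}tZ$ with $Z=\be_1^\top\bZ$, so using $\EE Y=\EE Z=\EE Z^3=0$, $\EE Y^2=\EE Z^2=1$ and $\EE Z^4=M_Z$, the loss can be written explicitly as a polynomial in $(\alpha,s,t)$. Direct differentiation yields
\begin{align*}
\partial_\alpha L^h_\lambda &= \alpha(\alpha^2+3s^2+3t^2+\lambda-1), \\
\nabla_\bbeta L^h_\lambda &= s(3\alpha^2+s^2+3t^2-1)\bmu+(3\alpha^2+3s^2+M_Zt^2-1)\bbeta.
\end{align*}
Since $\lambda\geq1$, the bracket in $\partial_\alpha L^h_\lambda$ is nonnegative, so every critical point has $\alpha=0$. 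Decomposing $\bbeta=(s/\|\bmu\|_2^2)\bmu+\bbeta^\perp$ with $\bbeta^\perp\perp\bmu$, the equation $\nabla_\bbeta L^h_\lambda=\bm{0}$ splits into a perpendicular piece $(3s^2+M_Zt^2-1)\bbeta^\perp=\bm{0}$ and a scalar parallel piece. Either $\bbeta^\perp=\bm{0}$, in which case the parallel equation has the unique positive solution $c^2\|\bmu\|_2^2(\|\bmu\|_2^4+6\|\bmu\|_2^2+M_Z)=\|\bmu\|_2^2+1$ (recovering $\pm\bbeta^h$) along with the trivial $\bbeta=\bm{0}$; or $3s^2+M_Zt^2=1$, which combined with the parallel piece forces $s=0$ (so $\bbeta\perp\bmu$ with $\|\bbeta\|_2^2=1/M_Z$), because the alternative system $s^2+3t^2=1=3s^2+M_Zt^2$ is ruled out by $M_Z>3$. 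This exhausts $S_1^h\cup S_2^h$.

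\textbf{Hessian inspection.} A further derivative shows that on the critical set the mixed partials $\partial_\alpha\nabla_\bbeta L^h_\lambda=6\alpha(s\bmu+\bbeta)$ vanish and
\begin{align*}
\nabla^2_\bbeta L^h_\lambda = (3s^2+3t^2-1)\bmu\bmu^\top + 6s(\bmu\bbeta^\top+\bbeta\bmu^\top) + 2M_Z\bbeta\bbeta^\top + (3s^2+M_Zt^2-1)\bI_d.
\end{align*}
At $\pm\bbeta^h$, substituting the defining relation for $c$ simplifies the $\bI_d$-coefficient to $\|\bmu\|_2^2(2\|\bmu\|_2^2+M_Z-3)/(\|\bmu\|_2^4+6\|\bmu\|_2^2+M_Z)>0$, and the remaining rank-one correction only adds positive curvature in the $\bmu$ direction; together with the positive $\alpha$-block $\lambda+3c^2(\|\bmu\|_2^4+\|\bmu\|_2^2)-1>0$, this certifies a spectral gap depending only on $M_Z$ and $\|\bmu\|_2$, so $\pm\bbeta^h$ are strict local minima. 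Coercivity of the quartic plus the enumeration above then force them to be the global minima. At the saddles, the direction $\bu=(0,\bmu/\|\bmu\|_2)$ isolates strict negativity: on the sphere in $S_2^h$ (where $t^2=1/M_Z$ and $s=0$) one computes $\bu^\top\nabla^2 L^h_\lambda\bu=(3/M_Z-1)\|\bmu\|_2^2<0$, and at the origin $\bu^\top\nabla^2 L^h_\lambda\bu=-(1+\|\bmu\|_2^2)$.

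\textbf{Uniform bounds.} Since $L^h_\lambda$ is coercive, $\|\nabla L^h_\lambda(\bgamma)\|_2\to\infty$ as $\|\bgamma\|_2\to\infty$, so the analysis reduces to a closed ball $\overline{B}(\bm{0},R)$ for some explicit $R$ determined by $M_Z$, $\|\bmu\|_2$ and $\lambda$. Continuity of $\nabla^2 L^h_\lambda$ together with the strict signatures above let us fix $\delta^h,\eta^h>0$ (depending only on the same constants) so that the claimed Hessian bounds hold on the full $3\delta^h$-neighborhoods of $S_1^h$ and $S_2^h$. The gradient lower bound $\varepsilon^h$ then follows by compactness: $\|\nabla L^h_\lambda\|_2$ is continuous and nonvanishing on the compact set $\overline{B}(\bm{0},R)\cap\{\dist(\bgamma,S_1^h\cup S_2^h)\geq\delta^h\}$, hence attains a positive minimum there. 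The main delicate step is the algebraic case analysis for critical points, in particular invoking $M_Z>3$ to exclude the spurious simultaneous solution of $s^2+3t^2=1$ and $3s^2+M_Zt^2=1$; everything else is routine polynomial calculus and compactness.
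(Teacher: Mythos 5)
Your approach is essentially the paper's: the same closed-form gradient and Hessian computation, the same argument that $\alpha=0$ at any critical point (the bracket $\alpha^2+3s^2+3t^2+\lambda-1$ can only vanish if $\alpha=s=t=0$, which you should spell out since nonnegativity alone does not kill $\alpha$), the same case split between $\bbeta$ proportional to $\bmu$ and not, and the same crucial use of $M_Z>3$ to exclude the spurious simultaneous solution of $s^2+3t^2=1$ and $3s^2+M_Zt^2=1$. Your $(\alpha,s,t)$ parametrization and perpendicular/parallel decomposition of $\bbeta$ are merely a repackaging of the paper's vector-level case analysis, and your Hessian evaluations at the three families of critical points agree with the paper's.

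The one place your argument has a genuine gap is the final paragraph on uniform bounds. You conclude that $\varepsilon^h>0$ by compactness of $\overline{B}(\bm{0},R)\cap\{\dist(\bgamma,S_1^h\cup S_2^h)\geq\delta^h\}$, a subset of $\RR^{d+1}$. This gives positivity, but it does not by itself give a constant that is independent of $d$, which is what the theorem asserts (the constants must depend only on $M_Z$, $\|\bmu\|_2$, $\lambda$). A compactness minimum over a $(d+1)$-dimensional set could a priori degenerate as $d\to\infty$. The paper handles this by (i) producing a dimension-free coercivity radius $R$ and lower bound near infinity via an explicit truncation estimate (its Lemma~\ref{lemma-perturbation-population}, which invokes a function $\varphi$ with inputs $\|\bX\|_{\psi_2}$ and a lower eigenvalue bound only), and (ii) exploiting the rotational symmetry of $L^h_\lambda$ about $\bmu$ to write $\inf_{\bgamma\in\cS}\|\nabla L^h_\lambda(\bgamma)\|_2$ as an infimum over a three-dimensional slice $\mathrm{span}\{(0,\bmu),(0,\bbeta_\perp),(1,\mathbf{0})\}$, after which compactness gives a constant depending only on $M_Z,\|\bmu\|_2,\lambda,\delta^h$. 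Similarly, for $\delta^h,\eta^h$ the paper proves explicit $d$-independent Lipschitz bounds on $\nabla^2L^h_\lambda$ over a ball rather than appealing to raw continuity. You in fact set up exactly the tool needed --- the reduction of $L^h_\lambda$, its gradient norm, and all relevant distances to functions of the three scalars $(\alpha,s,t)$ --- but you do not invoke this when arguing the uniform bounds. Doing so would close the gap; as written, the dimension-independence is asserted rather than proved.
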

\begin{proof}
	See Appendix \ref{appendix:general-quartic-landscape-population}.
\end{proof}


The following Lemma \ref{lemma-perturbation-population} controls the difference between the landscape of $L_\lambda$ and $L^h_\lambda$ within a compact ball.

\begin{lemma}\label{lemma-perturbation-population}
	Let $\bX$ be a random vector in $\RR^{d+1}$ with $\Vert\bm{X}\Vert_{\psi_{2}}\leq M$, $f$ be defined in (\ref{eqn-test-function}) with $b\geq 2a \geq 4$, $h(x) = (x^2 - 1)^2 / 4$ for $x \in \RR$, $L_\lambda(\bgamma) =  \EE f (\bgamma^{\top} \bX) +\lambda\alpha^2/2$ and $L_\lambda^h (\bgamma) = \EE h(\bgamma^{\top} \bX)+\lambda\alpha^2/2 $ for $\bgamma \in \RR^{d+1}$. There exist constants $C_1,C_2 > 0$ such that for any $R>0$,
	\begin{align*}
	&\sup_{ \| \bm{\gamma} \|_2 \leq R }\left\Vert \nabla L_\lambda \left(\bm{\gamma}\right)-\nabla L_\lambda^h \left(\bm{\gamma}\right)\right\Vert _{2}\leq C_2 R^{3}M^{4}\exp\left(-\frac{C_1 a^{2}}{R^{2}M^{2}}\right),\\
	&\sup_{  \| \bm{\gamma} \|_2 \leq R }\left\Vert \nabla^{2}L_\lambda \left(\bm{\gamma}\right)-\nabla^{2}L_\lambda^h \left(\bm{\gamma}\right)\right\Vert_2 \leq
	C_2 R^{2}M^{4}\exp\left(-\frac{C_1 a^{2}}{R^{2}M^{2}}\right).
	\end{align*}
	In addition, when $ \EE (\bX \bX^{\top}) \succeq \sigma^2 \bI$ holds for some $\sigma > 0$, there exists $m > 0$ determined by $M$ and $\sigma$ such that $\inf_{ \| \bgamma \|_2 \geq 3 / m } \| \nabla L_\lambda(\bgamma) \|_2 \geq m$ and $\inf_{ \| \bgamma \|_2 \geq 3 / m } \| \nabla L^h_\lambda(\bgamma) \|_2 \geq m$.
\end{lemma}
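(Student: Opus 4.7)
My plan is as follows. Since $\lambda\alpha^{2}/2$ is identical in $L_\lambda$ and $L^h_\lambda$, it cancels from both differences, and I only need to control
\[
\nabla L_\lambda(\bgamma) - \nabla L^h_\lambda(\bgamma) = \EE[(f'(U) - h'(U))\bX]
\quad\text{and}\quad
\nabla^2 L_\lambda(\bgamma) - \nabla^2 L^h_\lambda(\bgamma) = \EE[(f''(U) - h''(U))\bX\bX^\top],
\]
where $U = \bgamma^\top \bX$. Lemma~\ref{lem:f-property} already pins down the integrands pointwise: $|f'-h'|(x) \leq 7|x|^3 \mathbf{1}_{\{|x|\geq a\}}$ and $|f''-h''|(x) \leq 9 x^2 \mathbf{1}_{\{|x|\geq a\}}$. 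Hence the entire discrepancy is supported on the tail event $\{|U|\geq a\}$, which is exponentially rare in $a^{2}/(R^{2}M^{2})$ by sub-Gaussianity $\|U\|_{\psi_2}\leq RM$ with $R=\|\bgamma\|_2$. This is the source of the claimed exponential decay.

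For the first two bounds, I would dualize the operator norm: for any unit $\bv\in\RR^{d+1}$, setting $V=\bv^\top\bX$, Cauchy-Schwarz yields
\[
|\EE[(f'(U)-h'(U))V]| \leq 7\sqrt{\EE[U^6\mathbf{1}_{\{|U|\geq a\}}]}\,\sqrt{\EE V^2},
\quad
|\EE[(f''(U)-h''(U))V^2]| \leq 9\sqrt{\EE[U^4\mathbf{1}_{\{|U|\geq a\}}]}\,\sqrt{\EE V^4}.
\]
Sub-Gaussianity delivers $\sqrt{\EE V^2}\lesssim M$ and $\sqrt{\EE V^4}\lesssim M^2$. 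For the truncated moments, an integration-by-parts against $\PP(|U|\geq s)\leq 2\exp(-cs^2/(R^2 M^2))$ gives $\EE[U^{2k}\mathbf{1}_{\{|U|\geq a\}}]\lesssim [(RM)^{2k}+a^{2k}]\exp(-ca^2/(R^2 M^2))$ for $k=2,3$. Splitting into the cases $a\leq RM$, where $(RM)^k$ dominates, and $a>RM$, where the pre-factor $a^k$ is reabsorbed into the exponential via the elementary inequality $t^k\exp(-ct^2)\leq C_k \exp(-ct^2/2)$ for $t\geq 1$, and then taking sup over $\bv$, produces exactly $C_2 R^3 M^4\exp(-C_1 a^2/(R^2 M^2))$ and $C_2 R^2 M^4\exp(-C_1 a^2/(R^2 M^2))$.

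For the coercivity claim I would lower-bound $\bgamma^\top\nabla L(\bgamma)$ and divide by $\|\bgamma\|_2$. For $L^h_\lambda$, direct computation gives $\bgamma^\top\nabla L^h_\lambda(\bgamma)=\EE U^4 - \EE U^2 + \lambda\alpha^2$; combining $\EE U^4\geq(\EE U^2)^2\geq\sigma^4\|\bgamma\|_2^4$ (Jensen together with $\EE[\bX\bX^\top]\succeq\sigma^2\bI$) with $\EE U^2\lesssim M^2\|\bgamma\|_2^2$ makes this dominate $\sigma^4\|\bgamma\|_2^4/2$ once $\|\bgamma\|_2$ exceeds a threshold determined by $M,\sigma$, so $\|\nabla L^h_\lambda(\bgamma)\|_2\gtrsim \sigma^4\|\bgamma\|_2^3$. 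For $L_\lambda$, the key observation is that $f'(x)=C_0\sgn(x)$ outside $[-b,b]$ with $C_0 = h'(a)+\tfrac{b-a}{2}h''(a)>0$, while a sign check shows $Uf'(U)\geq 0$ whenever $|U|\geq 1$ and $|Uf'(U)|\leq 1/4$ on $\{|U|<1\}$. Hence $\EE[Uf'(U)]\geq C_0\EE[|U|\mathbf{1}_{\{|U|>b\}}] - O(1)$. A Paley-Zygmund argument applied to $U^2$, using $(\EE U^2)^2/\EE U^4\geq \sigma^4/(CM^4)$, gives $\EE|U|\gtrsim (\sigma^5/M^4)\|\bgamma\|_2$, so $\|\nabla L_\lambda(\bgamma)\|_2\geq \bgamma^\top\nabla L_\lambda(\bgamma)/\|\bgamma\|_2$ is bounded below by a positive constant once $\|\bgamma\|_2$ is large. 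Choosing $m>0$ small enough, in terms of $M$ and $\sigma$ (and the now-fixed constants $a,b,\lambda$), so that $3/m$ exceeds all relevant thresholds then enforces both coercivity inequalities whenever $\|\bgamma\|_2\geq 3/m$.

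The main obstacle is really just bookkeeping: tracking the polynomial $a^{2k}$ pre-factor in the tail-moment bound and reabsorbing it into the exponential to land on the exact stated form. Conceptually, the covariance lower bound $\EE[\bX\bX^\top]\succeq\sigma^2\bI$ is used only in the coercivity step, where it prevents $U$ from degenerating and forces both $\EE|U|$ and $\EE U^4$ to grow with $\|\bgamma\|_2$.
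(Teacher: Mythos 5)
Your proposal is correct and arrives at the same two ingredients the paper uses (pointwise bounds on $f'-h'$, $f''-h''$ from Lemma~\ref{lem:f-property}, combined with sub-Gaussian tails of $\bgamma^\top\bX$), but via a genuinely different split. For the perturbation bounds the paper applies a three-factor H\"older inequality,
$\EE\bigl[|\bu^\top\bX|\,|\bgamma^\top\bX|^3\mathbf{1}_{\{|\bgamma^\top\bX|\geq a\}}\bigr]\leq \EE^{1/3}|\bu^\top\bX|^3\;\EE^{1/3}|\bgamma^\top\bX|^9\;\PP^{1/3}(|\bgamma^\top\bX|\geq a)$,
which immediately produces the target $R^3 M^4\exp(-C_1a^2/(R^2M^2))$ with no polynomial prefactor left over. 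Your two-factor Cauchy--Schwarz split pushes the truncation indicator inside the square root and therefore creates an extra $(RM)^{2k}+a^{2k}$ prefactor from the truncated moment bound; you correctly identify that this must be reabsorbed via $t^k e^{-ct^2}\leq C_k e^{-ct^2/2}$ on $\{t\geq1\}$ together with the case split $a\lessgtr RM$. Both routes land on the same form, but H\"older's inequality is the cleaner accounting and is worth knowing as the ``intended'' decomposition here. For the coercivity part your approach (lower-bounding $\langle\bgamma,\nabla L_\lambda(\bgamma)\rangle$, using $\EE[\bX\bX^\top]\succeq\sigma^2\bI$ with Jensen/power-mean or Paley--Zygmund to force $\EE|\bgamma^\top\bX|\gtrsim\|\bgamma\|_2$) is the same mechanism the paper packages into Lemma~\ref{lemma-Hessian-Lip-0} (which gives $\|\nabla L_\lambda(\bgamma)\|_2\geq c\inf_{\bu}\EE|\bu^\top\bX| - (ac+b)/\|\bgamma\|_2$ under a truncated linear-growth hypothesis on $f'$) combined with Lemma~\ref{lemma-L1-lower}.

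One small point to tighten: you invoke the far-tail region $\{|U|>b\}$ where $f'(x)=C_0\sgn(x)$, so both your constant $C_0=h'(a)+\tfrac{b-a}{2}h''(a)$ and your truncation threshold $b$ depend on the hyperparameters of $f$, and the resulting $m$ would implicitly depend on $a$ and $b$ rather than ``only on $M$ and $\sigma$'' as the lemma states. The paper avoids this by using the universal facts (valid since $a\geq2$) that $f'(x)\sgn(x)\geq h'(2)=6$ once $|x|\geq2$ and $xf'(x)\geq -1$ everywhere, so the coercivity constant depends only on $(M,\sigma)$. Your argument survives with the substitution: use threshold $2$ and lower bound $6$ in place of $b$ and $C_0$, and the $-\inf xf'(x)\leq1$ bound in place of $-1/4$, exactly as the paper does.
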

\begin{proof}
	See Appendix \ref{appendix:perturbation-population}.
\end{proof}

	On the one hand,  Lemma \ref{lemma-perturbation-population} implies that $\inf_{ \| \bgamma \|_2 \geq 3 / m } \| \nabla L_\lambda(\bgamma) \|_2 \geq m$ for some constant $m > 0$. Suppose that
	\begin{align}
	\varepsilon^h < m	
	\label{ineq-proof-thm-landscape-general-0-1}
	\end{align}
	and define $r = 3 / \varepsilon^h$. Then
	\begin{align}
	\| \nabla L_1 (\bgamma) \|_2 > \varepsilon^h \qquad \text{if} \qquad \| \bgamma \|_2 \geq r.
	\label{ineq-proof-thm-landscape-general-1}
	\end{align}
	Moreover, we can take $a$ to be sufficiently large such that
	\begin{align}
	\sup_{  \| \bm{\gamma} \|_2 \leq r }\big\Vert \nabla L_1 \left(\bm{\gamma}\right)-\nabla L_1^h \left(\bm{\gamma}\right)\big\Vert _{2}\leq\varepsilon^h/2.
	\label{ineq-proof-thm-landscape-general-2}
	\end{align}
	On the other hand, from Theorem \ref{thm:landscape-general-quartic} we know that 
	\begin{align}
	\Vert\nabla L^{h}_\lambda(\bm{\gamma})\Vert_{2}\geq\varepsilon^h
	\qquad \text{if} \qquad 
	\mathrm{dist}(\bm{\gamma},S_{1}^h\cup S_2^h )\geq\delta^h.
	\label{ineq-proof-thm-landscape-general-3}
	\end{align}
	Taking (\ref{ineq-proof-thm-landscape-general-1}), (\ref{ineq-proof-thm-landscape-general-2}) and (\ref{ineq-proof-thm-landscape-general-3}) collectively gives
	\begin{align}
	\Vert\nabla L_\lambda(\bm{\gamma})\Vert_{2}\geq\varepsilon^h/2
	\qquad \text{if} \qquad 
	\mathrm{dist}(\bm{\gamma},S_{1}^h\cup S_2^h )\geq\delta^h.
	\label{ineq-proof-thm-landscape-general-3.2}
	\end{align}
	Hence $\{\bgamma:~ \nabla L_\lambda(\bgamma)=\bm{0}\} \subseteq \{ \bgamma:~
	\mathrm{dist}( \bgamma ,S_1^h\cup S_2^h )\leq\delta^h \}$ and it yields a decomposition $\{\bgamma: ~ \nabla L_\lambda(\bgamma)=\bm{0}\}=S_1\cup S_2 $, where
	\begin{align}
	S_j \subseteq \{ \bgamma:~ \mathrm{dist}(\bgamma, S_j^h) \leq \delta^h \} ,
	\qquad
	\forall j = 1,~2.
	\label{ineq-proof-thm-landscape-general-3.5}
	\end{align}	
Consequently, for $j = 1, 2$ we have
	\begin{align}
	& \{ \bgamma:~ \mathrm{dist}(\bm{\gamma},S_{j}) \leq 2 \delta^h \} \subseteq \{ \bgamma:~ \mathrm{dist}(\bm{\gamma},S_{j}^h) \leq 3 \delta^h \}
	\subseteq
	\{ \bgamma:~ \| \bm{\gamma} \|_2 \leq 3 \delta^h + \max_{\bgamma' \in S_1^h\cup S_2^h} \| \bgamma' \|_2 \}.
	\label{ineq-proof-thm-landscape-general-4}
	\end{align}
	
	{\bf Now we work on the first proposition in Theorem \ref{thm:landscape-general} by characterizing $S_1$.}
	\begin{lemma}\label{lemma-population-existence}
		Consider the model in (\ref{model:1}) with $\bmu_0 = \mathbf{0}$ and $\bSigma = \bI_d $. Suppose that $f \in C^2 (\RR)$ is even, $\lim_{x \to +\infty} x f'(x) = + \infty$ and $f''(0) < 0$. Define
		\begin{align*}
		L_\lambda ( \alpha , \bbeta ) = \EE f( \alpha + \bbeta^{\top} \bX ) + \frac{\lambda}{2} \alpha^2, \qquad
		\forall \alpha \in \RR,~~ \bbeta \in \RR^d.
		\end{align*}
		\begin{enumerate}
			\item There exists some $c > 0$ determined by $\| \bmu \|_2$, the function $f$, and the distribution of $Z$, such that $ (0, \pm c  \bmu )$ are critical points of $L_\lambda$;
			\item In addition, if $f''$ is piecewise differentible and $|f'''(x)| \leq F_3 < \infty$ almost everywhere, we can find $c_0 > 0$ determined by $\|  \bmu \|_2$, $f''(0)$, $F_3$ and $M$ such that $c > c_0$. 
		\end{enumerate}
	\end{lemma}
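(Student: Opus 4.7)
The plan is to reduce the $(d+1)$-dimensional critical-point equation to a one-dimensional problem by exploiting spherical symmetry of $\bZ$, then analyze the resulting scalar function by elementary arguments. I restrict attention to the ray $\bgamma = (0, c\bmu)$, $c \in \RR$. Writing $Z' := \bmu^{\top}\bZ/\|\bmu\|_{2}$ (distributed as $Z$ by spherical symmetry) and $U := \|\bmu\|_{2}^{2} Y + \|\bmu\|_{2} Z'$, the joint symmetry $(Y, \bZ) \overset{d}{=} (-Y, -\bZ)$ combined with $f'$ odd yields $\partial_{\alpha} L_{\lambda}(0, c\bmu) = \EE f'(cU) = 0$. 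For the $\bbeta$-gradient, fix any unit $\bv \perp \bmu$: the reflection $\bZ \mapsto \bZ - 2(\bv^{\top}\bZ)\bv$ preserves the law of $\bZ$ (hence of $U$) while flipping $\bv^{\top}\bZ$, so $\EE[f'(cU)\,\bv^{\top}\bZ] = 0$. Therefore $\nabla_{\bbeta}L_{\lambda}(0, c\bmu)$ is parallel to $\bmu$, and $(0, c\bmu)$ is a critical point if and only if $\phi'(c) = 0$ for
\begin{equation*}
\phi(c) := L_{\lambda}(0, c\bmu) = \EE f(cU).
\end{equation*}

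I then prove existence of a positive critical point of $\phi$. Since $f$ is even and $U \overset{d}{=} -U$, $\phi$ is even; the second-order Taylor expansion of $f$ at $0$ together with $f'(0) = 0$ gives $\phi(c) - \phi(0) = \tfrac{1}{2} f''(0)\, c^{2}\,\EE U^{2} + o(c^{2})$ as $c \to 0$, which is strictly negative for small $c > 0$ because $f''(0) < 0$ and $\EE U^{2} = \|\bmu\|_{2}^{4} + \|\bmu\|_{2}^{2} > 0$. The growth hypothesis $\lim_{x \to +\infty} x f'(x) = +\infty$ forces $f(x) \to +\infty$ as $|x| \to \infty$ (integrate $f'(t) \geq 1/t$ for $t$ large), so for any nonzero $u$, $f(cu) \to +\infty$ as $|c| \to \infty$; since $\PP(U \neq 0) > 0$ and $f$ is bounded below by continuity plus the growth at infinity, Fatou's lemma yields $\phi(c) \to +\infty$ as $|c| \to \infty$. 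Hence $\phi$ attains its global minimum at some $c^{\star} \neq 0$, and by evenness I may choose $c^{\star} > 0$, delivering the critical points $(0, \pm c^{\star}\bmu)$.

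For the quantitative lower bound in part (2), I Taylor-expand $\phi'(c) = \EE[f'(cU)\,U]$ about $c = 0$; using $f'(0) = 0$ and the integral form of the remainder (valid since $f''$ is absolutely continuous with $|f'''| \leq F_3$ a.e.),
\begin{equation*}
\phi'(c) = f''(0)\, c\, \EE U^{2} + \EE\!\left[ U \int_{0}^{cU} (cU - s)\, f'''(s)\,\rd s \right],
\end{equation*}
with the remainder bounded in absolute value by $\tfrac{1}{2} F_{3}\, c^{2}\,\EE|U|^{3}$. Imposing $\phi'(c^{\star}) = 0$ and rearranging gives
\begin{equation*}
c^{\star} \geq \frac{2\,|f''(0)|\,\EE U^{2}}{F_{3}\,\EE|U|^{3}};
\end{equation*}
since $\EE U^{2} = \|\bmu\|_{2}^{4} + \|\bmu\|_{2}^{2}$ and sub-Gaussianity ($\|Z\|_{\psi_{2}} \leq M$) gives $\EE|U|^{3} \lesssim \|\bmu\|_{2}^{6} + \|\bmu\|_{2}^{3} M^{3}$, the resulting $c_{0}$ depends only on $\|\bmu\|_{2}$, $f''(0)$, $F_{3}$, and $M$, as required. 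The main technical obstacle is the symmetry reduction in the first step: spherical symmetry of $\bZ$ (rather than merely symmetry about the origin) is what forces $\nabla_{\bbeta}L_{\lambda}(0, c\bmu)$ to align with $\bmu$ and collapses the vector equation to the single scalar condition $\phi'(c) = 0$. Once this reduction is in place, the remaining arguments are Taylor expansion and Fatou-type reasoning with constants that propagate cleanly from the stated hypotheses.
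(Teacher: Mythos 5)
Your proof is correct and takes essentially the same route as the paper: both reduce the critical-point equation to a one-dimensional problem via the same symmetry argument (oddness of $f'$ for the $\alpha$-component, spherical symmetry for the $\bbeta$-component), both obtain a nonzero solution of the scalar equation from the negative curvature at $0$ plus coercivity at infinity, and both derive the lower bound on $c$ from the third-derivative bound $F_3$. The only cosmetic differences are that you minimize the restricted objective $\phi$ rather than applying the intermediate-value theorem to $\phi'$, and you write the quantitative bound via the integral Taylor remainder rather than a Lipschitz estimate on $\phi''$; these are equivalent.
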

	\begin{proof}
		See Appendix \ref{appendix:lemma-existence}.
	\end{proof}
	
	Lemma \ref{lemma-population-existence} asserts the existence of two critical points $\pm \bgamma^\star = (0, \pm c \bbeta^h )$ of $L_1$, for some $c$ bounded from below by a constant $c_0 > 0$. If
	\begin{align}
	\delta^h < c_0  \| \bbeta^h \|_2 / 4 ,
	\label{ineq-proof-thm-landscape-general-0-2}
	\end{align}
	then the property of $S_2^h$ forces 
\begin{align}
	\dist(  \pm \bgamma^\star  , S_2^h ) \geq \| \bgamma^\star \|_2 = c \| \bbeta^h \|_2 \geq c_0  \| \bbeta^h \|_2 > 4 \delta^h > 3 \delta^h.
		\label{ineq-proof-thm-landscape-general-4.2}
	\end{align}
	It is easily seen from (\ref{ineq-proof-thm-landscape-general-4}) with $j = 2$ that $\dist( \pm \bgamma^\star , S_2 ) > 2\delta^h$ and $ \pm \bgamma^\star \notin S_2$. 
	Then $\{ \bgamma:~ \nabla L_1(\bgamma) = \mathbf{0} \} = S_1 \cup S_2$ forces
	\begin{align}
	\{  \bgamma^\star , - \bgamma^\star \} \subseteq S_1. 
	\label{ineq-proof-thm-landscape-general-4.5}
	\end{align}
	Let us investigate the curvature near $S_1$. Lemma \ref{lemma-perturbation-population} and (\ref{ineq-proof-thm-landscape-general-4}) with $j = 1$ allow us to take $a$ to be sufficiently large such that
	\begin{align}
	\sup_{  \mathrm{dist}(\bm{\gamma},S_{1}) \leq 2 \delta^h }\big\Vert \nabla^{2}L_\lambda \left(\bm{\gamma}\right)-\nabla^{2}L_\lambda^h \left(\bm{\gamma}\right)\big\Vert_2 \leq\eta^h/2.
	\label{ineq-proof-thm-landscape-general-5}
	\end{align}
	Theorem \ref{thm:landscape-general-quartic} asserts that 
	$\nabla^{2}L^h_\lambda(\bm{\gamma})\succeq \eta^h \bm{I} $ if $\mathrm{dist}(\bm{\gamma},S_{1}^h) \leq 3\delta^h$. 
	By this, (\ref{ineq-proof-thm-landscape-general-4}) with $j = 1$ and (\ref{ineq-proof-thm-landscape-general-5}),
	\begin{align}
	\nabla^{2}L_\lambda(\bm{\gamma})\succeq (\eta^h/2) \bm{I}
	\qquad\text{if}\qquad
	\mathrm{dist}(\bm{\gamma},S_{1})\leq 2 \delta^h.
	\label{ineq-proof-thm-landscape-general-6}
	\end{align}
	Hence $L_1$ is strongly convex in $\{ \bgamma:~ \mathrm{dist}(\bm{\gamma},S_{1})\leq 2 \delta^h \}$. Combined with (\ref{ineq-proof-thm-landscape-general-4.5}), it leads to $S_1 = \{ \pm \bgamma^\star  \}$, and both points therein are local minima.
	
	Let $\bgamma^h = (0, \bbeta^h)$. The fact $S_1^h = \{ \pm \bgamma^h \}$ and (\ref{ineq-proof-thm-landscape-general-3.5}) yields
	\begin{align}
	|c-1| \cdot \| \bbeta^h \|_2 = \|  \bgamma^\star - \bgamma^h \|_2 = \dist ( \bgamma^\star , S_1^h ) \leq \delta^h.
		\label{ineq-proof-thm-landscape-general-6.5}
	\end{align}
	When
	\begin{align}
	\delta^h < \| \bbeta^h \|_2 / 2,
	\label{ineq-proof-thm-landscape-general-0-3}
	\end{align}
	we have $1/2 < c < 3/2$ as claimed. The global optimality of $\pm \bgamma^\star $ is obvious. 
	Without loss of generality, in Theorem \ref{thm:landscape-general-quartic} we can always take $\delta^h < \| \bbeta^h \|_2 \min \{ c_0 / 3, 1/2 \}$ and then find $\varepsilon^h< m$. In that case,
	(\ref{ineq-proof-thm-landscape-general-0-1}), (\ref{ineq-proof-thm-landscape-general-0-2}) and (\ref{ineq-proof-thm-landscape-general-0-3}) imply the first proposition in Theorem \ref{thm:landscape-general}.

	{\bf Next, we study the second proposition in Theorem \ref{thm:landscape-general}.} 
	Let $S = S_2^h$. Given $S_1=\{\pm\bgamma^h\}$ and $S_1=\{\pm\bgamma^\star\}$, from (\ref{ineq-proof-thm-landscape-general-6.5}) we know that $\mathrm{dist}(\bgamma,\{ \pm \bgamma^\star  \} \cup S )\geq 2\delta^h$ implies $\mathrm{dist}(\bgamma,S_1^h \cup S_2^h )\geq 2\delta^h$. This combined with (\ref{ineq-proof-thm-landscape-general-3.2}) immediately gives
	\begin{align*}
	\Vert\nabla L_\lambda(\bm{\gamma})\Vert_{2}\geq\varepsilon^h/2
	\qquad \text{if} \qquad 
	\mathrm{dist}(\bm{\gamma},\{ \pm \bgamma^\star  \} \cup S )\geq 2\delta^h.
	\end{align*}
	Hence the second proposition in Theorem \ref{thm:landscape-general} holds if
	\begin{align}
\varepsilon = \varepsilon^h / 2 \qquad \text{and} \qquad \delta = 2 \delta^h.
\label{ineq-proof-thm-landscape-general-0-4}
	\end{align}
	
	{\bf Finally, we study the third proposition in Theorem \ref{thm:landscape-general}.} By (\ref{ineq-proof-thm-landscape-general-6}), the first part of that proposition holds when
	\begin{align}
\eta = \eta^h / 2  \qquad \text{and} \qquad \delta = 2 \delta^h.
\label{ineq-proof-thm-landscape-general-0-5}
	\end{align}
	It remains to prove the second part. Lemma \ref{lemma-perturbation-population} and (\ref{ineq-proof-thm-landscape-general-4}) with $j = 2$ allow us to take $a$ to be sufficiently large such that
	\begin{align}
	\sup_{  \mathrm{dist}(\bm{\gamma},S) \leq 3 \delta^h }\big\Vert \nabla^{2}L_\lambda \left(\bm{\gamma}\right)-\nabla^{2}L_\lambda^h \left(\bm{\gamma}\right)\big\Vert_2 \leq\eta^h/2.
	\label{ineq-proof-thm-landscape-general-11}
	\end{align}
	Theorem \ref{thm:landscape-general-quartic} asserts that 
	$\bm{u}^{\top} \nabla^{2}L^h_\lambda(\bm{\gamma}) \bm{u} \leq - \eta^h $ for $\bu = (0, \bmu / \| \bmu \|_2 )$ if $\mathrm{dist}(\bm{\gamma},S) \leq 3\delta^h$. By this, (\ref{ineq-proof-thm-landscape-general-4}) with $j = 2$ and (\ref{ineq-proof-thm-landscape-general-11}),
	\begin{align}
	\nabla^{2}L_\lambda(\bm{\gamma}) \leq  - \eta^h/2 
	\qquad\text{if}\qquad
	\mathrm{dist}(\bm{\gamma},S )\leq 3 \delta^h.
	\end{align}
	Hence (\ref{ineq-proof-thm-landscape-general-0-4}) suffice for the second part of the third proposition to hold.
	
	According to (\ref{ineq-proof-thm-landscape-general-0-4}) and (\ref{ineq-proof-thm-landscape-general-0-5}), Theorem \ref{thm:landscape-general} holds with $\varepsilon = \varepsilon^h / 2$, $\delta = 2 \delta^h$ and $\eta = \eta^h / 2$.

\subsection{Proof of Theorem \ref{thm:landscape-general-quartic}}\label{appendix:general-quartic-landscape-population}
	\subsubsection{Part 1: Characterization of stationary points} 
Note that
	\begin{align*}
	\nabla L_{\lambda}^h ( \alpha, \bbeta ) & = \EE \left[
	\begin{pmatrix}
1 \\
\bX
	\end{pmatrix}
 h' ( \alpha + \bbeta^{\top} \bX ) \right] + 
 	\begin{pmatrix}
\lambda \\
\mathbf{0}
 \end{pmatrix}
\\
	& = 
	\begin{pmatrix}
	\EE h' ( \alpha + \bbeta^{\top} \bX ) + \lambda \\
	\mathbf{0}
	\end{pmatrix}
+ \begin{pmatrix}
0 \\
\EE [ Y h' ( \alpha + \bbeta^{\top} \bX )] \bmu
\end{pmatrix}
+ \begin{pmatrix}
0 \\
\EE [ \bZ h' ( \alpha + \bbeta^{\top} \bX )]
\end{pmatrix}.
	\end{align*} 
Now we will expand individual expected values in this sum. For the first term,
	\begin{align*}
	\EE h'(\alpha + \bbeta^\top\bX) &= \EE (\alpha + \bbeta^\top\bmu Y +  \bbeta^\top\bZ)^3 - \EE (\alpha + \bbeta^\top\bmu Y +  \bbeta^\top\bZ )\\
	&= \alpha^3 + 3 \alpha \EE (\bbeta^\top\bmu Y)^2 + 3 \alpha \EE( \bbeta^\top\bZ)^2 + \EE (\bbeta^\top\bmu Y +  \bbeta^\top\bZ)^3 - \alpha  \\
	&= \alpha [ \alpha^2+ 3(\bbeta^\top\bmu)^2  + 3\|\bbeta\|_2^2 - 1],
	\end{align*}
	where the first line follows since $h'(x) = x^3 - x$, the other two follows from $\EE ( \bZ\bZ^\top ) = \bI$ plus the fact that $Y$ and $\bZ$ are independent, with zero odd moments due to their symmetry.
	
	Using similar arguments,
	\begin{align*}
	\EE [ Y h' (\alpha + \bbeta^{\top} \bX ) ] &= \EE [ Y (\alpha + \bbeta^{\top} \bmu  Y + \bbeta^{\top} \bZ )^{3} ] - \EE [ Y ( \alpha + \bbeta^{\top} \bmu  Y + \bbeta^{\top} \bZ ) ] \notag \\
	&  =3\alpha^2 \EE \left[ Y( \bbeta^{\top} \bmu  Y + \bbeta^{\top} \bZ)\right] + \EE [ Y (\bbeta^{\top} \bmu  Y + \bbeta^{\top} \bZ )^{3} ] - \bbeta^\top \bmu \notag \\
	&  =3\alpha^2\bbeta^\top \bmu + \EE [ Y ( \bbeta^{\top} \bmu  Y )^3 ] + 3 \EE [ Y ( \bbeta^{\top} \bmu  Y)] \EE [ ( \bbeta^{\top} \bZ )^{2} ] - \bbeta^\top \bmu \notag \\
	& = \left[ 3\alpha^2 + ( \bbeta^{\top} \bmu )^2 \mathbb{E}Y^4+  3 \| \bbeta \|_2^2 - 1 \right] \bbeta^\top \bmu.
	\end{align*}
	
	To work on $\EE [ \bZ h' (\alpha + \bbeta^{\top} \bX ) ] = \EE [ \bZ h'(\alpha +  \bbeta^{\top} \bmu Y + \bbeta^{\top} \bZ ) ]$, we
	define $\bar{\bbeta} = \bbeta / \| \bbeta \|_2$ for $\bbeta \neq \mathbf{0}$ and $\bar{\bbeta} = \mathbf{0}$ otherwise. Observe that $(  Y ,  \bar{\bbeta} \bar{\bbeta}^{\top} \bZ, (\bI -  \bar{\bbeta} \bar{\bbeta}^{\top}) \bZ )$ and $(  Y ,  \bar{\bbeta} \bar{\bbeta}^{\top} \bZ, - (\bI -  \bar{\bbeta} \bar{\bbeta}^{\top}) \bZ )$ have exactly the same joint distribution. As a result,
	\begin{align*}
	\EE [ (\bI -  \bar{\bbeta} \bar{\bbeta}^{\top}) \bZ h' (\alpha + \bbeta^{\top} \bX ) ] 
	= \EE [ (\bI -  \bar{\bbeta} \bar{\bbeta}^{\top}) \bZ h' ( \alpha + \bbeta^{\top} \bmu  Y + \bbeta^{\top} \bZ ) ] 
	= \mathbf{0}.
	\end{align*} 
	Hence,
	\begin{align*}
	\EE [ \bZ h' ( \bbeta^{\top} \bX ) ] 
	& = \EE [  \bar{\bbeta} \bar{\bbeta}^{\top} \bZ h' ( \alpha + \bbeta^{\top} \bX ) ] 
	= \EE [  \bar{\bbeta}^{\top} \bZ h' (  \alpha + \bbeta^{\top} \bmu  Y + \bbeta^{\top} \bZ ) ]  \bar{\bbeta} \notag \\
	&  \overset{ }{=} \EE [  \bar{\bbeta}^{\top} \bZ ( \alpha +  \bbeta^{\top} \bmu  Y + \bbeta^{\top} \bZ )^3 ]  \bar{\bbeta} 
	-  \EE [  \bar{\bbeta}^{\top} \bZ (\alpha + \bbeta^{\top} \bmu  Y + \bbeta^{\top} \bZ ) ]  \bar{\bbeta} \notag \\
	&  = 3\alpha^2\EE [  \bar{\bbeta}^{\top} \bZ(\bbeta^\top \bmu Y + \bbeta^\top \bZ)] \bar \bbeta  + \EE [\bar \bbeta^\top\bZ (\bbeta^{\top} \bmu  Y + \bbeta^{\top} \bZ )^3 ]  \bar{\bbeta} 
	-  \bbeta \notag \\
	&  \overset{ }{=} ( 3\alpha^2  - 1) \bbeta + 3 \EE (  \bbeta^{\top} \bmu  Y)^2  \bbeta
	+ \EE [ \bar{\bbeta}^{\top} \bZ  ( \bbeta^{\top} \bZ )^3 ]  \bar{\bbeta} \notag \\
	&  \overset{}{=} [ 3\alpha^2 + 3 ( \bmu^{\top}  \bbeta )^2 + M_Z \| \bbeta \|_2^2 - 1 ] \bbeta, 
	\end{align*} 
	where besides the arguments we have been using we also employed identities $\| \bbeta \|_2 \bar\bbeta = \bbeta$ and $\EE(\bgamma^{\top} \bZ)^4 = M_Z$ for any unit-norm $\bgamma $. 
	Combining all these together, we get
	\begin{align}
	\nabla_\alpha L_{\lambda}^h(\alpha, \bbeta) & =  \alpha(\alpha^2 +3 (\bbeta^\top \bmu)^2 + 3\|\bbeta\|^2 + \lambda - 1) ,  \label{eqn-1129-gradient-alpha}\\
	\nabla_{\bbeta} L_{\lambda}^h(\alpha, \bbeta) & = [ 3\alpha^2 + (\bbeta^\top \bmu)^2 +3 \| \bbeta \|_2^2 - 1 ]  
	( \bmu^{\top} \bbeta ) \bmu
	+ [ 3\alpha^2 + 3 ( \bmu^{\top}  \bbeta )^2 + M_Z \| \bbeta \|_2^2 - 1 ] \bbeta .
	\label{eqn-1129-gradient-beta}
	\end{align}
	Taking second derivatives,
	\begin{align}
	\nabla_{\alpha \alpha}^2 L_{\lambda}^h(\alpha, \bbeta) & = 3\alpha^2 + 3(\bbeta^\top \bmu)^2 + 3\|\bbeta\|_2^2 + \lambda - 1, \label{eqn-1129-hessian-1} \\
	\nabla_{\bbeta \alpha}^2 L_{\lambda}^h(\alpha, \bbeta) & = 6\alpha [ (\bbeta^\top \bmu) \bmu + \bbeta ] ,
	\label{eqn-1129-hessian-2} \\
	\nabla_{\bbeta \bbeta}^2 L_{\lambda}^h(\alpha, \bbeta) & = 3 ( \bbeta^{\top} \bmu )^2  \bmu \bmu^{\top}  +
	( 3\alpha^2 + 3 \| \bbeta \|_2^2 - 1 ) \bmu\bmu^{\top} + 6 \bmu\bmu^{\top} \bbeta \bbeta^{\top} 
		\notag \\
	& \qquad \qquad 
	+ [ 3\alpha^2 + 3 ( \bmu^{\top}  \bbeta )^2 + M_Z \| \bbeta \|_2^2 - 1 ] \bI
	+  \bbeta [ 6 (\bmu^{\top} \bbeta)  \bmu^{\top}+ 2 M_Z \bbeta^{\top} ]  \notag \\
	& = 
	[ 3 \alpha^2 + 3 ( \bmu^{\top}  \bbeta )^2 + M_Z \| \bbeta \|_2^2 - 1 ] \bI
	+
	[ 3 \alpha^2 +
	3 ( \bbeta^{\top} \bmu )^2 
	+ (3 \| \bbeta \|_2^2 - 1)
	]
	\bmu \bmu^{\top}
	\notag \\& \qquad \qquad 
	+ 6 (\bmu^{\top} \bbeta)  ( \bmu \bbeta^{\top} +  \bbeta \bmu^{\top} )
	+ 2 M_Z  \bbeta  \bbeta^{\top}.
	\label{eqn-1129-hessian}
	\end{align}
	
	Now that we have derived the gradient and Hessian in closed form, we will characterize the lanscape. Let $(\alpha, \bbeta)$ be an arbitrary stationary point, we start by proving that it must satisfy $\alpha = 0$.
	\begin{claim}
	If $\lambda \geq 1$ then $\alpha = 0$ holds for any critical point $(\alpha, \bbeta)$. 
	\end{claim}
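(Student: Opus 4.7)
The plan is to read off the conclusion directly from the closed-form expression for $\nabla_\alpha L_\lambda^h(\alpha,\bbeta)$ derived just above, namely
\[
\nabla_\alpha L_\lambda^h(\alpha,\bbeta) = \alpha\bigl(\alpha^2 + 3(\bbeta^\top\bmu)^2 + 3\|\bbeta\|_2^2 + \lambda - 1\bigr).
\]
At any critical point the left-hand side vanishes, so either $\alpha = 0$ or the second factor vanishes.

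The key observation is that when $\lambda \geq 1$, every term inside the parenthesis is nonnegative: $\alpha^2 \geq 0$, $(\bbeta^\top\bmu)^2 \geq 0$, $\|\bbeta\|_2^2 \geq 0$, and $\lambda - 1 \geq 0$. Hence the second factor is nonnegative, and if it vanishes then in particular $\alpha^2 = 0$, i.e., $\alpha = 0$. Either way, $\alpha = 0$, which proves the claim.

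There is no real obstacle here; this is just an immediate algebraic consequence of the gradient formula~(\ref{eqn-1129-gradient-alpha}) and the assumption $\lambda \geq 1$. The role of $\lambda \geq 1$ is essential though: if $\lambda < 1$, then the second factor could legitimately vanish at a point with $\alpha \neq 0$ (e.g.\ when $\bbeta = \mathbf{0}$ and $\alpha^2 = 1-\lambda$), which would break the claim. This explains why the positivity of the penalty coefficient is built into the hypothesis.
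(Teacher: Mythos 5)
Your proof is correct and in fact cleaner than the paper's. Both start from the closed-form expression $\nabla_\alpha L_\lambda^h(\alpha,\bbeta) = \alpha\big(\alpha^2 + 3(\bbeta^\top\bmu)^2 + 3\|\bbeta\|_2^2 + \lambda - 1\big)$, but you observe directly that when $\lambda \geq 1$ the factor in parentheses is a sum of nonnegative terms, so if it vanishes then in particular $\alpha^2 = 0$; thus $\alpha = 0$ in either branch of the factorization, and the $\nabla_\alpha$ equation alone settles the claim. The paper instead runs a contradiction argument with a case split on whether $\bbeta$ lies in $\mathrm{span}\{\bmu\}$, and in the remaining subcase ($\bbeta,\bmu$ linearly independent with $\bbeta^\top\bmu \neq 0$) it invokes the $\nabla_\bbeta$ stationarity equations together with the kurtosis hypothesis $M_Z > 3$ to reach a contradiction. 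That machinery is needed later in the proof of Theorem \ref{thm:landscape-general-quartic} to classify the stationary points, but it is not needed for this claim; your argument isolates the minimal ingredient ($\lambda \geq 1$ makes the parenthetical factor a nonnegative sum containing $\alpha^2$) and dispenses with the case analysis and the kurtosis assumption entirely. Your closing remark about $\lambda < 1$ allowing a critical point at $\bbeta = \mathbf{0}$, $\alpha^2 = 1-\lambda$ is also accurate and correctly explains why the hypothesis is tight.
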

	\begin{proof}
		Seeking a contradiction assume that $\alpha \neq 0$. We start by assuming $\bbeta = c \bmu$ for some $c \in \RR$, then the optimality condition $\nabla_\alpha L_{\lambda}^h(\alpha , \bbeta) = 0$ gives $0 < \alpha^2 + 3 c^2 \|\bmu\|_2^2\left(\|\bmu\|_2^2 + 1\right) = 1 - \lambda \leq 0,$ yielding a contraction.
		
		Now, let us assume that $\bmu$ and $\bbeta$ are linearly independent, this assumption together with \eqref{eqn-1129-gradient-alpha} and \eqref{eqn-1129-gradient-beta} imply that
		\begin{align}
		\alpha^2 +3 (\bbeta^\top \bmu)^2 + 3\|\bbeta\|_2^2 + \lambda - 1 & = 0 , \notag \\
		[ 3\alpha^2 + (\bbeta^\top \bmu)^2+3 \| \bbeta \|_2^2 - 1 ]  
		\bmu^{\top} \bbeta  & = 0 , \notag \\
		3\alpha^2 + 3 ( \bmu^{\top}  \bbeta )^2 + M_Z \| \bbeta \|_2^2 - 1 & = 0. \label{eqn-1129-opt-ind}
		\end{align}
		There are only two possible cases:
		\begin{enumerate}
			\item[] \textit{Case 1}. If $\bbeta^\top \bmu = 0$, then the optimality condition for $\alpha$ gives
			$\alpha^2+3\|\bbeta\|_2^2 = 1-\lambda \leq 0$, which is a contradiction.
			\item[] \textit{Case 2}. If $\bbeta^\top \bmu \neq 0$, then $ 3\alpha^2 + (\bbeta^\top \bmu)^2+3 \| \bbeta \|_2^2 - 1 = 0$ and by substracting it from \eqref{eqn-1129-opt-ind} we get $0 < 2(\bbeta^\top \bmu)^2+(M_Z - 3)\|\bbeta\|_2^2 = 0$,
			yielding a contradiction again.
		\end{enumerate}
		This completes the proof of the claim.
	\end{proof}
	
	This claim directly implies that the Hessian $\nabla^2 L_{\lambda}^h$, evaluated at any critical point, is a block diagonal matrix with $\nabla_{\bbeta \alpha}^2 L_{\lambda}^h(\alpha, \bbeta) = 0$. Furthermore its first block is positive if $\bbeta \neq \mathbf{0}$, as
	\[
	\nabla_{\alpha \alpha}^2 L_{\lambda}^h(\alpha, \bbeta) = 3(\bbeta^\top \bmu )^2 + 3\|\bbeta\|_2^2 + \lambda - 1 > \lambda - 1 \geq 0.
	\]
	To prove the results regarding second order information at the critical points, it suffices to look at $\nabla_{\bbeta \bbeta} L_{\lambda}^h(\alpha, \bbeta)$. 
	
	Following a similar strategy to the one we used for the claim, let us start by assuming that $\bbeta$ and $\bmu$ are linearly independent. Then, (\ref{eqn-1129-gradient-beta}) yields
	\begin{align}
	&  [ ( \bbeta^{\top} \bmu )^2 + 3 \| \bbeta \|_2^2 - 1 ] (\bmu^{\top} \bbeta) = 0 , 
	\label{eqn-1129-gradient-1}\\
	& 3 ( \bmu^{\top}  \bbeta )^2 + M_Z \| \bbeta \|_2^2 - 1  = 0.
	\label{eqn-1129-gradient-2}
	\end{align}
	Consider two cases:
	\begin{enumerate} 
		\item[]  \textit{Case 1}. If $\bmu^{\top} \bbeta = 0$, then (\ref{eqn-1129-gradient-2}) yields $\| \bbeta \|_2^2 = 1 / M_Z$ and $(0,\bbeta) \in S_2^h$.
		\item[]  \textit{Case 2}. If $\bmu^{\top} \bbeta \neq 0$, then (\ref{eqn-1129-gradient-1}) forces $ ( \bbeta^{\top} \bmu )^2  +  3 \| \bbeta \|_2^2 - 1 = 0$. Since $M_Z > 3$, this equation and (\ref{eqn-1129-gradient-2}) force $\bbeta = \mathbf{0}$ and $\bmu^{\top} \bbeta = 0$, which leads to contradiction.
	\end{enumerate}
	
	Therefore, $S_2^h \backslash \{ \mathbf{0} \}$ is the collection of all critical points that are linearly independent of $(0,\bmu)$. For any $(0,\bbeta) \in S_2^h  \backslash \{ \mathbf{0} \}$, we have
	\begin{align}
& \nabla^2_{\bbeta \bbeta} L_{\lambda}^h (0,\bbeta)  = 
(3 \| \bbeta \|_2^2 - 1) \bmu \bmu^{\top}
+ 2 M_Z  \bbeta  \bbeta^{\top}, \notag \\
& \bmu^{\top} \nabla_{\bbeta \bbeta}^2 L_{\lambda}^h ( 0 , \bbeta) \bmu = (3 \| \bbeta \|_2^2 - 1) \| \bmu \|_2^4= - (1 - 3 / M_Z) \| \bmu \|_2^4, \notag \\
&\bu^{\top} \nabla^2 L_{\lambda}^h (0 , \bbeta) \bu \leq - (1 - 3 / M_Z) \| \bmu \|_2^2 < 0,
\label{eqn-thm:landscape-general-quartic-1}
	\end{align}
	where $\bu = (0, \bmu / \| \bmu \|_2)$. Hence the points in $S_2^h \backslash \{ \mathbf{0} \}$ are strict saddles.

	Now, suppose that $\bbeta = c \bmu$ and $\nabla L_\lambda^h(0, \bbeta ) = \mathbf{0}$. By (\ref{eqn-1129-gradient-beta}),
	\begin{align*}
	\nabla L_{\lambda}^h(0, \bbeta) &= [ ( c \| \bmu \|_2^2 )^3 + ( 3 c^2 \| \bmu \|_2^2 - 1 )  
	c  \| \bmu \|_2^2 ] \bmu
	+ [ 3 ( c  \| \bmu \|_2^2 )^2 + M_Z c^2 \| \bmu \|_2^2 - 1 ] c \bmu \\
	& = [
	c^2 \| \bmu \|_2^6 + ( 3 c^2 \| \bmu \|_2^2 - 1 )  
	\| \bmu \|_2^2
	+ 3  c^2  \| \bmu \|_2^4 + M_Z c^2 \| \bmu \|_2^2 - 1
	] c \bmu \\
	& = [ (\| \bmu \|_2^4 + 6 \| \bmu \|_2^2 + M_Z ) \| \bmu \|_2^2 c^2 - (\| \bmu \|_2^2 + 1) ] c \bmu.
	\end{align*}
	It is easily seen that $\nabla L_{\lambda}^h(\mathbf{0}) = \mathbf{0}$. If $c \neq 0$, then
	\begin{align}
	(\| \bmu \|_2^4 + 6 \| \bmu \|_2^2 + M_Z ) \| \bmu \|_2^2 c^2 = \| \bmu \|_2^2 + 1.
	\label{eqn-1129-gradient-3}
	\end{align}
	Hence $S_1^h \cup \{ \mathbf{0} \}$ is the collection of critical points that live in $\mathrm{span} \{ (0, \bmu) \}$, and $S_1^h \cup S_2^h$ contains all critical points of $L_\lambda^h$.
	
	We first investigate $\{ \mathbf{0} \}$. On the one hand, 
	\begin{align}
	\nabla_{\bbeta \bbeta}^2 L_{\lambda}^h(\mathbf{0}) = - ( \bI + \bmu \bmu^{\top} ) \prec 0.
		\label{eqn-thm:landscape-general-quartic-2}
	\end{align}
	On the other hand,
	\begin{align*}
& L_{\lambda}^h (\alpha , \mathbf{0}) = h(\alpha) + \frac{\lambda}{2} \alpha^2 = \frac{1}{4} (\alpha^2 - 1)^2 + \frac{\lambda}{2} \alpha^2 , \\
 & \nabla_{\alpha} L_{\lambda}^h (\alpha , \mathbf{0}) = \alpha^3 + (\lambda - 1) \alpha = \alpha ( \alpha^2 + \lambda - 1 ).
 	\end{align*}
 	It follows from $\lambda \geq 1$ that $0$ is a local minimum of $L_{\lambda}^h (\cdot, \mathbf{0})$. Thus $\mathbf{0}$ is a saddle point of $L_{\lambda}^h$ whose Hessian has negative eigenvalues.
 	
 	Next, for $(0,\bbeta) \in S_1$, we derive from (\ref{eqn-1129-hessian}) that
	\begin{align*}
	\nabla_{\bbeta \bbeta}^2 L_{\lambda}^h (0, \bbeta) & = 
	[ 3 ( c \| \bmu \|_2^2 )^2 + M_Z c^2 \| \bmu \|_2^2 - 1 ] \bI
	+
	[
	3 ( c \| \bmu \|_2^2 )^2
	+ 3 c^2 \| \bmu \|_2^2 - 1
	]
                                                             \bmu \bmu^{\top}\\
          & \hspace*{3cm}
	+ 6 c \| \bmu \|_2^2 \cdot 2 c \bmu \bmu^{\top}
	+ 2 M_Z c^2 \bmu \bmu^{\top} \\
	&= [ ( 3 \| \bmu \|_2^2  + M_Z ) c^2 \| \bmu \|_2^2 - 1] \bI
	+ [ ( 3 \| \bmu \|_2^4 + 15 \| \bmu \|_2^2 + 2 M_Z ) c^2 - 1 ] \bmu \bmu^{\top}.
	\end{align*}
	From (\ref{eqn-1129-gradient-3}) we see that
	\begin{align*}
	& ( 3 \| \bmu \|_2^2  + M_Z ) c^2 \| \bmu \|_2^2 - 1 =  \frac{  ( 3 \| \bmu \|_2^2  + M_Z )  ( \| \bmu \|_2^2 + 1 ) }{
		\| \bmu \|_2^4  + 6 \| \bmu \|_2^2 + M_Z 
	} - 1
	= \frac{  2 \| \bmu \|_2^4  + (M_Z - 3 ) \| \bmu \|_2^2  }{
		\| \bmu \|_2^4 + 6 \| \bmu \|_2^2 + M_Z 
	} > 0 ,\\
	&  ( 3 \| \bmu \|_2^4  + 15 \| \bmu \|_2^2 + 2 M_Z ) c^2 - 1 
	\geq  2 ( \| \bmu \|_2^4  + 6 \| \bmu \|_2^2 +  M_Z ) c^2 - 1
	= \frac{ 2( \| \bmu \|_2^2 + 1 ) }{ \| \bmu \|_2^2 } - 1 > 0.
	\end{align*}
	Hence both points in $S_1$ are local minima because
	\begin{align}
	\nabla_{\bbeta \bbeta}^2 L_{\lambda}^h (0,\bbeta)  \succeq \frac{  2 \| \bmu \|_2^4  + (M_Z - 3 ) \| \bmu \|_2^2  }{
		\| \bmu \|_2^4 + 6 \| \bmu \|_2^2 + M_Z 
	} \bI \succ 0,\qquad \forall (0, \bbeta ) \in S_1,
	\label{eqn-thm:landscape-general-quartic-3}
	\end{align}
	which immediately implies global optimality and finishes the proof.
	
	\subsubsection{Part 2: Quantitative properties of the landscape}

	\begin{enumerate}

		\item Lemma \ref{lemma-perturbation-population} implies that we can choose a sufficiently small constant $\varepsilon_1^h>0$ and a constant $R>0$ correspondingly such that $\Vert\nabla L^{h}_\lambda(\bm{\gamma})\Vert_{2}\geq\varepsilon_{1}^h$
		when $\Vert\bm{\gamma}\Vert_{2} \geq R$.
		Without loss of generality, we can always take $\delta^h \leq 1$ and $R > 1 + \max_{ \bgamma \in S_1^h \cup S_2^h  } \| \bgamma \|_2$. In doing so, we have
		\[
		\cS = \{ \bm{\gamma}:~\| \bgamma \|_2 \leq R, ~ \mathrm{dist} (\bm{\gamma},S_{1}^h\cup S_{2}^h  )\geq\delta^h \} \neq \varnothing.
		\]
		 

We now establish a lower bound for $\inf_{\bgamma \in \cS}\| \nabla L_{\lambda}^h (\bgamma) \|_2$. Define
		 \begin{align*}
& \cS_{\bbeta } = \mathrm{span}\left\{ 
(0, \bmu), (0, \bbeta), (1, \mathbf{0}) \right\} \cap \cS , \qquad \forall \bbeta \perp \bmu , \\
&  \varepsilon_{\bbeta} = \inf_{\bm{\gamma}\in \cS_{\bbeta } }\big\Vert \nabla L_\lambda^{h}\left(\bm{\gamma}\right)\big\Vert _{2}  .
		 \end{align*}
By symmetry, $\varepsilon_{\bbeta}$ is the same for all $\bbeta \perp \bmu$. Denote this quantity by $\varepsilon_2^h$. Since $\cS = \cup_{\bbeta \perp \bmu} \cS_{\bbeta}$,
\begin{align*}
\inf_{\bgamma \in \cS}\| \nabla L_{\lambda}^h (\bgamma) \|_2 
= \inf_{\bbeta \perp \bmu}  \inf_{\bgamma \in \cS_{\bbeta} }\| \nabla L_{\lambda}^h (\bgamma) \|_2
= \inf_{\bbeta \perp \bmu} \varepsilon_{\bbeta} = \varepsilon_2^h.
\end{align*} 

Take any $\bbeta \perp \bmu$. On the one hand, the nonnegative function $\Vert\nabla L_\lambda^h (\cdot) \Vert_2$ is continuous and its zeros are all in $S_{1}^h\cup S_{2}^h$. On the other hand, $\cS_{\bbeta }$ is compact and non-empty. Hence $\varepsilon_2^h =  \varepsilon_{\bbeta} > 0$ and it only depends on the function $L_{\lambda}^h$ restricted to a three-dimensional subspace, i.e. $\mathrm{span}\left\{ 
(0, \bmu), (0, \bbeta), (1, \mathbf{0}) \right\} $. It is then straightforward to check using the quartic expression of $L_{\lambda}^h$ and symmetry that $ \varepsilon_2^h$ is completely determined by $\| \bmu\|_2$, $M_Z$, $\lambda$ and $\delta^h$. From now on we write $\varepsilon_2^h(\delta^h)$ to emphasize its dependence on $\delta^h$, whose value remains to be determined.

To sum up, when $\delta^h \leq 1$ and $\varepsilon^h \leq \min \{ \varepsilon_1^h , \varepsilon_2^h(\delta^h) \}$, we have the desired result in the first claim.
		

		\item Given properties (\ref{eqn-thm:landscape-general-quartic-1}), (\ref{eqn-thm:landscape-general-quartic-2}) and  (\ref{eqn-thm:landscape-general-quartic-3}) of Hessians at all critical points, it suffices to show that
		\begin{align}
		\| \nabla^{2}L_{\lambda}^{h}(\bgamma_1) - \nabla^{2}L_{\lambda}^{h} (\bgamma_2) \|_2 \leq C' \| \bgamma_1 - \bgamma_2 \|_2,\qquad \forall \bgamma_1, ~\bgamma_2 \in B(\bm{0},R)
		\label{eqn-thm:landscape-general-quartic-4}
		\end{align}
		holds for some constant $C'$ determined by $\| \bmu \|_2$ and $R$. In that case, we can take sufficiently small $\delta^h$ and $\eta^h$ to finish the proof.
		
		Based on (\ref{eqn-1129-hessian-1}), (\ref{eqn-1129-hessian-2}) and (\ref{eqn-1129-hessian}), we first decompose $\nabla^{2}L_\lambda^{h}(\bm{\gamma})$ into
		the sum of two matrices $\bm{I}(\bm{\gamma})$ and $\bm{J}(\bgamma)$
		:
\begin{align*}
	\nabla^{2}L_\lambda^{h}\left(\bm{\gamma}\right)
	&=
	\begin{pmatrix}
	3\alpha^{2}+3\left(\bm{\beta}^{\top}\bm{\mu}\right)^{2}+3\left\Vert \bm{\beta}\right\Vert _{2}^{2}+\lambda-1 & 6\alpha\left[\left(\bm{\beta}^{\top}\bm{\mu}\right)\bm{\mu}+\bm{\beta}\right]^{\top}\\
	6\alpha\left[\left(\bm{\beta}^{\top}\bm{\mu}\right)\bm{\mu}+\bm{\beta}\right] & 3\alpha^{2}\left(\bm{I}+\bm{\mu}\bm{\mu}^{\top}\right)
	\end{pmatrix}
	\\
	&\quad+\begin{pmatrix}
	0 & \bm{0}^{\top}\\
\bm{0} & \nabla_{\bm{\beta}\bm{\beta}}^{2}L^{h}\left(\bm{\gamma}\right)-3\alpha^{2}\left(\bm{I}+\bm{\mu}\bm{\mu}^{\top}\right)
	\end{pmatrix}
\\
& = \bm{I}\left(\bm{\gamma}\right) + \bm{J}\left( \bgamma \right).
\end{align*}
		For any $\bm{\gamma}_{1}=(\alpha_{1},\bm{\beta}_{1}),\bm{\gamma}_{2}=(\alpha_{2},\bm{\beta}_{2}) \in B (\bm{0},R)$,
		we have 
		\begin{align*}
		\left\Vert \bm{I}\left(\bm{\gamma}_{1}\right)-\bm{I}\left(\bm{\gamma}_{2}\right)\right\Vert_2  & \leq\left|3\alpha_{1}^{2}+3\left(\bm{\beta}_{1}^{\top}\bm{\mu}\right)^{2}+3\left\Vert \bm{\beta}_{1}\right\Vert _{2}^{2}-3\alpha_{2}^{2}-3\left(\bm{\beta}_{2}^{\top}\bm{\mu}\right)^{2}-3\left\Vert \bm{\beta}_{2}\right\Vert _{2}^{2}\right|\\
		& \quad+2\left\Vert 6\alpha_{1}\left[\left(\bm{\beta}_{1}^{\top}\bm{\mu}\right)\bm{\mu}+\bm{\beta}_{1}\right]-6\alpha_{2}\left[\left(\bm{\beta}_{2}^{\top}\bm{\mu}\right)\bm{\mu}+\bm{\beta}_{2}\right]\right\Vert_2 \\
		&\quad+\left\Vert 3\left(\alpha_{1}^{2}-\alpha_{2}^{2}\right)\left(\bm{I}+\bm{\mu}\bm{\mu}^{\top}\right)\right\Vert_2 .
		\end{align*}
		Let $\Delta=\Vert\bm{\gamma}_{1}-\bm{\gamma}_{2}\Vert_{2}$ and note that $\vert\alpha_{1}^{2}-\alpha_{2}^{2}\vert\leq2R\Delta$,
		$\vert\Vert\bm{\beta}_{1}\Vert_{2}^{2}-\Vert\bm{\beta}_{2}\Vert_{2}^{2}\vert\leq2R\Delta$,
		$\vert(\bm{\beta}_{1}^{\top}\bm{\mu})^{2}-(\bm{\beta}_{2}^{\top}\bm{\mu})^{2}\vert\leq2R\Vert\bm{\mu}\Vert_{2}^{2}\Delta$,
		$\Vert\alpha_{1}\bm{\beta}_{1}-\alpha_{2}\bm{\beta}_{2}\Vert_{2}\leq2R\Delta$
		and $\vert\alpha_{1}(\bm{\beta}_{1}^{\top}\bm{\mu})-\alpha_{2}(\bm{\beta}_{2}^{\top}\bm{\mu})\vert\leq2R\Vert\bm{\mu}\Vert_{2}\Delta$,
		we immediately have
		\[
		\Vert \bm{I}(\bm{\gamma}_{1})-\bm{I}(\bm{\gamma}_{2})\Vert_2 \lesssim (1 + \left\Vert \bm{\mu}\right\Vert _{2}+\left\Vert \bm{\mu}\right\Vert _{2}^{2})R \| \bgamma_1 - \bgamma_2 \|_2.
		\]
		
		According to (\ref{eqn-1129-hessian}), $\bm{J}(\bgamma)$ depends on $\bbeta$ but not $\alpha$. Moreover, we have the following decomposition for its bottom right block:
		\begin{align*}
&\underbrace{\left[3\left(\bm{\mu}^{\top}\bm{\beta}\right)^{2} + M_Z \left\Vert \bm{\beta}\right\Vert _{2}^{2}-1\right]\bm{I}}_{ \bm{J}_{1}\left(\bm{\beta}\right)}+\underbrace{\left[3\left(\bm{\beta}^{\top}\bm{\mu}\right)^{2} +\left(3\left\Vert \bm{\beta}\right\Vert _{2}^{2}-1\right)\right]\bm{\mu}\bm{\mu}^{\top}}_{ \bm{J}_{2}\left(\bm{\beta}\right)}\\
 &\hspace{2cm}+\underbrace{6\left(\bm{\mu}^{\top}\bm{\beta}\right)\left(\bm{\mu}\bm{\beta}^{\top}+\bm{\beta}^{\top}\bm{\mu}\right)}_{ \bm{J}_{3}\left(\bm{\beta}\right)}+\underbrace{2 M_Z \bm{\beta}\bm{\beta}^{\top}}_{ \bm{J}_{4}\left(\bm{\beta}\right)}.
		\end{align*}
		Similar argument gives $\Vert\bm{J}_{1}(\bm{\beta}_{1})-\bm{J}_{1}(\bm{\beta}_{2})\Vert\lesssim (\Vert\bm{\mu}\Vert_{2}^{2}+ M_Z )R\Delta$,
		$\Vert\bm{J}_{2}(\bm{\beta}_{1})-\bm{J}_{2}(\bm{\beta}_{2})\Vert_2 \lesssim (\Vert\bm{\mu}\Vert_{2}^{4} + \Vert\bm{\mu}\Vert_{2}^{2})R\Delta$,
		$\Vert\bm{J}_{3}(\bm{\beta}_{1})-\bm{J}_{3}(\bm{\beta}_{2})\Vert_2 \lesssim \Vert\bm{\mu}\Vert_{2}^{2}R\Delta$
		and $\Vert\bm{J}_{4}(\bm{\beta}_{1})-\bm{J}_{4}(\bm{\beta}_{2})\Vert_2 \lesssim M_Z R\Delta$.
		As a result, we have
		\[
		\Vert \bm{J} (\bgamma_{1} )-\bm{J} (\bgamma_{2} ) \Vert_2 \lesssim (\left\Vert \bm{\mu}\right\Vert _{2}^{2}+ \left\Vert \bm{\mu}\right\Vert_{2}^{4}  + M_Z )R \| \bgamma_1 - \bgamma_2 \|_2.
		\]
		Hence we finally get (\ref{eqn-thm:landscape-general-quartic-4}). 
		
	\end{enumerate}

\subsection{Proof of Lemma \ref{lemma-perturbation-population}}\label{appendix:perturbation-population}
By definition, $\nabla L_\lambda \left(\bm{\gamma}\right)-\nabla L_\lambda^h \left(\bm{\gamma}\right) =\mathbb{E}\left({\bm{X}}\left[f^{\prime}\left(\bm{\gamma}^{\top}{\bm{X}}\right)-h^{\prime}\left(\bm{\gamma}^{\top}{\bm{X}}\right)\right]\right)$. 
From Lemma \ref{lem:f-property} we obtain that $|f'(x) - h'(x)| \lesssim | x |^{3} \mathbf{1}_{ \{ | x |\geq a \} }$ when $b\geq2a$ and $a$ is sufficiently large. When $\| \bgamma \|_2 \leq R$, we have
\begin{align*}
	\big\Vert \nabla L_\lambda\left(\bm{\gamma}\right)-\nabla L_\lambda^h \left(\bm{\gamma}\right)\big\Vert _{2} & =\sup_{\bm{u}\in\SSS^{d}}\mathbb{E}\left(\bm{u}^{\top}{\bm{X}}\big[f^{\prime}\big(\bm{\gamma}^{\top}{\bm{X}}\big)-h^{\prime}\big(\bm{\gamma}^{\top}{\bm{X}}\big)\big]\right)\\
	&\lesssim \sup_{\bm{u}\in\SSS^{d}}
	\mathbb{E}\left( \big|\bm{u}^{\top}{\bm{X}}\big|\big|\bm{\gamma}^{\top}{\bm{X}}\big|^{3}
	\mathbf{1}_{ \{ |\bgamma^{\top} {\bm{X}}| \geq a \} }
	\right)\\
	& \overset{\text{(i)}}{\lesssim}\sup_{\bm{u}\in\SSS^{d}} \mathbb{E}^{1/3} \big|\bm{u}^{\top}{\bm{X}}\big|^{3}
	\mathbb{E}^{1/3} \big|\bm{\gamma}^{\top}{\bm{X}}\big|^{9}
	\mathbb{P}^{1/3}\big(\big|\bm{\gamma}^{\top}{\bm{X}}\big|\geq a\big)\\
	& \overset{\text{(ii)}}{\lesssim}\sup_{\bm{u}\in\SSS^{d}}
	\big\Vert \bm{u}^{\top}{\bm{X}}\big\Vert _{\psi_{2}}
	\big\Vert \bm{\gamma}^{\top}{\bm{X}}\big\Vert _{\psi_{2}}^{3}
	\exp\left(-\frac{C_1 a^{2}}{\left\Vert \bm{\gamma}^{\top}{\bm{X}}\right\Vert _{\psi_{2}}^{2}}\right)\\
	&\overset{\text{(iii)}}{\leq} R^{3}M^{4}\exp\left(-\frac{C_1 a^{2}}{R^{2}M^{2}}\right)
\end{align*}
for some constant $C_1>0$. Here (i) uses H\"older's inequality, (ii) comes from sub-Gaussian property \citep{Ver10}, and
(iii) uses $\| \bv^{\top} {\bX} \|_{\psi_2} \leq \| \bv \|_2 \| {\bm{X}} \|_{\psi_2} = \| \bv \|_2 M$, $\forall \bv \in \RR^{d+1}$.

To study the Hessian, we start from $\nabla^{2}L_\lambda \left(\bm{\gamma}\right)-\nabla^{2}L_\lambda^h \left(\bm{\gamma}\right)  =\mathbb{E}\left({\bm{X}}{\bm{X}}^{\top}\left[f^{\prime\prime}\left(\bm{\gamma}^{\top}{\bm{X}}\right)-h^{\prime\prime}\left(\bm{\gamma}^{\top}{\bm{X}}\right)\right]\right)$.
Again from Lemma \ref{lem:f-property} we know that $|f''(x) - h''(x)| \lesssim x^2 \mathbf{1}_{ \{ |x| \geq a \} }$. When $\| \bgamma \|_2 \leq R$, we have
\begin{align*}
	\big\Vert \nabla^{2}L_\lambda\left(\bm{\gamma}\right)-\nabla^{2}L_\lambda^h \left(\bm{\gamma}\right)\big\Vert_2  & =\sup_{\bm{u}\in\SSS^{d}}\bm{u}^{\top}\mathbb{E}\left({\bm{X}}{\bm{X}}^{\top}\big[f^{\prime\prime}\big(\bm{\gamma}^{\top}{\bm{X}}\big)-h^{\prime\prime}\big(\bm{\gamma}^{\top}{\bm{X}}\big)\big]\right)\bm{u}
	\\ &\lesssim
	\sup_{\bm{u}\in\SSS^{d}} \mathbb{E}\left(\big|\bm{u}^{\top}{\bm{X}}\big|^{2}\big|\bm{\gamma}^{\top}{\bm{X}}\big|^{2}
	\mathbf{1}_{ \{ |\bgamma^{\top} {\bX}| \geq a \} }
	\right)\\
	& \lesssim \sup_{\bm{u}\in\SSS^{d}}
	\mathbb{E}^{1/3} \big|\bm{u}^{\top}{\bm{X}}\big|^{6}
	\mathbb{E}^{1/3} \big|\bm{\gamma}^{\top}{\bm{X}}\big|^{6}
	\mathbb{P}^{1/3}\big(\big|\bm{\gamma}^{\top}{\bm{X}}\big|\geq a\big)\\
	& \lesssim \sup_{\bm{u}\in\SSS^{d}}
	\big\Vert \bm{u}^{\top}{\bm{X}}\big\Vert _{\psi_{2}}^{2}
	\big\Vert \bm{\gamma}^{\top}{\bm{X}}\big\Vert _{\psi_{2}}^{2}
	\exp\left(-\frac{C_1 a^{2}}{\left\Vert \bm{\gamma}^{\top}{\bm{X}}\right\Vert _{\psi_{2}}^{2}}\right)
	\\&\leq
	R^{2}M^{4}\exp\left(-\frac{C_1 a^{2}}{R^{2}M^{2}}\right)
\end{align*}
for some constant $C_1 >0$.

We finally work on the lower bound for $\| \nabla L_\lambda(\bgamma) \|_2$. From $b \geq 2 a \geq 4$ we get $f(x) = h(x)$ for $|x| \leq a$; $f'(x) \geq 0$ and $f''(x) \geq 0$ for all $x \geq 1$. Since $f'$ is odd,
\begin{align*}
	& \inf_{ x \in \RR }xf'(x) = \inf_{ |x| \leq 1 } xf'(x) = \inf_{ |x| \leq 1 } xh'(x) = \inf_{ |x| \leq 1 } \{ x^4 - x^2 \} \geq - 1 , \\
	& \inf_{ |x| \geq 2 } f'(x) \sgn(x)
	= \inf_{ x \geq 2 } f'(x) \geq f'(2) = h'(2) = 2^3 - 2 = 6.
\end{align*}
Taking $a = 2$, $b = 1$ and $c = 6$ in Lemma \ref{lemma-Hessian-Lip-0}, we get
\begin{align*}
	\| L_\lambda(\bgamma) \|_2 \geq 6 \inf_{\bu \in \SSS^{d}} \EE | \bu^{\top}{\bX} | - \frac{12 + 1}{\| \bgamma \|_2}
	\geq 6 \varphi ( \| {\bX} \|_{\psi_2}, \lambda_{\min} [ \EE ( {\bX} {\bX}^{\top} ) ] ) - \frac{13}{\| \bgamma \|_2}
	\geq 6 \varphi ( M, \sigma^2 ) - \frac{13}{\| \bgamma \|_2}
\end{align*}
for $\bgamma \neq \mathbf{0}$. Here $\varphi$ is the function in Lemma \ref{lemma-L1-lower}. If we let $m = \varphi ( M, \sigma^2 ) $, then $\inf_{ \| \bgamma \|_2 \geq 3 / m } \| L_\lambda(\bgamma) \|_2 \geq m$. Follow a similar argument, we can show that $\inf_{ \| \bgamma \|_2 \geq 3 / m } \| L^h_\lambda(\bgamma) \|_2 \geq m$ also holds for the same $m$.
	
\subsection{Proof of Lemma \ref{lemma-population-existence}}\label{appendix:lemma-existence}
To prove the first part, we define $\bar{\bmu} = \bmu / \| \bmu \|_2$ and seek for $c > 0$ determined by $\| \bmu \|_2$, the function $f$, and the distribution of $Z$ such that $\nabla L_1 (0, \pm c \bar\bmu) = \mathbf{0}$.

By the chain rule, for any $(\alpha, \bbeta, t) \in \RR \times \RR^d \times \RR$ we have
\begin{align*}
\nabla L_\lambda(\alpha, \bbeta) = \begin{pmatrix}
\EE f' (\alpha + \bbeta^{\top} \bX) + \lambda\alpha \\
\EE [ \bX f' (\alpha + \bbeta^{\top} \bX) ]
\end{pmatrix}
\qquad \text{and} \qquad 
\nabla L_1(0, t \bar\bmu) = \begin{pmatrix}
\EE f' ( t \bar\bmu^{\top} \bX)  \\
\EE [ \bX f' ( t \bar\bmu^{\top} \bX) ]
\end{pmatrix}.
\end{align*}
Since $f$ is even, $f'$ is odd and $t \bar\bmu^{\top} \bX $ has symmetric distribution with respect to $0$, we have $\EE f' ( t \bar\bmu^{\top} \bX) = 0$. It follows from $(\bI - \bar\bmu \bar\bmu^{\top}) \bX = (\bI - \bar\bmu \bar\bmu^{\top}) \bZ$ that
\begin{align*}
(\bI - \bar\bmu \bar\bmu^{\top}) \EE [ \bX f' ( t \bar\bmu^{\top} \bX) ] = \EE [ (\bI - \bar\bmu \bar\bmu^{\top}) \bZ f' (t \bar\bmu^{\top} \bX) ] = \EE [ (\bI - \bar\bmu \bar\bmu^{\top}) \bZ f' (t \| \bmu \|_2 Y + t \bar\bmu^{\top} \bZ) ].
\end{align*}
Thanks to the independence between $Y$ and $\bZ$ as well as the spherical symmetry of $\bZ$, $(Y,  \bar\bmu^{\top} \bZ, (\bI - \bar\bmu \bar\bmu^{\top}) \bZ )$ and $(Y,  \bar\bmu^{\top} \bZ, - (\bI - \bar\bmu \bar\bmu^{\top}) \bZ )$ share the same distribution. Then
\begin{align*}
(\bI - \bar\bmu \bar\bmu^{\top}) \EE [ \bX f' ( t \bar\bmu^{\top} \bX) ] = \mathbf{0} 
\qquad \text{ and } \qquad \EE [ \bX f' ( t \bar\bmu^{\top} \bX) ] = \bar\bmu \bar\bmu^{\top}  \EE [ \bX f' ( t \bar\bmu^{\top} \bX) ].
\end{align*}
As a result,
\begin{align*}
\nabla L_\lambda(0, t \bar\bmu) =
\EE [ \bar\bmu^{\top}\bX f' ( t \bar\bmu^{\top} \bX) ] \begin{pmatrix}
0  \\
\bar\bmu
\end{pmatrix}.
\end{align*}

Define $W = \bar\bmu^{\top} \bX = \| \bmu \|_2 Y + \bar\bmu^{\top} \bZ$ and $\varphi(t) = \EE [W f'(tW)]$ for $t \in \RR$. The fact that $f$ is even yields $f'(0) = 0$ and $\varphi(0) = \EE [W f'(0)] = 0$. On the one hand, $f''(0) < 0$ forces
\begin{align}
\varphi'(0) = \EE [ W^2 f''(tW) ] |_{t = 0} = f''(0) \EE W^2 = f''(0) ( \| \bmu \|_2^2 + 1 ) < 0.
\label{ineq-lemma-population-existence-0}
\end{align}
Hence there exists $t_1 > 0$ such that $\varphi (t_1) < 0$. On the other hand, $\lim_{x \to +\infty} xf'(x) = +\infty$ leads to $\lim_{t \to +\infty} x \varphi(x) = \EE [tW f'(tW)] = + \infty$. Then there exists $t_2 > 0$ such that $\varphi(t_2) > 0$. By the continuity of $\varphi$, we can find some $c > 0$ such that $\varphi(c) = 0$. Consequently,
\begin{align*}
\nabla L_1(0, c \bar\bmu) =
\varphi(c) \begin{pmatrix}
0  \\
\bar\bmu
\end{pmatrix} = \mathbf{0}.
\end{align*} 
In addition, from
\begin{align*}
\varphi(-c) = \EE [W f'(-cW)] = - \EE [W f'(cW)] = - \varphi(c) = 0
\end{align*}
we get $\nabla L(0, -c \bar\bmu) = \mathbf{0}$. It is easily seen that $t_1$, $t_2$ and $c$ are purely determined by properties of $f$ and $W$, where the latter only depends on $\| \bmu \|_2$ and the distribution of $Z$. This finishes the first part.

To prove the second part, we first observe that
\begin{align*}
| \varphi''(t) | = | \EE [ W^3 f''' (tW) ] | \leq F_3 \EE |W|^3 = F_3 ( 3^{-1/2} \EE^{1/3} |W|^3 )^3 \cdot 3^{3/2}
\leq 3^{3/2} F_3 M, \qquad \forall t \in \RR.
\end{align*}
Let $c_0 =  - f''(0) ( \| \bmu \|_2^2 + 1 ) / (3^{3/2} F_3 M)$. In view of (\ref{ineq-lemma-population-existence-0}),
\begin{align*}
\varphi'(t) \leq \varphi'(0) + t \sup_{s \in \RR} |\varphi''(s)| \leq f''(0) ( \| \bmu \|_2^2 + 1 ) + 3^{3/2} F_3 M t < 0, \qquad \forall t \in [0, c_0).
\end{align*}
Thus $\varphi(t) < \varphi(0) = 0$ in the same interval, forcing $c > c_0$.

\section{Proof of Theorem \ref{theorem-landscape-sample}}\label{appendix:theorem-landscape-sample}

It suffices to prove the bound on the exceptional probability for each claim. 
\begin{enumerate}
	\item Claim 1 can be derived from Lemma \ref{lemma-landscape-sample-gradient}, Theorem~\ref{thm:landscape-general} and concentration of gradients within a ball (cf.~Lemma~\ref{lemma:gradient-concentration}). 
	\begin{lemma}\label{lemma-landscape-sample-gradient}
	Let $\{ \bm{X}_i \}_{i=1}^n$ be i.i.d.~random vectors in $\RR^{d+1}$ with $\| \bm{X}_i \|_{\psi_2} \leq 1$ and $ \EE (\bm{X}_i \bm{X}_i^{\top}) \succeq \sigma^2 \bI$ for some $\sigma > 0$, $f$ be defined in (\ref{eqn-test-function}) with $b\geq 2a \geq 4$, and
	\begin{align*}
\hat L_\lambda(\bgamma) = \frac{1}{n} \sum_{i=1}^{n} f (\bgamma^{\top}\bX_i) + \frac{\lambda}{2} (\bgamma^\top \hat\bmu)^2
	\end{align*}
	with $\hat\bmu = \frac{1}{n}\sum_{i=1}^{n}\bX_i$ and $\lambda \geq 0$.
	There exist positive constants $C, C_1, C_2, R$ and $\varepsilon_1$ determined by $\sigma$ such that when $n/d \geq C$,
	\begin{align*}
	\PP \bigg( \inf_{ \| \bgamma \|_2 \geq R } \| \nabla \hat L_\lambda(\bgamma) \|_2 > \varepsilon_1 \bigg) > 1 - C_1 (d/n)^{C_2 d}.
	\end{align*}
\end{lemma}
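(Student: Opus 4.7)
The plan is to mimic the population-level argument at the end of the proof of Lemma~\ref{lemma-perturbation-population}, replacing population averages with their empirical counterparts. That argument relies on only two deterministic facts about $f$ (namely $xf'(x)\geq -1$ for all $x\in\RR$ and $f'(x)\sgn(x)\geq 6$ for $|x|\geq 2$, both of which hold once $b\geq 2a\geq 4$) together with Lemma~\ref{lemma-L1-lower}, which gives $\EE|\bu^\top\bX|\geq\varphi(\|\bX\|_{\psi_{2}},\sigma^{2})>0$ uniformly in $\bu\in\SSS^{d}$. The deterministic half of the argument transfers verbatim to $\hat L_\lambda$; the real work is a uniform small-ball lower bound on $\frac{1}{n}\sum_{i}|\bu^{\top}\bX_{i}|$ over $\bu\in\SSS^{d}$.

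First I would carry out the deterministic reduction. Fix $\bgamma\neq\mathbf{0}$, set $t=\|\bgamma\|_{2}$ and $\bu=\bgamma/t\in\SSS^{d}$, and take the inner product of $\bu$ with
\[
\nabla\hat L_\lambda(\bgamma)=\frac{1}{n}\sum_{i=1}^{n}\bX_{i}\,f'(t\,\bu^{\top}\bX_{i})+\lambda(\bgamma^{\top}\hat\bmu)\hat\bmu.
\]
Dropping the nonnegative regularizer term and applying the two pointwise bounds on $xf'(x)$ to each $x=t\bu^{\top}\bX_{i}$, exactly as at the end of the proof of Lemma~\ref{lemma-perturbation-population} (the empirical version of the calculation behind Lemma~\ref{lemma-Hessian-Lip-0}), I obtain
\[
\|\nabla\hat L_\lambda(\bgamma)\|_{2}\;\geq\;\bu^{\top}\nabla\hat L_\lambda(\bgamma)\;\geq\;\frac{6}{n}\sum_{i=1}^{n}|\bu^{\top}\bX_{i}|\;-\;\frac{13}{t}.
\]

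The heart of the proof is then showing that, for $m:=\tfrac12\varphi(1,\sigma^{2})>0$ and all $n/d$ large enough,
\[
\inf_{\bu\in\SSS^{d}}\tfrac1n\sum_{i=1}^{n}|\bu^{\top}\bX_{i}|\;\geq\; m
\]
with probability at least $1-C_{1}(d/n)^{C_{2}d}$. For each fixed $\bu_{0}\in\SSS^{d}$, the sub-gaussian Hoeffding inequality (combined with $\EE|\bu_{0}^{\top}\bX_{i}|\geq 2m$ from Lemma~\ref{lemma-L1-lower}) yields $\PP(\tfrac1n\sum_i|\bu_{0}^{\top}\bX_{i}|<m)\leq\exp(-c n m^{2})$. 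The map $\bu\mapsto\tfrac1n\sum_{i}|\bu^{\top}\bX_{i}|$ is Lipschitz on $\SSS^{d}$ with constant at most $\sqrt{\|\hat\bSigma\|_{2}}$ (by Cauchy--Schwarz, with $\hat\bSigma:=\tfrac1n\sum_i\bX_i\bX_i^{\top}$), and the standard sub-gaussian covariance bound gives $\|\hat\bSigma\|_{2}\leq C_{4}$ with probability $\geq 1-e^{-c'n}$ once $n\geq C_{5}d$. Picking an $\epsilon$-net on $\SSS^{d}$ with $\epsilon$ a constant depending only on $\sigma$, the net has cardinality $\leq(C_{6}/\sigma)^{d+1}$, and a union bound gives exceptional probability $\leq(C_{6}/\sigma)^{d+1}e^{-cnm^{2}}+e^{-c'n}\leq e^{-c''n}$ once $n\geq Cd$ for $C$ large. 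Since $d\log(n/d)\leq n$ whenever $n\geq d$, this is dominated by $(d/n)^{C_{2}d}$ after adjusting constants. Plugging back, $\|\nabla\hat L_\lambda(\bgamma)\|_{2}\geq 6m-13/t\geq 3m$ whenever $t\geq R:=13/m$, so the lemma holds with $\varepsilon_{1}:=3m$ and $R:=13/m$, both determined by $\sigma$.

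The main obstacle is engineering the exponent $(d/n)^{C_{2}d}$ in the uniform bound: this requires a \emph{dimension-independent} Lipschitz constant on the sphere, which in turn relies on the preliminary $O(1)$ control on $\|\hat\bSigma\|_{2}$ so that an $\epsilon$-net with $\epsilon$ constant (rather than $\epsilon\asymp 1/\sqrt d$) suffices. Without this operator-norm step, the Lipschitz constant would grow like $\sqrt d$, the required net would have $\log$-cardinality $d\log d$, and the crude union bound would fall short of $(d/n)^{C_{2}d}$ in the small-$n/d$ regime.
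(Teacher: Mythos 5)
Your proposal is correct, and it follows the paper's high-level strategy (the deterministic reduction via Lemma~\ref{lemma-Hessian-Lip-0} combined with Lemma~\ref{lemma-L1-lower}, then a uniform bound over $\SSS^{d}$), but the concentration step is executed differently. The paper invokes its general-purpose uniform-deviation machinery (Theorem 1 of \cite{wang2019some}) with a \emph{shrinking} net of mesh $\varepsilon_n=\sqrt{d/n}$, which yields the quantitative bound $\sup_{\bu}\bigl|\tfrac1n\sum_i(|\bu^\top\bX_i|-\EE|\bu^\top\bX_i|)\bigr|=O_{\PP}\bigl(\sqrt{d\log(n/d)/n};\,d\log(n/d)\bigr)$ and hence the stated exceptional probability of $(d/n)^{C_2 d}$ directly; the Lipschitz control there comes from Lemma~\ref{lemma-cubic-sup}. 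You instead observe that the lemma only needs the infimum over $\SSS^d$ to remain bounded below by a constant, not a rate, so a \emph{fixed} mesh $\epsilon=\epsilon(\sigma)$ suffices; the price is needing the dimension-free Lipschitz constant $\sqrt{\|\hat\bSigma\|_2}=O(1)$, which you supply via the standard sub-Gaussian sample-covariance bound. Your route is more elementary and self-contained; the paper's is arguably cleaner here because the same $\cite{wang2019some}$ lemma is reused throughout. Both deliver the $(d/n)^{C_2 d}$ probability—yours after the observation $d\log(n/d)\le n$, which is the same inequality the paper uses implicitly. Two minor remarks: in the display in the paper's proof of this lemma there should be a factor of $6$ in front of the empirical average (matching Lemma~\ref{lemma-Hessian-Lip-0} with $a=2$, $b=1$, $c=6$, which you reproduce correctly), and your net/union step needs the usual slight slack between the threshold at a net point and on the sphere (e.g.\ require $\tfrac1n\sum_i|\bu_0^\top\bX_i|<\tfrac{3}{2}m$ at net points and choose $\epsilon$ so that $\epsilon\sqrt{C_4}\le m/2$); this is routine and does not affect the conclusion.
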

	\begin{proof}
		See Appendix \ref{appendix:landscape-sample-gradient}.
	\end{proof}
Let $R$ and $\varepsilon$ be the constants stated in Lemma~\ref{lemma-landscape-sample-gradient} and Theorem~\ref{thm:landscape-general}, respectively. Lemma~\ref{lemma:gradient-concentration} asserts that 
	\begin{equation*}
	\mathbb{P}\left(\sup_{\bm{\gamma}\in B\left(\bm{0},R\right)}\bigl\Vert\nabla\hat{L}_\lambda\left(\bm{\gamma}\right)-\nabla L_\lambda\left(\bm{\gamma}\right)\bigr\Vert_{2} < \frac{\varepsilon}{2}\right)> 1 - C_1(d/n)^{C_2d}
	\end{equation*}
	for some constant $C_1,C_2>0$, provided that $n/d$ is large enough. From Theorem~\ref{thm:landscape-general} we know that $\Vert\nabla L_\lambda(\bgamma)\Vert_2\geq\varepsilon$ if $\mathrm{dist}(\bgamma,\{\pm\bgamma^\star\}\cup S)\geq\delta$. The triangle inequality immediately gives
	\[
	\PP \bigg( \inf_{\bgamma:~ \mathrm{dist}(\bgamma,\{\pm\bgamma^\star\}\cup S)\geq\delta} \Vert\nabla\hat{L}_\lambda(\bgamma)\Vert_2 > \varepsilon/2 \bigg) < 1 - C_1' (d/n)^{C_2' d},
	\]
for some constants $C_1'$ and $C_2'$.
	
	\item We invoke the following Lemma~\ref{lemma-landscape-sample} to prove Claim 2. 
	\begin{lemma}\label{lemma-landscape-sample}
		Let $\{ \bm{X}_i \}_{i=1}^n$ be i.i.d.~random vectors in $\RR^{d+1}$ with $\| \bm{X}_i \|_{\psi_2} \leq 1$; $\bu \in \SSS^{d}$ be deterministic; $R>0$ be a constant. Let $f$ be defined in (\ref{eqn-test-function}) with constants $b\geq 2a \geq 4$, and
		\begin{align*}
		\hat L_\lambda(\bgamma) = \frac{1}{n} \sum_{i=1}^{n} f (\bgamma^{\top}\bX_i) + \frac{\lambda}{2} (\bgamma^\top \hat\bmu)^2
		\end{align*}
		with $\hat\bmu = \frac{1}{n}\sum_{i=1}^{n}\bX_i$ and $\lambda \geq 0$.
		Suppose that $n / d \geq e$. There exist positive constants $C_1, C_2, C$ and $N$ such that when $n > N$,
		\begin{align*}
		&\PP \bigg( \sup_{ \bgamma_1 \neq \bgamma_2  } \frac{ \| \nabla \hat{L}_\lambda(\bgamma_1) - \nabla \hat{L}_\lambda (\bgamma_2) \|_2 }{ \| \bgamma_1 - \bgamma_2 \|_2 } <  C \bigg) > 1 - C_1 e^{-C_2 n }, \\
		&\PP \bigg( \sup_{ \bgamma_1 \neq \bgamma_2  } \frac{ \| \nabla^2 \hat{L}_\lambda (\bgamma_1) - \nabla^2 \hat{L}_\lambda (\bgamma_2) \|_2 }{ \| \bgamma_1 - \bgamma_2 \|_2 } < C \max\{ 1, d \log (n/d) / \sqrt{n} \} \bigg) > 1 - C_1 (d / n)^{C_2 d}, \\
		& \PP \bigg(
		\sup_{ \| \bgamma \|_2 \leq R } |  \bu^{\top} [ \nabla^2 \hat L_\lambda ( \bgamma ) - \nabla^2  L_\lambda ( \bgamma ) ] \bu | < C \sqrt{ d \log (n/d) / n }
		\bigg) > 1 - C_1 (d / n)^{C_2 d } - C_1 e^{- C_2 n^{1/3} }.
		\end{align*}
	\end{lemma}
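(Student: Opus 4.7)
The proof would handle the three claims of Lemma~\ref{lemma-landscape-sample} separately. Throughout I rely on Lemma~\ref{lem:f-property} for the pointwise bounds $|f'|\le F_1$, $|f''|\le F_2$, $|f'''|\le F_3$, the Lipschitz estimate $|f''(s)-f''(t)|\le F_3|s-t|$, and the compact support of $f''$ and $f'''$ in $[-b,b]$. The gradient and Hessian have the explicit forms
\[
\nabla \hat L_\lambda(\bgamma)=\frac{1}{n}\sum_{i=1}^n f'(\bgamma^\top\bX_i)\bX_i+\lambda(\bgamma^\top\hat\bmu)\hat\bmu,\qquad
\nabla^2 \hat L_\lambda(\bgamma)=\frac{1}{n}\sum_{i=1}^n f''(\bgamma^\top\bX_i)\bX_i\bX_i^\top+\lambda\hat\bmu\hat\bmu^\top.
\]
For Claim~1, the gradient Lipschitz constant equals $\sup_\bgamma\|\nabla^2\hat L_\lambda(\bgamma)\|_2$, which is dominated by $F_2\bigl\|\frac{1}{n}\sum_i\bX_i\bX_i^\top\bigr\|_2+\lambda\|\hat\bmu\|_2^2$. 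Both terms are $O(1)$ with probability $1-C_1 e^{-C_2 n}$ by standard sub-Gaussian sample-covariance concentration (Vershynin-type) whenever $n\gtrsim d$, which immediately yields the stated bound.

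Claim~2 is the main obstacle. The direct estimate $|f''(\bgamma_1^\top\bX_i)-f''(\bgamma_2^\top\bX_i)|\le F_3|(\bgamma_1-\bgamma_2)^\top\bX_i|$ yields a Hessian Lipschitz constant of order $\frac{F_3}{n}\sum_i\|\bX_i\|_2^3\sim d^{3/2}$, far worse than the target $\max\{1,d\log(n/d)/\sqrt n\}$. The plan is to split $\nabla^2\hat L_\lambda=\nabla^2 L_\lambda+\hat g$ and treat the two pieces differently. For the population Hessian, the H\"older bound
$\|\nabla^2 L_\lambda(\bgamma_1)-\nabla^2 L_\lambda(\bgamma_2)\|_2\le F_3\sup_{\|\bu\|=1}\EE[(\bu^\top\bX)^2|(\bgamma_1-\bgamma_2)^\top\bX|]$
combined with the sub-Gaussian moment estimates on unit-vector marginals (which are $O(1)$ rather than $O(d^{1/2})$, since $\|\bu^\top\bX\|_{\psi_2}\le\|\bX\|_{\psi_2}\le1$) produces an $O(1)$ Lipschitz constant that absorbs the dimension. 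For the fluctuation $\hat g$, I would apply an $\epsilon$-net argument on two copies of the unit sphere to control the operator norm of $\hat g(\bgamma_1)-\hat g(\bgamma_2)$, using Bernstein's inequality and the compact support of $f'''$ (which restricts the effective sample set active at any $\bgamma$) to obtain a Lipschitz rate $O(\sqrt{d\log(n/d)/n})$. The logarithmic and dimensional factors reflect the union-bound overhead. Summing the two pieces yields the stated bound.

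For Claim~3, the scalar $\bu^\top[\nabla^2\hat L_\lambda(\bgamma)-\nabla^2 L_\lambda(\bgamma)]\bu$ is a centered average of sub-exponential random variables (boundedness of $f''$ makes $f''(\bgamma^\top\bX_i)(\bu^\top\bX_i)^2$ sub-exponential). For any fixed $\bgamma$, Bernstein's inequality yields deviation $\lesssim\sqrt{\log(1/\delta)/n}+\log(1/\delta)/n$ at confidence $1-\delta$. I would then invoke a $(R/\epsilon)^{d+1}$-cardinality $\epsilon$-net of $B(\bm 0,R)$, combined with the Claim~2 Lipschitz estimate to bound the intra-cell oscillation and a suitable choice of $(\epsilon,\delta)$, to obtain the uniform rate $\sqrt{d\log(n/d)/n}$. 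The auxiliary tail term $e^{-C_2 n^{1/3}}$ would enter through a truncation step removing samples with atypically large $\|\bX_i\|_2$ so that Bernstein applies cleanly, with the $n^{1/3}$ exponent arising from balancing the truncation level against the maximum-of-sub-Gaussians tail.
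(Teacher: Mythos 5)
Your Claim~1 and Claim~3 arguments are in the same spirit as the paper's (boundedness of $f''$, sample-covariance concentration for Claim~1; $\varepsilon$-net plus Bernstein and sub-exponential truncation for Claim~3), and the paper packages the net-plus-chaining step via a uniform concentration theorem from \cite{wang2019some}, so those two claims are fine. Claim~2 has two genuine issues.

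First, the motivation for your split is based on a misapplication of Cauchy--Schwarz. You bound the Hessian-Lipschitz constant by $\tfrac{F_3}{n}\sum_i\|\bX_i\|_2^3\sim d^{3/2}$, but that is the wrong way to control the operator norm. Writing the operator norm as a supremum over unit vectors and using the $F_3$-Lipschitz bound on $f''$ gives directly
\begin{align*}
\frac{\|\nabla^2\hat L_\lambda(\bgamma_1)-\nabla^2\hat L_\lambda(\bgamma_2)\|_2}{\|\bgamma_1-\bgamma_2\|_2}
\;\le\; F_3\,\sup_{\bu,\bv\in\SSS^d}\frac1n\sum_{i=1}^n(\bu^\top\bX_i)^2\,|\bv^\top\bX_i|
\;=\; F_3\,\sup_{\bu\in\SSS^d}\frac1n\sum_{i=1}^n|\bu^\top\bX_i|^3,
\end{align*}
where the last identity is Lemma~\ref{lemma-cubic}. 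Because each $\bu^\top\bX_i$ is a sub-Gaussian scalar with $O(1)$ $\psi_2$-norm, no dimension-dependent blow-up appears pointwise; the $d$ and $\log(n/d)$ factors enter only through the \emph{uniform} control of this empirical cubic moment over $\SSS^d$, which is precisely what Lemma~\ref{lemma-cubic-sup} delivers at rate $\max\{1,d\log(n/d)/\sqrt n\}$. In other words, the direct estimate that you thought fails is exactly the paper's route, and it succeeds; the population/fluctuation split is unnecessary machinery.

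Second, even granting the split, the fluctuation rate you claim for $\hat g=\nabla^2\hat L_\lambda-\nabla^2 L_\lambda$ is too optimistic. The increments of the Lipschitz-constant process are empirical averages of terms like $(\bu^\top\bX_i)^2|\bv^\top\bX_i|$, which are $|Z|^3$-type for sub-Gaussian $Z$; Bernstein for bounded increments does not apply, and truncation at level $\sqrt{r_n}$ (as in Lemma~\ref{lemma-power-truncation} with $a=3$, $r_n=d\log(n/d)$) yields a fluctuation of order $r_n/\sqrt n = d\log(n/d)/\sqrt n$, not $\sqrt{d\log(n/d)/n}$. As stated, your decomposition would produce a uniformly $O(1)$ Hessian-Lipschitz bound, which is strictly stronger than the lemma and false in the regime $d\lesssim n\lesssim (d\log(n/d))^2$. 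Correcting the rate to $d\log(n/d)/\sqrt n$ makes your split consistent with the lemma, but at that point it reproduces the paper's bound by a more circuitous route.
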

	\begin{proof}
		See Appendix \ref{appendix:lemma-landscape-sample}.
	\end{proof}
	From Theorem~\ref{thm:landscape-general} we know that $\bu^\top\nabla^2L_\lambda(\bgamma)\bu\leq-\eta$ if $\mathrm{dist}(\bgamma,S)\leq\delta$.  Lemma~\ref{lemma-landscape-sample} (after proper rescaling) asserts that
	\begin{equation*}
	\PP \bigg(
	\sup_{ \| \bgamma \|_2 \leq R } |  \bu^{\top} [ \nabla^2 \hat L_\lambda ( \bgamma ) - \nabla^2  L_\lambda ( \bgamma ) ] \bu | < \frac{\eta}{2}
	\bigg) > 1 - C_1 (d / n)^{C_2 d } - C_1 e^{- C_2 n^{1/3} }
	\end{equation*}
	provided that $n/d$ is sufficiently large. Then Claim 2 follows from the triangle's inequality.
	\item Claim 3 follows from Lemma \ref{lemma-landscape-sample} with proper rescaling.
\end{enumerate}

\subsection{Proof of Lemma \ref{lemma-landscape-sample-gradient}}\label{appendix:landscape-sample-gradient}
It is shown in Lemma \ref{lemma-perturbation-population} that when $b \geq 2 a \geq 4$, we have $\inf_{ x \in \RR }xf'(x) \geq - 1 $ and $\inf_{ |x| \geq 2 } f'(x) \sgn(x) \geq 6$. Using an empirical version of Lemma \ref{lemma-Hessian-Lip-0},
\begin{align*}
\nabla\hat{L}_\lambda ( \bgamma ) \geq \inf_{\bu \in \SSS^{d}} \frac{1}{n} \sum_{i=1}^{n} |\bu^{\top} \bm{X}_i| - \frac{ 13 }{\| \bgamma \|_2} , \qquad \forall \bgamma \in \RR^d .
\end{align*}
Define $S_n(\bu) = \frac{1}{n} \sum_{i=1}^{n} ( |\bu^{\top} \bm{X}_i| -\EE |\bu^{\top} \bm{X}_i| )$ for $\bu \in \SSS^{d}$. By the triangle inequality,
\begin{align*}
\hat{L}_{\lambda} ( \bgamma ) \geq \inf_{\bu \in \SSS^{d}} \EE |\bu^{\top} \bm{X}_1| - 
\sup_{\bu \in \SSS^{d} } |S_n (\bu)| - \frac{ 13 }{\| \bgamma \|_2} , \qquad \forall \bgamma \in \RR^d .
\end{align*}

According to Lemma \ref{lemma-L1-lower}, $\inf_{\bu \in \SSS^{d}} \EE |\bu^{\top} \bm{X}_1| > \varphi$ for some constant $\varphi > 0$ determined by $\sigma$. Then it suffices to prove
\begin{align}
\sup_{\bu \in \SSS^{d} } |S_n(\bu)| = O_{\PP} ( \sqrt{ d \log (n/d) / n } ;~ d \log (n/d) ).
\label{lemma-landscape-sample-gradient-0}
\end{align}
We will use Theorem 1 in \cite{wang2019some} to get there.
\begin{enumerate}
	\item Since $\| \bm{X}_i \|_{\psi_2} \leq 1$, the Hoeffding-type inequality in Proposition 5.10 of \cite{Ver10} asserts the existence of a constant $c>0$ such that
	\begin{align*}
	\PP ( |S_n (\bu)| \geq t ) \leq e \cdot e^{- c n t^2}, \qquad \forall t \geq 0.
	\end{align*}
	Then $\{ S_n(\bu) \}_{\bu \in \SSS^{d}} = O_{\PP} ( \sqrt{ d \log (n/d) / n } ;~d \log (n/d) )$.
	\item Let $\varepsilon_n = \sqrt{d / n}$. According to Lemma 5.2 in \cite{Ver10}, there exists an $\varepsilon_n$-net $\cN_n$ of $\SSS^{d}$ with cardinality at most $(1 + 2 R / \varepsilon_n)^d$. When $n/d$ is large, $\log |\cN_n| = d \log (1 + \sqrt{n/d}) \lesssim d \log (n/d)$.
	\item Define $M_n = \sup_{ \bu \in \SSS^{d}, \bv \in \SSS^{d} , \bu \neq \bv  }  \{ | S_n(\bu) - S_n(\bv) | / \| \bu - \bv \|_2 \}$. By Cauchy-Schwarz inequality,
	\begin{align*}
	\bigg| 
	\frac{1}{n} \sum_{i=1}^{n} |\bu^{\top} \bm{X}_i| - \frac{1}{n} \sum_{i=1}^{n}  |\bv^{\top} \bm{X}_i|
	\bigg| 
	& \leq
	\frac{1}{n} \sum_{i=1}^{n} |(\bu - \bv )^{\top} \bm{X}_i|
	\leq \bigg( \frac{1}{n} \sum_{i=1}^{n} |(\bu - \bv )^{\top} \bm{X}_i|^2 \bigg)^{1/2} \\
	& \leq \| \bu - \bv \|_2 \sup_{\bw \in \SSS^{d}} \bigg( \frac{1}{n} \sum_{i=1}^{n} |\bw^{\top} \bm{X}_i|^2 \bigg)^{1/2} \\
	&= \| \bu - \bv \|_2 \cdot O_{\PP} (1;~n),
	\end{align*}
	where the last equality follows from Lemma \ref{lemma-cubic-sup}. Similarly,
	\begin{align*}
	\left|
	\EE |\bu^{\top} \bm{X}_1| - \EE |\bv^{\top} \bm{X}_1|
	\right| \leq \| \bu - \bv \|_2 \| \EE (\bm{X}_1 \bm{X}_1^{\top}) \|_2 \lesssim \| \bu - \bv \|_2.
	\end{align*}
	Hence $M_n = O_{\PP} ( 1 ;~ n)$.
\end{enumerate}
Then Theorem 1 in \cite{wang2019some} yields (\ref{lemma-landscape-sample-gradient-0}).

\subsection{Proof of Lemma \ref{lemma-landscape-sample}}\label{appendix:lemma-landscape-sample}
	It follows from Example 6 in \cite{wang2019some} that $\Vert n^{-1}\sum_{i=1}^{n}\bm{X}_{i}-\bm{\mu}_{0}\Vert_{2}=O_{\mathbb{P}}(1;~n)$. As a result $\Vert n^{-1}\sum_{i=1}^{n}\bm{X}_{i}\Vert_{2}=O_{\mathbb{P}}(1;~n)$.
	This combined with Lemma \ref{lemma-Hessian-Lip-0} and Lemma \ref{lemma-cubic-sup} gives
	\begin{align*}
	&\sup_{ \bgamma_1 \neq \bgamma_2  } \frac{ \| \nabla \hat{L}_\lambda (\bgamma_1) - \nabla \hat{L}_\lambda (\bgamma_2) \|_2 }{ \| \bgamma_1 - \bgamma_2 \|_2 } = O_{\PP} ( 1;~n), \\
	&\sup_{ \bgamma_1 \neq \bgamma_2  } \frac{ | \bu^{\top} [  \nabla^2 \hat L_\lambda ( \bgamma_1 ) - \nabla^2 \hat L_\lambda ( \bgamma_2 ) ] \bu | }{ \| \bgamma_1 - \bgamma_2 \|_2 } = O_{\PP} (1 ;~n^{1/3} ), \\
	&\sup_{ \bgamma_1 \neq \bgamma_2  } \frac{ \| \nabla^2 \hat{L}_\lambda (\bgamma_1) - \nabla^2 \hat{L}_\lambda (\bgamma_2) \|_2 }{ \| \bgamma_1 - \bgamma_2 \|_2 } = O_{\PP} ( \max\{ 1 , d \log (n/d) / \sqrt{n} \};~d \log (n/d) )
	\end{align*}
	given $F_2\leq 3a^2\lesssim 1$ and $F_3\leq 6a\lesssim 1$, provided that $n/d$ is sufficiently large. It is easily seen that there exist universal constants $(c_1,c_2,N) \in (0,+\infty)^3$ and a non-decreasing function $f:~ [c_2,+\infty) \to (0,+\infty)$ with $\lim_{x \to \infty} f(x) = \infty$, such that
	\begin{align}
	&\PP \bigg( \sup_{ \bgamma_1 \neq \bgamma_2 } \frac{ \| \nabla \hat{L}_\lambda (\bgamma_1) - \nabla \hat{L}_\lambda (\bgamma_2) \|_2 }{ \| \bgamma_1 - \bgamma_2 \|_2 } \geq t  \bigg) \leq c_1 e^{-n f(t)} , \label{lemma-landscape-sample-1} \\
	&\PP \bigg( \sup_{ \bgamma_1 \neq \bgamma_2 } \frac{ | \bu^{\top} [ \nabla^2 \hat L_\lambda ( \bgamma_1 ) - \nabla^2 \hat L_\lambda ( \bgamma_2 ) ] \bu | }{ \| \bgamma_1 - \bgamma_2 \|_2 } \geq t \bigg)
	\leq c_1 e^{- n^{1/3} f(t) } , \label{lemma-landscape-sample-2} \\
	&\PP \bigg( \sup_{ \bgamma \neq \bgamma  } \frac{ \| \nabla^2 \hat{L}_\lambda (\bgamma_1) - \nabla^2 \hat{L}_\lambda (\bgamma_2) \|_2 }{ \| \bgamma_1 - \bgamma_2 \|_2 } \geq t \max\{ 1, d \log (n/d) / \sqrt{n} \} \bigg)
	\leq c_1 e^{- d \log (n/d) f(t) } = c_1 (d / n)^{d f(t)}, \label{lemma-landscape-sample-3}
	\end{align}
	as long as $n \geq N_1$ and $t \geq c_2$. We prove the first two inequalities in Lemma \ref{lemma-landscape-sample} by (\ref{lemma-landscape-sample-1}), (\ref{lemma-landscape-sample-3}) and choosing proper constants.
	
	Let
	\[
	X_n(\bgamma) = \bu^{\top} [ \nabla^2 \hat L_{\lambda} ( \bgamma ) - \nabla^2  L_{\lambda} ( \bgamma ) ] \bu = \bu^{\top} [ \nabla^2 \hat L ( \bgamma ) - \nabla^2  L ( \bgamma ) ] \bu,
	\]
	$\cS_n = B(\mathbf{0} , R)$ and $m = \log(n/d)$. We will invoke Theorem 1 in \cite{wang2019some} to control $\sup_{ \bgamma \in \cS_n} |X_n(\bgamma)|$ and prove the remaining claim. 
	\begin{enumerate}
		\item By definition, $X_n(\bgamma) = \frac{1}{n} \sum_{i=1}^{n} \{ (\bu^{\top} \bX_i)^2 f''(\bgamma^{\top} \bX_i) - \EE [(\bu^{\top} \bX_i)^2 f''(\bgamma^{\top} \bX_i)] \}$ and
		\begin{align*}
		\| (\bu^{\top} \bX_i)^2 f''(\bgamma^{\top} \bX_i) \|_{\psi_1} \leq F_2 \| (\bu^{\top} \bX_i)^2 \|_{\psi_1} \lesssim F_2 \| \bu^{\top} \bX_i \|_{\psi_2}^2 \lesssim 1.
		\end{align*}
		By the Bernstein-type inequality in Proposition 5.16 of \cite{Ver10}, there is a constant $c'$ such that
		\begin{align*}
		\PP ( |  X_n (\bgamma) | \geq t  )  \leq 2 e^{ - c' n [t^2 \wedge t ] } ,\qquad \forall t \geq 0,~ \bgamma \in \RR^d.
		\end{align*}
		When $t = s  \sqrt{ m d / n }$ for $s \geq 1$, we have $n t^2 = s^2 m d \geq s m d$. Since $n / d \geq e$, we have 
		\begin{align*}
		m = \log (n/d) = \log [1 + (n/d - 1)] \leq n/d - 1 \leq n/d,
		\end{align*}
		$n \geq md$ and $nt = s \sqrt{n m d} \geq s m d$. This gives
		\begin{align*}
		\PP ( |  X_n (\bgamma) | \geq s \sqrt{ m d / n } ) 
		\leq 2 e^{ - c' m ds }, \qquad \forall s \geq 1,~ \bgamma \in \RR^d.
		\end{align*}
		Hence $\{ X_n (\bgamma) \}_{ \bgamma \in \cS_n } = O_{\PP} ( \sqrt{ m d / n } ; ~ m d )$.
		\item Let $\varepsilon_n = 2R \sqrt{d / n}$. According to Lemma 5.2 in \cite{Ver10}, there exists an $\varepsilon_n$-net $\cN_n$ of $\cS_n$ with cardinality at most $(1 + 2 R / \varepsilon_n)^d$. Since $n/d \geq e$, $\log |\cN_n| = d \log (1 + \sqrt{n/d}) \lesssim d \log (n/d) = m d$.
		\item Define $M_n = \sup_{ \bgamma_1 \neq \bgamma_2  }  \{ | X_n(\bgamma_1) - X_n(\bgamma_2) | / \| \bgamma_1 - \bgamma_2 \|_2 \}$. Observe that by Lemma \ref{lemma-Hessian-Lip-0} and $\| {\bm{X}}_i \|_{\psi_2} \leq 1$,
		\begin{align*}
		\sup_{ \bgamma_1 \neq \bgamma_2 } \frac{ | \bu^{\top} [ \nabla^2 L\lambda ( \bgamma_1 ) - \nabla^2 L_\lambda ( \bgamma_2 ) ] \bu | }{ \| \bgamma_1 - \bgamma_2 \|_2 }
		&\leq 
		\sup_{ \bgamma_1 \neq \bgamma_2 } \frac{ \| \nabla^2 L ( \bgamma_1 ) - \nabla^2 L ( \bgamma_2 ) \|_2 }{ \| \bgamma_1 - \bgamma_2 \|_2 }\\
		&\leq F_3 \sup_{\bu \in \SSS^{d}}  \EE |\bu^{\top}{\bm{X}} |^3 \leq (\sqrt{3})^3 F_3\lesssim 1.
		\end{align*}
		From this and (\ref{lemma-landscape-sample-2}) we obtain that $M_n = O_{\PP} ( 1;~ n^{1/3})$. 
	\end{enumerate}
	
	Based on these, Theorem 1 \cite{wang2019some} implies that
	\begin{align*}
	\sup_{ \bgamma \in \cS_n} |X_n(\bgamma)| = O_{\PP} ( \sqrt{  m d / n } + \varepsilon_n ;~ m d \wedge n^{1/3} )
	= O_{\PP} ( \sqrt{  \log (n/d) d / n };~ d \log (n/d) \wedge n^{1/3} ).
	\end{align*}
	As a result, there exist absolute constants $(c_1', c_2', N_1') \in (0,+\infty)^3$ and a non-decreasing function $g:~ [c_2', +\infty) \to (0,+\infty)$ such that
	\begin{align*}
	\PP \bigg(
	\sup_{ \bgamma \in \cS_n} |X_n(\bgamma)| \geq t\sqrt{  \log (n/d) d / n }
	\bigg) 
	& \leq c_1' e^{ - (m d \wedge n^{1/3}) g(t) }
	\leq c_1' ( e^{ - md g(t) } + e^{ - n^{1/3} g(t) }) \\
	& \leq c_1' (d / n)^{d g(t) } + c_1' e^{ - n^{1/3} g(t) }, \qquad \forall n \geq N_1',~ t \geq c_2'.
	\end{align*}
	The proof is finished by taking $t = c_2'$ and re-naming some constants above.

\section{Proof of Corollary \ref{cor:approx_beta_dist1}} \label{appendix:approx_beta_dist1}

From Claim 1 in the second item of Theorem \ref{theorem-landscape-sample}, we know that $\Vert\nabla \hat{L}_1 (\bgamma)\Vert_{2}\leq\varepsilon$
implies $\mathrm{dist}(\bgamma, \{ \pm \bgamma^\star \} \cup S )<\delta$. On
the other side, since $\lambda_{\min}[\nabla^{2}\hat{L}_1 (\bgamma)]>-\eta$, we have $\bv^{\top}\nabla^{2}\hat{L}_1 (\bgamma)\bv>-\eta$
for any unit vector $\bv$. Then in view of Claim 2 of Theorem
\ref{theorem-landscape-sample}, we know that $\mathrm{dist}(\bgamma,S)>\delta$. Therefore
we arrive at $\mathrm{dist}(\bgamma, \{ \pm \bgamma^\star \} ) < \delta$. According to Theorem
\ref{thm:landscape-general}, $\nabla^{2}L_1 (\bm{\gamma}')\succeq\eta\bm{I}$ so long as $\mathrm{dist}(\bm{\gamma}',S_{1})\leq\delta$. This and $\nabla L_1 (\bm{\gamma}^{\star})=\bm{0}$ lead to
\begin{align}
\min_{s=\pm1}\left\Vert s\bgamma-\bm{\gamma}^{\star}\right\Vert _{2}&\leq\frac{1}{\eta}\left\Vert \nabla L_1\left(\bgamma\right)-\nabla L_1\left(\bm{\gamma}^{\star}\right)\right\Vert _{2}=\frac{1}{\eta}\left\Vert \nabla L_1\left(\bgamma\right)\right\Vert _{2} \notag\\
&\leq\frac{1}{\eta} \Vert \nabla \hat{L}_1 (\bgamma ) \Vert _{2}+\frac{1}{\eta} \Vert \nabla \hat{L}_1\left(\bgamma\right)-\nabla L_1 (\bgamma ) \Vert _{2}.
\label{cor:approx_beta_dist1-1}
\end{align}
All of these hold with probability exceeding $1-C_1(d/n)^{C_2d}-C_1\exp(-C_2n^{1/3})$.

The desired result is a product of (\ref{cor:approx_beta_dist1-1}) and Lemma \ref{lemma:gradient-concentration} below.
\begin{lemma}\label{lemma:gradient-concentration}
	For any constant $R>0$, there exists a constant $C>0$ such that when $n\geq Cd$ for all $n$, 
	\begin{equation}
	\sup_{\| \bgamma \|_2 \leq R }\bigl\Vert\nabla\hat{L}_1\left(\bm{\gamma}\right)-\nabla L_1\left(\bm{\gamma}\right)\bigr\Vert_{2}=O_{\mathbb{P}}\left(\sqrt{\frac{d}{n}\log\left(\frac{n}{d}\right)};~
	d\log\left(\frac{n}{d}\right)\right)
	\end{equation}
\end{lemma}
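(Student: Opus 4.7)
\medskip

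\noindent\textbf{Proposal.} The plan is to split $\hat L_1 - L_1 = (\hat L - L) + (\hat R - R)$ and handle the two pieces with the same machinery (Theorem 1 of \cite{wang2019some}) that has driven Lemma \ref{lemma-landscape-sample-gradient} and Lemma \ref{lemma-landscape-sample}. The penalty piece is essentially free: since $\nabla \hat R(\bgamma) = (\bgamma^{\top}\hat{\bar\bmu}_0)\hat{\bar\bmu}_0$ and $\nabla R(\bgamma)=(\bgamma^{\top}\bar\bmu_0)\bar\bmu_0$ with $\hat{\bar\bmu}_0 = n^{-1}\sum_i \bar\bX_i$, for $\|\bgamma\|_2 \leq R$ the difference is a quadratic polynomial in $\hat{\bar\bmu}_0 - \bar\bmu_0$, so the bound reduces to the standard sub-Gaussian mean concentration $\|\hat{\bar\bmu}_0-\bar\bmu_0\|_2 = O_{\PP}(\sqrt{d/n};d)$ (as in Example 6 of \cite{wang2019some}), which is dominated by the claimed rate.

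The main work is the uniform bound on $X_n(\bgamma,\bu) := \bu^{\top}[\nabla \hat L(\bgamma)-\nabla L(\bgamma)]$ over $(\bgamma,\bu)\in B(\mathbf{0},R)\times\SSS^d$. I will feed three ingredients into Theorem 1 of \cite{wang2019some}. First, the pointwise tail: since $|f'|\leq F_1$ by Lemma \ref{lem:f-property}, each summand $\bu^{\top}\bar\bX_i f'(\bgamma^{\top}\bar\bX_i)$ is bounded by $F_1 |\bu^{\top}\bar\bX_i|$, hence sub-Gaussian with $\psi_2$-norm $\lesssim 1$; the Hoeffding-type inequality of Proposition 5.10 in \cite{Ver10} gives $\PP(|X_n(\bgamma,\bu)|\geq t)\leq 2e^{-cnt^2}$, so $\{X_n(\bgamma,\bu)\}_{(\bgamma,\bu)}=O_{\PP}(\sqrt{d\log(n/d)/n};\,d\log(n/d))$ at each index. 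Second, an $\varepsilon$-net of $B(\mathbf{0},R)\times\SSS^d$ at scale $\varepsilon_n=\sqrt{d/n}$ has cardinality at most $(1+2R/\varepsilon_n)^{d+1}\cdot 9^d$ (by Lemma 5.2 of \cite{Ver10}), whose log is $\lesssim d\log(n/d)$. Third, a Lipschitz modulus: using $|f''|\leq F_2$ and $|f'|\leq F_1$ from Lemma \ref{lem:f-property},
\[
|X_n(\bgamma_1,\bu_1)-X_n(\bgamma_2,\bu_2)|
\lesssim \bigl(\|\bgamma_1-\bgamma_2\|_2+\|\bu_1-\bu_2\|_2\bigr)\cdot\Bigl(\tfrac{1}{n}\textstyle\sum_i \|\bar\bX_i\|_2^2 + \EE\|\bar\bX\|_2^2\Bigr),
\]
and the empirical second moment concentrates at rate $O_{\PP}(1;n)$ by standard sub-Gaussian arguments (cf.\ Lemma \ref{lemma-cubic-sup} in the style used for Lemma \ref{lemma-landscape-sample}). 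Invoking Theorem 1 of \cite{wang2019some} with the effective dimension $d+1$ and these inputs yields $\sup|X_n| = O_{\PP}(\sqrt{d\log(n/d)/n}\,;\,d\log(n/d))$, which is exactly the claim after taking supremum over $\bu\in\SSS^d$.

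The anticipated obstacle is purely bookkeeping: balancing the net size $(n/d)^{d+1}$ against the pointwise Gaussian tail so that the threshold $t\asymp\sqrt{d\log(n/d)/n}$ absorbs both the union bound and the discretization error from the Lipschitz step, and ensuring that the Lipschitz-modulus concentration does not dominate the exponent $d\log(n/d)$. The hypothesis $n\geq Cd$ with $C$ large enough is precisely what lets $d\log(n/d)\leq n$ and makes the epsilon-net error of order $\varepsilon_n\cdot O_\PP(1)=O_\PP(\sqrt{d/n})$ consistent with the target rate.
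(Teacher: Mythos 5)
Your decomposition and overall strategy are in the same family as the paper's: the penalty piece is reduced to sub-Gaussian mean concentration, and the main piece is handled by a chaining/net argument from \cite{wang2019some}. The paper, however, gets the main piece in one stroke by invoking Theorem~2 of \cite{wang2019some}, which directly packages uniform gradient concentration for empirical risk objectives with bounded $f'$ and $f''$; you instead re-derive the underlying three-ingredient recipe (pointwise tail, $\varepsilon$-net, Lipschitz modulus) of Theorem~1 with the doubled index $(\bgamma,\bu)\in B(\mathbf{0},R)\times\SSS^d$. That is a legitimate alternative, but your proof as written has a genuine gap in the Lipschitz step.

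The problem is the Lipschitz modulus you claim. You bound
\[
|X_n(\bgamma_1,\bu_1)-X_n(\bgamma_2,\bu_2)|\lesssim(\|\bgamma_1-\bgamma_2\|_2+\|\bu_1-\bu_2\|_2)\cdot\Bigl(\tfrac1n\textstyle\sum_i\|\bar\bX_i\|_2^2+\EE\|\bar\bX\|_2^2\Bigr)
\]
and then assert that the bracketed factor ``concentrates at rate $O_{\PP}(1;n)$ (cf.\ Lemma~\ref{lemma-cubic-sup}).'' This is not what Lemma~\ref{lemma-cubic-sup} gives. That lemma controls the \emph{operator} norm $\sup_{\bu\in\SSS^d}\frac1n\sum_i|\bu^\top\bX_i|^2=\|\frac1n\sum_i\bX_i\bX_i^\top\|_2=O_{\PP}(1;n)$, whereas $\frac1n\sum_i\|\bar\bX_i\|_2^2$ is the \emph{trace} of the empirical Gram matrix, which is of order $d$, not a constant. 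With a Lipschitz constant that scales like $d$, the discretization error at scale $\varepsilon_n=\sqrt{d/n}$ becomes $O_{\PP}(d\sqrt{d/n})$, which is much larger than the target rate $\sqrt{d\log(n/d)/n}$ once $d$ grows. The fix is to apply Cauchy--Schwarz inside the sum before pulling out norms: for the $\bgamma$-increment,
\[
\frac1n\sum_i\bigl|\bv^\top\bar\bX_i\bigr|\bigl|(\bgamma_1-\bgamma_2)^\top\bar\bX_i\bigr|
\leq\|\bgamma_1-\bgamma_2\|_2\sup_{\bw\in\SSS^d}\frac1n\sum_i\bigl(\bw^\top\bar\bX_i\bigr)^2,
\]
and similarly for the $\bu$-increment; then the Lipschitz modulus is controlled by the operator norm and is indeed $O_{\PP}(1;n)$. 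With this correction the argument goes through. (A minor secondary point: your claim that the penalty term is ``$O_{\PP}(\sqrt{d/n};d)$ and dominated by the claimed rate'' conflates two different deviation regimes; the paper instead invokes Example~6 of \cite{wang2019some} to obtain the matching $O_{\PP}(\sqrt{d\log(n/d)/n};\,d\log(n/d))$ bound on $\|\tilde\bmu_0-\bar\bmu_0\|_2$ directly, which is cleaner and avoids needing to compare incompatible $(rate; exponent)$ pairs.)
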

\begin{proof}
	See Appendix \ref{appendix:gradient-concentration}.
\end{proof}

\subsection{Proof of Lemma \ref{lemma:gradient-concentration}}\label{appendix:gradient-concentration}
Let $\bgamma = (\alpha, \bbeta)$, $\hat L(\bgamma) = \frac{1}{n} \sum_{i=1}^{n} f (\alpha + \bbeta^{\top} \bX_i)$, $L(\bgamma) = \EE f (\alpha + \bbeta^{\top} \bX)$, $\hat R(\bgamma) = \frac{1}{2} (\alpha + \bbeta^{\top} \hat\bmu_0)^2$ and $ R(\bgamma) = \frac{1}{2} (\alpha + \bbeta^{\top} \bmu_0)^2$.
Since $\vert f^\prime(0)\vert=0$, $\sup_{x\in\mathbb{R}}\vert f^{\prime\prime}(x)\vert=h^{\prime}(a)+(b-a)h^{\prime\prime}(a)\leq3a^{2}b\lesssim 1$ and $\Vert \bX_i\Vert_{\psi_2}\leq M\lesssim 1$, from Theorem 2 in \cite{wang2019some} we get
	\[
	\sup_{ \| \bgamma \|_2 \leq R }\bigl\Vert\nabla\hat{L}\left(\bm{\gamma}\right) - \nabla L\left(\bm{\gamma}\right)\bigr\Vert_{2}=O_{\mathbb{P}}\left(\sqrt{\frac{d}{n}\log\left(\frac{n}{d}\right)};~
	d\log\left(\frac{n}{d}\right)\right).
	\]
	Then it boils down to proving uniform convergence of $\| \nabla\hat{R}(\bgamma) - \nabla R(\bgamma) \|$. 
	Let $\bar\bX_i = (1, \bX_i)$, $\tilde\bmu_0 = (1, \frac{1}{n} \sum_{i=1}^{n} \bX_i )$ and $\bar\bmu_0 = (1, \bmu_0)$. By definition,
	\[
	\nabla\hat{R}\left(\bm{\gamma}\right)=\left(\bm{\gamma}^{\top}\tilde{\bmu}_0\right)\tilde{\bmu}_0 \qquad\text{and}\qquad\nabla R\left(\bm{\gamma}\right)=\left(\bm{\gamma}^{\top}\bar{\bmu}_0\right)\bar{\bmu}_0,
	\]
	Since $\Vert\bar{\bm{X}}_{i}-\bar{\bm{\mu}}_{0}\Vert_{\psi_{2}} \lesssim \Vert\bar{\bm{X}}_{i}\Vert_{\psi_{2}} \lesssim 1$, we know
	that $\Vert \tilde\bmu_0 - \bar{\bm{\mu}}_{0}\Vert_{\psi_{2}}\lesssim1/\sqrt{n}$.
	In view of Example 6 \cite{wang2019some} and $\| \bmu_0 \|_2 \lesssim 1$, we know that $\Vert \tilde\bmu_0 -\bm{\mu}_{0}\Vert_{2}=O_{\mathbb{P}}(\sqrt{d/n\log(n/d)};~ d\log(n/d))$
	and $\Vert \tilde\bmu_0 \Vert_{2}=O_{\mathbb{P}}(1;~ d\log(n/d))$. As a result,
	\begin{align*}
	\sup_{ \| \bgamma \|_2 \leq R }\big\Vert\nabla\hat{R}\left(\bm{\gamma}\right)-\nabla R\left(\bm{\gamma}\right)\big\Vert_2&\leq
	\sup_{ \| \bgamma \|_2 \leq R } \left\{ 
	\big\vert\bgamma^\top\left(
	\tilde{\bmu}_0-\bar{\bmu}_0
	\right)\big\vert\left\Vert \tilde{\bmu}_0 \right\Vert_2
	+\big\vert\bgamma^\top\bar{\bmu}_0\big\vert\left\Vert \tilde{\bmu}_0-\bar{\bmu}_0\right\Vert_2
	\right\}
	\\
	&\leq R \left\Vert \tilde{\bmu}_0 - \bar{\bmu}_0\right\Vert_2\left(\left\Vert \tilde{\bmu}_0 \right\Vert_2+\left\Vert\bar{\bmu}_0\right\Vert_2\right)\\
	&=O_{\mathbb{P}}\left(\sqrt{\frac{d}{n}\log\left(\frac{n}{d}\right)};~
	d\log\left(\frac{n}{d}\right)\right).
	\end{align*}
	
	\section{Proof of Theorem \ref{theorem:algorithmic}}\label{appendix:theorem-algorithmic}
	To prove Theorem \ref{theorem:algorithmic}, we invoke the convergence guarantees for perturbed gradiend descent in \cite{jin2017escape}.
	\begin{theorem}[Theorem 3 of \cite{jin2017escape}]\label{thm-jin}
		Assume that $ F (\cdot)$
		is $\ell$-smooth and $\rho$-Hessian Lipschitz. Then there exists
		an absolute constant $c_{\max}$ such that, for any
		$\delta_{\mathrm{pgd}}>0$, $\varepsilon_{\mathrm{pgd}}\leq\ell^{2}/\rho$,
		$\Delta_{\mathrm{pgd}}\geq F ( \bgamma_{\mathrm{pgd}})-\inf_{\bm{\gamma}\in\mathbb{R}^{d+1}} F (\bm{\gamma})$
		and constant $c_{\mathrm{pgd}}\leq c_{\max}$, with probability exceeding $1-\delta_{\mathrm{pgd}}$, Algorithm \ref{alg:PGD} terminates within 
		\[
		T \lesssim \frac{\ell\bigl[ F \left(  \bgamma_{\mathrm{pgd}} \right)-\inf_{\bm{\gamma}\in\mathbb{R}^{d+1}} F (\bm{\gamma})\bigr]}{\varepsilon_{\mathrm{pgd}}^{2}}\log^{4}\left(\frac{d\ell\Delta_{\mathrm{pgd}}}{\varepsilon_{\mathrm{pgd}}^{2}\delta_{\mathrm{pgd}}}\right)
		\]
		iterations and the output $\bm{\gamma}^{T}$ satisfies
		\[
		\bigl\Vert\nabla F \left(\bm{\gamma}^{T}\right)\bigr\Vert_{2}\leq\varepsilon_{\mathrm{pgd}}\qquad\text{and}\qquad\lambda_{\min}\bigl(\nabla^{2} F \left(\bm{\gamma}\right)\bigr)\geq-\sqrt{\rho\varepsilon_{\mathrm{pgd}}}.
		\]
		\end{theorem}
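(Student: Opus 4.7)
The plan is to follow the two-phase potential-decrease scheme of Jin et al. The overarching idea is that at every stage of Algorithm~\ref{alg:PGD} the objective $F$ must decrease by a quantitative amount unless the current iterate is already an approximate second-order stationary point; since $F$ is bounded below, this caps the total number of iterations. Two lemmas do the bulk of the work: a ``large gradient'' descent lemma and a ``saddle escape'' lemma.

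\emph{Step 1 (large-gradient descent).} Using $\ell$-smoothness of $F$ and the choice $\eta_\mathrm{pgd}=c_\mathrm{pgd}/\ell$ with $c_\mathrm{pgd}\leq c_\mathrm{max}$ sufficiently small, the standard descent lemma yields $F(\bgamma^{t+1})\leq F(\bgamma^{t})-\tfrac{1}{2}\eta_\mathrm{pgd}\|\nabla F(\bgamma^t)\|_2^2$. Whenever $\|\nabla F(\bgamma^t)\|_2\geq g_\mathrm{thres}$, one iteration thus cuts $F$ by at least $\tfrac{1}{2}\eta_\mathrm{pgd}g_\mathrm{thres}^2$, so the perturbation trigger fires after at most $[F(\bgamma_\mathrm{pgd})-\inf F]/(\tfrac{1}{2}\eta_\mathrm{pgd}g_\mathrm{thres}^2)$ consecutive such iterations.

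\emph{Step 2 (escape from strict saddles).} The central lemma states that if $\|\nabla F(\tilde\bgamma)\|_2\leq g_\mathrm{thres}$ and $\lambda_\mathrm{min}(\nabla^2 F(\tilde\bgamma))\leq-\sqrt{\rho\varepsilon_\mathrm{pgd}}$, then after perturbing $\tilde\bgamma\leftarrow\tilde\bgamma+\bxi$ with $\bxi\sim\cU(B(\mathbf{0},r))$ and running $t_\mathrm{thres}$ gradient steps one has $F(\bgamma^{t_\mathrm{thres}})-F(\tilde\bgamma)\leq-f_\mathrm{thres}$, except on an event of probability at most $c_\mathrm{pgd}\varepsilon_\mathrm{pgd}^2\delta_\mathrm{pgd}/(d\ell\Delta_\mathrm{pgd})$. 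The proof is a coupling argument: I consider twin runs $\{\bgamma^t\}$ and $\{\bgamma'^t\}$ started at $\tilde\bgamma+\bxi$ and $\tilde\bgamma+\bxi+\mu r\bm{e}_1$, where $\bm{e}_1$ is the eigenvector of the most negative eigenvalue of $\nabla^2 F(\tilde\bgamma)$. Hessian Lipschitzness plus an inductive bookkeeping argument shows that the $\bm{e}_1$-component of $\bgamma^t-\bgamma'^t$ grows at rate $(1+\eta_\mathrm{pgd}\sqrt{\rho\varepsilon_\mathrm{pgd}})^t$ while both iterates remain in a ball of radius $\asymp\sqrt{\varepsilon_\mathrm{pgd}/\rho}$ around $\tilde\bgamma$. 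With $t_\mathrm{thres}=\chi\ell/(c_\mathrm{pgd}^2\sqrt{\rho\varepsilon_\mathrm{pgd}})$ this expansion factor exceeds the diameter of the stuck region divided by $\mu r$, so both trajectories cannot remain stuck simultaneously; hence the set of ``bad'' initial perturbations has $\bm{e}_1$-width at most $\mu r$ and measure a fraction at most $\mu$ of $B(\mathbf{0},r)$. Choosing $\mu$ polylogarithmically small via $\chi$ produces the stated per-epoch failure bound.

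\emph{Step 3 (termination, union bound, counting).} A union bound over the at most $[F(\bgamma_\mathrm{pgd})-\inf F]/f_\mathrm{thres}$ escape epochs leaves total failure probability $\leq\delta_\mathrm{pgd}$. On the good event, whenever the check $F(\bgamma^t)-F(\tilde\bgamma^{t_\mathrm{noise}})>-f_\mathrm{thres}$ triggers and the algorithm returns $\tilde\bgamma^{t_\mathrm{noise}}$, the contrapositive of Step~2 forces $\lambda_\mathrm{min}(\nabla^2 F(\tilde\bgamma^{t_\mathrm{noise}}))>-\sqrt{\rho\varepsilon_\mathrm{pgd}}$, while the prior gradient-check $\|\nabla F(\tilde\bgamma^{t_\mathrm{noise}})\|_2\leq g_\mathrm{thres}\leq\varepsilon_\mathrm{pgd}$ completes the approximate SOSP conclusion. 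For the iteration count, every block of $t_\mathrm{thres}$ steps between perturbations is paid for by an $f_\mathrm{thres}$ drop or by termination, and uninterrupted large-gradient runs also produce $\Omega(\eta_\mathrm{pgd}g_\mathrm{thres}^2)$ drop per step; summing and substituting the parameter choices at the head of Algorithm~\ref{alg:PGD} yields $T\lesssim \ell[F(\bgamma_\mathrm{pgd})-\inf F]\,\varepsilon_\mathrm{pgd}^{-2}\log^4(d\ell\Delta_\mathrm{pgd}/(\varepsilon_\mathrm{pgd}^2\delta_\mathrm{pgd}))$.

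\emph{Main obstacle.} The technical core is the coupling-based escape lemma in Step~2. One must propagate an expansion-versus-error inequality along the coupled trajectories: the negative Hessian eigenvalue produces geometric expansion along $\bm{e}_1$, but the nonlinear remainder from Hessian Lipschitzness contributes an error proportional to $\rho$ times the squared trajectory diameter. Calibrating $r$, $g_\mathrm{thres}$, $f_\mathrm{thres}$, $t_\mathrm{thres}$ and the polylog factor $\chi$ so that the expansion dominates while the nonlinear error stays smaller than the gap being tracked — and so that the resulting per-epoch failure probability survives the union bound over $\sim\Delta_\mathrm{pgd}/f_\mathrm{thres}$ perturbations — is the delicate accounting that makes the entire argument go through.
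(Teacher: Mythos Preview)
The paper does not prove this statement at all: it is quoted verbatim as Theorem~3 of \cite{jin2017escape} and invoked as a black box in the proof of Theorem~\ref{theorem:algorithmic}. Your proposal is a faithful sketch of the original argument in \cite{jin2017escape} --- the large-gradient descent lemma, the coupling-based saddle-escape lemma, and the union-bound accounting over epochs --- so there is nothing to compare against in this paper itself.
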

	
	Let $\cA$ denote this event where all of the geometric properties in Theorem \ref{theorem-landscape-sample} holds. When $\cA$ happens, $\hat{L}_{1}$ is $\ell$-smooth
	and $\rho$-Hessian Lipschitz with 
	\[
	\ell=M_{1}\qquad\text{and}\qquad\rho=M_{1}\left( 1\vee \frac{d \log (n/d) }{\sqrt{n}} \right ).
	\]
	Let $\bgamma_{\mathrm{pgd}} =\bm{0}$ and $\Delta_{\mathrm{pgd}} = 1/4$. Since $\inf_{\bgamma\in\mathbb{R}\times\mathbb{R}^d}\hat{L}_{1}\left(\bm{\gamma}\right) \geq 0$, we have
	\[
	\Delta_{\mathrm{pgd}}=\hat{L}_{1}\left(  \bgamma_{\mathrm{pgd}} \right) \geq \hat{L}_{1}\left(  \bgamma_{\mathrm{pgd}} \right) -\inf_{\bgamma\in\mathbb{R}\times\mathbb{R}^d}\hat{L}_{1}\left(\bm{\gamma}\right).
	\]
	In addition, we take $\delta^{\mathrm{pgd}}=n^{-11}$ and let 
	\[
	\varepsilon_{\mathrm{pgd}}=\sqrt{\frac{d}{n}\log\Big(\frac{n}{d}\Big)}\land\frac{\ell^{2}}{\rho}\land\frac{\eta^{2}}{\rho}\land\varepsilon.
	\]
	Here $\varepsilon$ and $\eta$ are the constants defined in Theorem
	\ref{theorem-landscape-sample}.
	
	Recall that $M_{1},\eta,\varepsilon\asymp 1$. Conditioned on the event $\cA$, Theorem \ref{thm-jin} asserts that with probability exceeding $1-n^{-10}$, Algorithm \ref{alg:PGD} with parameters $\bgamma_{\mathrm{pgd}} $, $\ell,\rho,\varepsilon_{\mathrm{pgd}},c_{\mathrm{pgd}},\delta_{\mathrm{pgd}}$, and $\Delta_{\mathrm{pgd}}$
	terminates within 
	\[
	T\lesssim \left(\frac{n}{d\log\left(n/d\right)}+\frac{d^2}{n}\log^2\Big(\frac{n}{d}\Big)\right)\log^{4}\left(nd\right)=\tilde{O}\left(\frac{n}{d}+\frac{d^2}{n}\right)
	\]
	iterations, and the output $\hat{\bm{\gamma}}$ satisfies
	\[
	\bigl\Vert\nabla\hat{L}_{1}\left(\hat{\bm{\gamma}}\right)\bigr\Vert_{2}\leq\varepsilon_{\mathrm{pgd}}\leq\sqrt{\frac{d}{n}\log\Big(\frac{n}{d}\Big)}\qquad\text{and}\qquad \lambda_{\min}\bigl(\nabla^{2}\hat{L}_{1}(\hat{\bm{\gamma}})\bigr)\geq-\sqrt{\rho\varepsilon_{\mathrm{pgd}}}\geq-\eta.
	\]
	Then the desired result follows directly from $\PP (\cA) \geq 1 - C_1 (d/n)^{C_2d} -  C_1 \exp(-C_2n^{1/3})$ in Theorem \ref{theorem-landscape-sample}.

\section{Proof of Corollary \ref{cor:misclassification}}\label{appendix:misclassification}
Throughout the proof we suppose that the high-probability event
	\[
\min_{s=\pm1}\bigl\Vert s \hat{\bm{\gamma}} -c\bgamma^{\mathrm{Bayes}}\bigr\Vert_2\lesssim \sqrt{\frac{d}{n} \log \left( \frac{n}{d} \right)}
\]
in Theorem \ref{thm:main} happens. Write $\hat\bgamma=(\hat\alpha,\hat\bbeta)$ and $\bgamma^{\star} =(\alpha^{\star},\bbeta^{\star})= c\bm{\gamma}^{\mathrm{Bayes}}$. Without loss of generality, assume that $\bm{\mu}_{0}=\bm{0}$,
$\bm{\Sigma}=\bm{I}_{d}$, $\arg\min_{s=\pm1}\Vert s\hat{\bm{\gamma}}-\bm{\gamma}^{\star}\Vert_{2}=1$ and $\hat\bbeta^{\top}\bm{\mu} > 0$. Let $F$ be the cumulative distribution
function of $Z = \be_1^{\top} \bZ$.

For any $\bgamma = (\alpha, \bbeta)$ with $\bbeta^{\top}\bm{\mu} > 0$, we use $\bX = \bmu Y + \bZ$ and the symmetry of $\bZ$ to derive that
\begin{align*}
\cR \left(\bgamma\right) & =\frac{1}{2}\mathbb{P}\left(\alpha+\bm{\beta}^{\top}\left(\bm{\mu}+\bm{Z}\right)<0\right)+\frac{1}{2}\mathbb{P}\left(\alpha+\bm{\beta}^{\top}\left(-\bm{\mu}+\bm{Z}\right)>0\right)\\
 & =\frac{1}{2}\mathbb{P}\left(\bm{\beta}^{\top}\bm{Z}<-\alpha-\bm{\beta}^{\top}\bm{\mu}\right)+\frac{1}{2}\mathbb{P}\left(\bm{\beta}^{\top}\bm{Z}>-\alpha+\bm{\beta}^{\top}\bm{\mu}\right)\\
 & =\frac{1}{2}F\left(-\alpha/\left\Vert \bm{\beta}\right\Vert _{2}-\left(\bm{\beta}/\left\Vert \bm{\beta}\right\Vert _{2}\right)^{\top}\bm{\mu}\right)+\frac{1}{2}F\left(\alpha/\left\Vert \bm{\beta}\right\Vert _{2}-\left(\bm{\beta}/\left\Vert \bm{\beta}\right\Vert _{2}\right)^{\top}\bm{\mu}\right).
\end{align*}

Define $\bm{\gamma}_{0}=(\alpha_{0},\bm{\beta}_{0})$ with $\alpha_{0}=\hat{\alpha}/\Vert\hat{\bm{\beta}}\Vert_{2}$
and $\bm{\beta}_{0}=\hat{\bm{\beta}}/\Vert\hat{\bm{\beta}}\Vert_{2}$; $\bm{\gamma}_{1}=(\alpha_{1},\bm{\beta}_{1})$ with $\alpha_{1}=0$
and $\bm{\beta}_{1}=\bm{\mu}/\Vert\bm{\mu}\Vert_{2}$.
Recall that $\bm{\gamma}^{\mathrm{Bayes}}=c(0,\bm{\mu})$ for some constant
$c>0$. We have
\[
\cR \left(\hat{\bm{\gamma}}\right)-\cR \left(\bm{\gamma}^{\mathrm{Bayes}}\right)=\underbrace{\frac{1}{2}F\big(-\alpha_{0}-\bm{\beta}_{0}^{\top}\bm{\mu}\big)-\frac{1}{2}F\big(-\alpha_{1}-\bm{\beta}_{1}^{\top}\bm{\mu}\big)}_{ E_1}+\underbrace{\frac{1}{2}F\big(\alpha_{0}-\bm{\beta}_{0}^{\top}\bm{\mu}\big)-\frac{1}{2}F\big(\alpha_{1}-\bm{\beta}_{1}^{\top}\bm{\mu}\big)}_{E_2}.
\]
Using Taylor's Theorem, $\| p' \|_{\infty} \lesssim 1$ and $\| \bmu \|_2 \lesssim 1$, one can arrive at
\begin{align*}
\left|E_1-p\big(-\alpha_{1}-\bm{\beta}_{1}^{\top}\bm{\mu}\big)\big(\alpha_{1}-\alpha_{0}+\left(\bm{\beta}_{1}-\bm{\beta}_{0}\right)^{\top}\bm{\mu}\big)\right| & \lesssim \left\Vert \bm{\gamma}_{0}-\bm{\gamma}_{1}\right\Vert _{2}^{2},\\
\left|E_2-p\big(\alpha_{1}-\bm{\beta}_{1}^{\top}\bm{\mu}\big)\big(\alpha_{0}-\alpha_{1}+\left(\bm{\beta}_{1}-\bm{\beta}_{0}\right)^{\top}\bm{\mu}\big)\right| & \lesssim \left\Vert \bm{\gamma}_{0}-\bm{\gamma}_{1}\right\Vert_{2}^{2},
\end{align*}

From $\alpha_{1}=0$, $\bm{\beta}_{1}=\bm{\mu}/\Vert\bm{\mu}\Vert_{2}$ and $\| p \|_{\infty} \lesssim 1$
we obtain that
\begin{align*}
\cR \left(\hat{\bm{\gamma}}\right) - \cR \left(\bm{\gamma}^{\mathrm{Bayes}}\right) & \lesssim 
|
p(-\bm{\beta}_{1}^{\top}\bm{\mu})[-\alpha_{0}+\left(\bm{\beta}_{1}-\bm{\beta}_{0}\right)^{\top}\bm{\mu}]+p(-\bm{\beta}_{1}^{\top}\bm{\mu})[\alpha_{0}+\left(\bm{\beta}_{1}-\bm{\beta}_{0}\right)^{\top}\bm{\mu}]
|
+\left\Vert \bm{\gamma}_{0}-\bm{\gamma}_{1}\right\Vert _{2}^{2}\nonumber \\
 & \lesssim  |
 \left(\bm{\beta}_{1}-\bm{\beta}_{0}\right)^{\top}\bm{\beta}_{1}
 |
 + \left\Vert \bm{\gamma}_{0}-\bm{\gamma}_{1}\right\Vert _{2}^{2}.
\end{align*}
Since $\bm{\beta}_{0}$ and $\bm{\beta}_{1}$ are unit vectors,
\begin{align}
&	\| \bbeta_1 - \bbeta_0 \|_2^2 = \| \bbeta_0 \|_2^2 - 2 \bbeta_0^{\top} \bbeta_1 + \| \bbeta_1 \|_2^2 = 2 ( 1 - \bbeta_0^{\top} \bbeta_1 ) = 2 (\bbeta_1 - \bbeta_0)^{\top} \bbeta_1 , \notag\\
&\cR \left(\hat{\bm{\gamma}}\right) - \cR \left(\bm{\gamma}^{\mathrm{Bayes}}\right) \lesssim 
 \Vert \bm{\beta}_{1}-\bm{\beta}_{0}\Vert _{2}^{2}+ \Vert \bm{\gamma}_{0}-\bm{\gamma}_{1}\Vert _{2}^{2}
 \lesssim \Vert \bm{\gamma}_{0}-\bm{\gamma}_{1}\Vert _{2}^{2}
 .\label{eq:E-intermediate}
\end{align}

Note that $\Vert\hat{\bm{\beta}}-\bm{\beta}^{\star}\Vert_{2}\leq\Vert\hat{\bm{\gamma}}-\bm{\gamma}^{\star}\Vert_{2}\lesssim\sqrt{d/n\log(n/d)}$
and $\Vert\bm{\beta}^{\star}\Vert_{2}\asymp1$. When $n/d$ is sufficiently
large, we have $\Vert\hat{\bm{\beta}}\Vert_{2}\asymp1$ and 
\begin{align*}
\left\Vert \bm{\beta}_{1}-\bm{\beta}_{0}\right\Vert _{2} & =\bigl\Vert\hat{\bm{\beta}}/\Vert\hat{\bm{\beta}}\Vert_{2}-\bm{\beta}^{\star}/\left\Vert \bm{\beta}^{\star}\right\Vert _{2}\bigr\Vert_{2}\lesssim\bigl\Vert\left\Vert \bm{\beta}^{\star}\right\Vert _{2}\hat{\bm{\beta}}-\Vert\hat{\bm{\beta}}\Vert_{2}\bm{\beta}^{\star}\bigr\Vert_{2}\\
 & \leq\bigl|\left\Vert \bm{\beta}^{\star}\right\Vert _{2}-\Vert\hat{\bm{\beta}}\Vert_{2}\bigr|\bigl\Vert\hat{\bm{\beta}}\bigr\Vert_{2}+\Vert\hat{\bm{\beta}}\Vert_{2}\bigl\Vert\hat{\bm{\beta}}-\bm{\beta}^{\star}\bigr\Vert_{2}\lesssim\bigl\Vert\hat{\bm{\beta}}-\bm{\beta}^{\star}\bigr\Vert_{2}.
\end{align*}
In addition, we also have $\vert\alpha_{0}-\alpha_{1}\vert=\vert\alpha_{0}\vert=\vert\hat{\alpha}\vert/\Vert\hat{\bm{\beta}}\Vert_{2}\lesssim\vert\hat{\alpha}\vert=\vert\hat{\alpha}-\alpha^{\star}\vert$.
As a result, $\Vert\bm{\gamma}_{0}-\bm{\gamma}_{1}\Vert_{2}\lesssim\vert\hat{\alpha}-\alpha^{\star}\vert+\Vert\bm{\beta}_{1}-\bm{\beta}_{0}\Vert_{2}\lesssim\Vert\hat{\bm{\gamma}}-\bm{\gamma}^{\star}\Vert_{2}$.
Plugging these bounds into (\ref{eq:E-intermediate}), we get
\[
\cR \left(\hat{\bm{\gamma}}\right)-\cR \left(\bm{\gamma}^{\star}\right)\lesssim\bigl\Vert\hat{\bm{\gamma}}-\bm{\gamma}^{\star}\bigr\Vert_{2}^{2}\lesssim\frac{d}{n}\log \left(\frac{n}{d}\right).
\]
\section{Technical lemmas}

\begin{lemma}\label{lemma-cubic}
	Let $\bm{X}$ be a random vector in $\RR^{d+1}$ with $\EE \| \bm{X} \|_2^3 < \infty$. Then
	\begin{align*}
	&\sup_{\bu, \bv \in \SSS^{d}} \EE ( |\bu^{\top} \bm{X} |^2 |\bv^{\top} \bm{X} | )
	= \sup_{\bu \in \SSS^{d}} \EE |\bu^{\top} \bm{X} |^3.
	\end{align*}
\end{lemma}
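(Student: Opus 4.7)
The inequality $\sup_{\bu \in \SSS^{d}} \EE |\bu^{\top} \bm{X}|^3 \leq \sup_{\bu, \bv \in \SSS^{d}} \EE ( |\bu^{\top} \bm{X} |^2 |\bv^{\top} \bm{X} | )$ is immediate by taking $\bv = \bu$, so the plan reduces to proving the reverse inequality.

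For the reverse direction, I would apply H\"older's inequality with exponents $3/2$ and $3$ to the product $|\bu^{\top}\bm{X}|^2 \cdot |\bv^{\top}\bm{X}|$. This gives
\[
\EE\bigl(|\bu^{\top}\bm{X}|^2 |\bv^{\top}\bm{X}|\bigr) \leq \bigl(\EE |\bu^{\top}\bm{X}|^3\bigr)^{2/3} \bigl(\EE |\bv^{\top}\bm{X}|^3\bigr)^{1/3}.
\]
Both factors on the right-hand side are bounded by $\sup_{\bw \in \SSS^{d}} \EE|\bw^{\top}\bm{X}|^3$, so taking the supremum over $\bu,\bv \in \SSS^{d}$ on the left-hand side yields the desired bound. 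Combining the two directions gives equality.

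I do not anticipate any obstacle here: the assumption $\EE\|\bm{X}\|_2^3 < \infty$ ensures that all third moments $\EE|\bw^{\top}\bm{X}|^3$ are finite (since $|\bw^{\top}\bm{X}| \leq \|\bm{X}\|_2$ for $\bw \in \SSS^d$), so H\"older's inequality applies without any integrability issue. The proof is essentially a one-line application of H\"older.
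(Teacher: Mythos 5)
Your proof is correct, though it takes a slightly different route from the paper's. You apply H\"older's inequality with exponents $3/2$ and $3$ directly to the product $|\bu^\top\bX|^2\cdot|\bv^\top\bX|$, which immediately bounds the cross-term by $(\EE|\bu^\top\bX|^3)^{2/3}(\EE|\bv^\top\bX|^3)^{1/3}$ and hence by the right-hand supremum in a single step. The paper instead uses Cauchy--Schwarz after a clever split, writing $|\bu^\top\bX|^2|\bv^\top\bX| = |\bu^\top\bX|^{3/2}\bigl(|\bu^\top\bX|^{1/2}|\bv^\top\bX|\bigr)$, which produces a bound involving the same supremum $A:=\sup_{\bu,\bv}\EE(|\bu^\top\bX|^2|\bv^\top\bX|)$ on the right-hand side (by symmetry in $\bu,\bv$), namely $A\leq B^{1/2}A^{1/2}$ with $B:=\sup_\bu\EE|\bu^\top\bX|^3$, and then closes via the self-improving cancellation $A\leq B$. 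Your version is more direct and arguably cleaner since it avoids the fixed-point step; the paper's buys nothing extra here beyond sticking to Cauchy--Schwarz as the only tool. Both establish the result under the stated integrability.
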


\begin{proof}
	It is easily seen that $\sup_{\bu, \bv \in \SSS^{d}}\EE ( |\bu^{\top} \bm{X} |^2 |\bv^{\top} \bm{X} | )
	\geq \sup_{\bu \in \SSS^{d}} \EE |\bu^{\top} \bm{X} |^3$. To prove the other direction, we first use Cauchy-Schwarz inequality to get
	\begin{align*}
	\EE ( |\bu^{\top} \bm{X} |^2 |\bv^{\top} \bm{X} | )
	& = \EE [ |\bu^{\top} \bm{X} |^{3/2} (  |\bu^{\top} \bm{X} |^{1/2} |\bv^{\top} \bm{X} | ) ]
	\leq 
	\EE^{1/2} |\bu^{\top} \bm{X} |^{3} \cdot 
	\EE^{1/2} ( |\bu^{\top} \bm{X} |\cdot |\bv^{\top} \bm{X} |^2 ).
	\end{align*}
	By taking suprema we prove the claim.
\end{proof}

\begin{lemma}\label{lemma-Hessian-Lip-0}
	Let $\bm{X}$ be a random vector in $\RR^{d+1}$ and $f \in C^2(\RR)$. Suppose that $\EE \| \bm{X} \|_2^3 < \infty$, $\sup_{x \in \RR} |f''(x)| = F_2 < \infty$ and $f''$ is $F_3$-Lipschitz. Define $\bar\bmu = \EE \bm{X}$.
	Then
	\[
	L_{\lambda} (\bgamma) = \EE f(\bgamma^{\top} \bm{X}) + \lambda (\bgamma^{\top}  \bar\bmu )^2 / 2
	\]
	exists for all $\bgamma \in \RR^{d+1}$ and $\lambda \geq 0$, and
	\begin{align*}
	&\sup_{ \bgamma_1 \neq \bgamma_2  } \frac{ \| \nabla L_{\lambda} ( \bgamma_1 ) - \nabla L_{\lambda} ( \bgamma_2 ) \|_2 }{ \| \bgamma_1 - \bgamma_2 \|_2 } \leq F_2 \sup_{\bu \in \SSS^{d}}  \EE | \bu^{\top} \bm{X} |^2  + \lambda \| \bar\bmu \|_2^2, \\
	&\sup_{ \bgamma_1 \neq \bgamma_2  } \frac{ | \bu^{\top} [ \nabla^2 L_{\lambda} ( \bgamma_1 ) - \nabla^2 L_{\lambda} ( \bgamma_2 ) ] \bu | }{ \| \bgamma_1 - \bgamma_2 \|_2 } \leq F_3 \sup_{\bv \in \SSS^{d}}  \EE [(\bu^{\top} \bm{X})^2 |\bv^{\top} \bm{X} | ], \qquad \forall \bu \in \SSS^{d-1}, \\
	&\sup_{ \bgamma_1 \neq \bgamma_2  } \frac{ \| \nabla^2 L_{\lambda} ( \bgamma_1 ) - \nabla^2 L_{\lambda} ( \bgamma_2 ) \|_2 }{ \| \bgamma_1 - \bgamma_2 \|_2 } \leq F_3 \sup_{\bu \in \SSS^{d}}  \EE |\bu^{\top} \bm{X} |^3.
	\end{align*}
	In addition, if there exist nonnegative numbers $a,b$ and $c$ such that $\inf_{ x \in \RR } x f'(x) \geq - b$ and $\inf_{ |x| \geq a } f'(x) \sgn(x) \geq c$, then
	\begin{align*}
	\| \nabla L_{\lambda} (\bgamma) \|_2 \geq c \inf_{\bu \in \SSS^{d}} \EE | \bu^{\top}\bm{X} | - \frac{ac+b}{\| \bgamma \|_2}, \qquad \forall \bgamma \neq \mathbf{0}.
	\end{align*}

\end{lemma}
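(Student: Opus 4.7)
The plan is to differentiate under the integral sign to obtain
\begin{align*}
\nabla L_\lambda(\bgamma) &= \EE[f'(\bgamma^\top \bm{X})\, \bm{X}] + \lambda (\bgamma^\top \bar\bmu) \bar\bmu, \\
\nabla^2 L_\lambda(\bgamma) &= \EE[f''(\bgamma^\top \bm{X})\, \bm{X}\bm{X}^\top] + \lambda \bar\bmu \bar\bmu^\top,
\end{align*}
which is justified because $|f|$ and $|f'|$ are dominated by quadratics (owing to $|f''|\leq F_2$) and $\EE\|\bm{X}\|_2^3 < \infty$ supplies an integrable envelope for dominated convergence; existence of $L_\lambda$ follows from the same moment bounds. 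For the gradient-Lipschitz inequality I would write $f'(\bgamma_1^\top\bm{X}) - f'(\bgamma_2^\top\bm{X}) = \bm{X}^\top(\bgamma_1-\bgamma_2)\int_0^1 f''(\bgamma_t^\top\bm{X})\,dt$ with $\bgamma_t = (1-t)\bgamma_2 + t\bgamma_1$, substitute into the gradient difference, test against any $\bu\in\SSS^d$, use $|f''|\leq F_2$, and apply Cauchy--Schwarz with $\bw = (\bgamma_1-\bgamma_2)/\|\bgamma_1-\bgamma_2\|_2$ to get $\EE[|\bu^\top\bm{X}||\bw^\top\bm{X}|] \leq \sup_{\bv\in\SSS^d}\EE|\bv^\top\bm{X}|^2$. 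The quadratic penalty contributes at most $\lambda\|\bar\bmu\|_2^2\|\bgamma_1-\bgamma_2\|_2$, and taking the supremum over $\bu$ produces the stated operator-norm bound.

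For the Hessian estimates, the $\lambda$-penalty has constant Hessian and drops out, leaving
\[
\bu^\top\bigl[\nabla^2 L_\lambda(\bgamma_1) - \nabla^2 L_\lambda(\bgamma_2)\bigr]\bu = \EE\bigl\{[f''(\bgamma_1^\top\bm{X}) - f''(\bgamma_2^\top\bm{X})](\bu^\top\bm{X})^2\bigr\}.
\]
Invoking the $F_3$-Lipschitzness of $f''$ and factoring out $\|\bgamma_1-\bgamma_2\|_2$ yields the quadratic-form bound directly. For the spectral-norm version, I would use symmetry of the Hessian difference, so its operator norm equals $\sup_\bu|\bu^\top(\cdot)\bu|$, and then invoke Lemma~\ref{lemma-cubic} to collapse $\sup_{\bu,\bv\in\SSS^d}\EE[(\bu^\top\bm{X})^2|\bv^\top\bm{X}|]$ to $\sup_{\bu\in\SSS^d}\EE|\bu^\top\bm{X}|^3$.

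The lower bound on $\|\nabla L_\lambda(\bgamma)\|_2$ is the least routine step and is where I expect the main (mild) difficulty. I would test against $\bu = \bgamma/\|\bgamma\|_2$ and use $\lambda(\bgamma^\top\bar\bmu)^2 \geq 0$ to discard the penalty, giving $\|\nabla L_\lambda(\bgamma)\|_2 \geq \bu^\top\nabla L_\lambda(\bgamma) \geq \EE[Wf'(W)]/\|\bgamma\|_2$ with $W = \bgamma^\top\bm{X}$. The expectation then splits at $|W|=a$: on $\{|W|\leq a\}$ the hypothesis $Wf'(W)\geq -b$ contributes at least $-b$; on $\{|W|>a\}$, $f'(W)\sgn(W)\geq c$ promotes $Wf'(W) \geq c|W|$, and the elementary bound $\EE[|W|\mathbf{1}_{\{|W|>a\}}] \geq \EE|W| - a$ contributes at least $c\EE|W| - ac$. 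Since $\EE|W| = \|\bgamma\|_2\EE|\bu^\top\bm{X}|\geq \|\bgamma\|_2\inf_{\bv\in\SSS^d}\EE|\bv^\top\bm{X}|$, dividing by $\|\bgamma\|_2$ delivers the claim. The subtlety is reconciling the two separate sign conditions on $f'$ across the two regions, which the dichotomy at $|W|=a$ handles cleanly.
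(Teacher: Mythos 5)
Your proposal follows essentially the same route as the paper: differentiate under the integral, bound the gradient-Lipschitz constant via the Hessian (the paper does this by $\sup\|\nabla^2 L_\lambda\|_2$, you via the fundamental-theorem-of-calculus identity — same thing), use $F_3$-Lipschitzness of $f''$ for the Hessian bounds together with Lemma~\ref{lemma-cubic}, and for the lower bound test against $\bgamma/\|\bgamma\|_2$, discard the nonnegative penalty term, and split the expectation at $|W|=a$ exactly as the paper does. The argument is correct and matches the paper's.
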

\begin{proof}
	Let $L(\bgamma) = \EE f(\bgamma^{\top} \bm{X})$ and $R(\bgamma) = (\bgamma^{\top}  \bar\bmu)^2 / 2$. 
	Since $L_{\lambda} = L + \lambda R$, $\nabla^2 L(\bgamma) =  \EE [ \bm{X} \bm{X}^{\top} f''(\bgamma^{\top} \bm{X}) ]$ and $\nabla^2 R (\bgamma) =  \bar\bmu \bar\bmu^{\top}$,
	\begin{align*}
	\sup_{ \bgamma_1 \neq \bgamma_2  } \frac{ \| \nabla L_{\lambda}( \bgamma_1 ) - \nabla L_{\lambda}( \bgamma_2 ) \|_2 }{ \| \bgamma_1 - \bgamma_2 \|_2 }
	&= \sup_{ \bgamma \in \RR^{d+1} } \| \nabla^2 L_{\lambda}(\bgamma) \|_2= \sup_{ \bgamma \in \RR^{d+1} } \sup_{\bu \in \SSS^{d}} \bu^{\top}  \nabla^2 L_{\lambda}(\bgamma) \bu\\  
	&\leq 
	F_2  \sup_{\bu \in \SSS^{d}}  \EE ( \bu^{\top} \bm{X} )^2 + \lambda \| \bar\bmu \|_2^2.
	\end{align*}
	For any $\bu \in \SSS^{d}$,
	\begin{align*}
	| \bu^{\top} [ \nabla^2 L_{\lambda}( \bgamma_1 ) - \nabla^2 L_{\lambda}( \bgamma_2) ] \bu |
	&= \left|  \EE [ (\bu^{\top} \bm{X})^2 f''(\bgamma_1^{\top}\bm{X})] - \EE [ (\bu^{\top} \bm{X})^2 f''(\bgamma_2^{\top}\bm{X}) ] \right| \\
	&  \leq \EE [ (\bu^{\top} \bm{X})^2 |f''(\bgamma_1^{\top}\bm{X}) - f''(\bgamma_2^{\top}\bm{X})| ]\\
	& \leq F_3 \EE [ (\bu^{\top} \bm{X})^2 | (\bgamma_1 - \bgamma_2 )^{\top}\bm{X}| ]\\ 
	&\leq F_3 \| \bgamma_1 - \bgamma_2 \|_2 \sup_{\bv \in \SSS^{d}} \EE [ (\bu^{\top} \bm{X})^2 | \bv^{\top}\bm{X}| ].
	\end{align*}
	As a result,
	\begin{align*}
	\sup_{ \bgamma_1 \neq \bgamma_2  } \frac{ \| \nabla^2 L_{\lambda} ( \bgamma_1 ) - \nabla^2 L_{\lambda} ( \bgamma_2 ) \|_2 }{ \| \bgamma_1 - \bgamma_2 \|_2 }
	& = \sup_{\bgamma_1 \neq \bgamma_2} \frac{ \sup_{\bu \in \SSS^{d}} | \bu^{\top} [ \nabla^2 L_{\lambda}( \bgamma_1 ) - \nabla^2 L_{\lambda}( \bgamma_2 ) ] \bu | }{
		\| \bgamma_1 - \bgamma_2\|_2
	}\\
	&= \sup_{\bu \in \SSS^{d}}  \sup_{\bgamma_1 \neq \bgamma_2} \frac{ | \bu^{\top} [ \nabla^2 L_{\lambda}( \bgamma_1 ) - \nabla^2 L_{\lambda}( \bgamma_2 ) ] \bu | }{
		\| \bgamma_1 - \bgamma_2\|_2
	} 
	\\ &
	\leq \sup_{\bu \in \SSS^{d}} \{ F_3 \sup_{\bv \in \SSS^{d}} \EE [ (\bu^{\top} \bm{X})^2 | \bv^{\top}\bm{X}| ] \}
	= F_3 \sup_{\bu \in \SSS^{d}} \EE |\bu^{\top} \bm{X} |^3,
	\end{align*}
	where the last equality follows from Lemma \ref{lemma-cubic}.
	
	We finally come to the lower bound on $\| \nabla L_\lambda(\bgamma) \|_2$. Note that $\| \nabla L_\lambda (\bgamma) \|_2 \| \bgamma \|_2 \geq  \langle \bgamma , \nabla L_\lambda (\bgamma) \rangle$, $\nabla L(\bgamma) = \EE [ \bm{X}  f'(\bm{X}^{\top} \bgamma) ]$ and $\nabla R(\bgamma)=(\bgamma^\top\bar{\bmu})\bar{\bmu}$. The condition $\inf_{ |x| \geq a } f'(x) \sgn(x) \geq c$ implies that $xf'(x) \geq c |x|$ when $|x| \geq a$. By this and $\inf_{ x \in \RR } x f'(x) \geq - b$,
	\begin{align*}
	\langle \bgamma , \nabla L (\bgamma) \rangle 
	& = \EE [ \bm{X}^{\top} \bgamma f'(\bm{X}^{\top} \bgamma) ]
	= \EE [ \bm{X}^{\top} \bgamma f'(\bm{X}^{\top} \bgamma) \mathbf{1}_{ \{ |\bm{X}^{\top} \bgamma | \geq a \} } ] + \EE [ \bm{X}^{\top} \bgamma f'(\bm{X}^{\top} \bgamma) \mathbf{1}_{ \{ |\bm{X}^{\top} \bgamma | < a \} } ] \notag \\
	&\geq c \EE ( |\bm{X}^{\top} \bgamma | \mathbf{1}_{ \{ |\bm{X}^{\top} \bgamma | \geq a \} } ) - b
	= c \EE |\bm{X}^{\top} \bgamma | - c \EE ( |\bm{X}^{\top} \bgamma | \mathbf{1}_{ \{ |\bm{X}^{\top} \bgamma | < a \} } ) - b \notag\\
	&\geq c \EE |\bm{X}^{\top} \bgamma | - (ac+b)
	\geq \| \bgamma \|_2 c \inf_{\bu \in \SSS^{d}}  \EE | \bu^{\top}\bm{X} | - (ac+b).
	\end{align*}
	In addition, we also have $\langle\bgamma,\nabla R(\bgamma)\rangle=(\bgamma^\top\bar{\bmu})^2\geq0$. Then the lower bound directly follows.
\end{proof}

\begin{lemma}\label{lemma-L1-lower}
	There exists a continuous function $\varphi:~ (0,+\infty)^2 \to (0,+\infty)$ that is non-increasing in the first argument and non-decreasing in the second argument, such that for any nonzero sub-Gaussian random variable $X$, $\EE |X| \geq \varphi ( \| X \|_{\psi_2}, \EE X^2 ) $.
\end{lemma}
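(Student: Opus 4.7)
The plan is to lower-bound $\EE|X|$ by moment interpolation between $\EE X^2$ and a higher moment, and then control the higher moment by the sub-Gaussian norm. Concretely, I will use H\"older's inequality in the standard ``log-convex moments'' form to sandwich $\EE X^2$ between $\EE|X|$ and $\EE|X|^3$, then invoke the definition of $\|X\|_{\psi_2}$ to bound $\EE|X|^3$.

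First, since $2 = \tfrac12\cdot 1 + \tfrac12\cdot 3$, H\"older's inequality (or log-convexity of $p\mapsto\log\EE|X|^p$) yields
\[
\EE X^2 \;\leq\; (\EE|X|)^{1/2}\,(\EE|X|^3)^{1/2},
\]
so that $\EE|X| \geq (\EE X^2)^2 / \EE|X|^3$. Next, by the definition
$\|X\|_{\psi_2} = \sup_{p\geq 1} p^{-1/2}\,\EE^{1/p}|X|^p$, applied at $p=3$,
\[
\EE|X|^3 \;\leq\; 3^{3/2}\,\|X\|_{\psi_2}^{3}.
\]
Combining the two inequalities gives
\[
\EE|X| \;\geq\; \frac{(\EE X^2)^2}{3\sqrt{3}\,\|X\|_{\psi_2}^{3}}.
\]
Taking $\varphi(s,t) = t^2 / (3\sqrt{3}\, s^3)$ on $(0,\infty)^2$ furnishes a continuous, strictly positive function that is non-increasing in its first argument and non-decreasing in its second, establishing the claim.

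There is no real obstacle here; the only thing to double-check is that $\EE X^2$ is finite (so the middle inequality is not vacuous), which is automatic from sub-Gaussianity, and that $\EE|X|^3 > 0$ whenever $X$ is nonzero (so division is legitimate). Note also that the particular exponent $p=3$ is not essential: any $p>2$ would yield a valid $\varphi$ via the general interpolation $\EE X^2 \leq (\EE|X|)^{(p-2)/(p-1)}(\EE|X|^p)^{1/(p-1)}$ together with $\EE|X|^p \leq p^{p/2}\|X\|_{\psi_2}^p$; the choice $p=3$ just produces the cleanest closed form used above.
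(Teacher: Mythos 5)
Your argument is correct, and it is genuinely different from the paper's. The paper proceeds by truncation: it writes $\EE|X| \ge \EE(|X|\mathbf 1_{\{|X|\le t\}}) \ge t^{-1}\EE(X^2\mathbf 1_{\{|X|\le t\}}) = t^{-1}\bigl[\EE X^2 - \EE(X^2\mathbf 1_{\{|X|>t\}})\bigr]$, then bounds the tail term by Cauchy--Schwarz and the sub-Gaussian tail as $\EE(X^2\mathbf 1_{\{|X|>t\}}) \le C_1\|X\|_{\psi_2}^2 e^{-C_2 t^2/\|X\|_{\psi_2}^2}$, and finally takes $\varphi(s,m) = \sup_{t>0} t^{-1}\bigl(m - C_1 s^2 e^{-C_2 t^2/s^2}\bigr)$. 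Your route is the Littlewood/Paley--Zygmund style interpolation of the second moment between the first and third moments, $\EE X^2 \le (\EE|X|)^{1/2}(\EE|X|^3)^{1/2}$, combined with the direct bound $\EE|X|^3 \le 3^{3/2}\|X\|_{\psi_2}^3$ pulled straight from the paper's definition of $\|\cdot\|_{\psi_2}$. That buys you a clean closed-form $\varphi(s,t) = t^2/(3\sqrt{3}\,s^3)$ with the required monotonicity obvious by inspection, and it avoids any optimization over a free truncation parameter. The paper's optimized-over-$t$ constant could in principle be sharper, but since the lemma only asks for existence of some such $\varphi$, nothing is lost. All of your steps — Cauchy--Schwarz with the $|X|^{1/2}\cdot|X|^{3/2}$ split, the $p=3$ moment bound, the nonvanishing of $\EE|X|^3$ for nonzero $X$ — check out. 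One stylistic caution: the paper cites a constant $C_1, C_2$ from sub-Gaussian facts, whereas your $p^{p/2}$ bound follows without any implicit constants from the exact definition of $\|\cdot\|_{\psi_2}$ given in the Notation section, which makes your version somewhat more self-contained.
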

\begin{proof}
	For any $t > 0$,
	\begin{align*}
	\EE |X| \geq \EE (|X| \mathbf{1}_{ \{ |X| \leq t \} } ) \leq t^{-1} \EE (X^2 \mathbf{1}_{ \{ |X| \leq t \} } )
	= t^{-1} [ \EE X^2 - \EE (X^2 \mathbf{1}_{ \{ |X| > t \} } ) ].
	\end{align*}
	By Cauchy-Schwarz inequality and the sub-Gaussian property \citep{Ver10}, there exist constants $C_1,C_2>0$ such that
	\begin{align*}
	\EE (X^2 \mathbf{1}_{ \{ |X| > t \} } ) \leq \EE^{1/2} X^4 \cdot \PP^{1/2} ( |X| > t )
	\leq C_1 \| X \|_{\psi_2}^2 e^{ - C_2 t^2 / \| X \|_{\psi_2}^2 }.
	\end{align*}
	By taking $\varphi ( \| X \|_{\psi_2}, \EE X^2 ) = \sup_{t > 0} t^{-1} ( \EE X^2 - C_1 \| X \|_{\psi_2}^2 e^{ - C_2 t^2 / \| X \|_{\psi_2}^2 } ) $ we finish the proof, as the required monotonicity is obvious.
\end{proof}

\begin{lemma}\label{lemma-power-truncation}
	Let $\{ X_{ni} \}_{n\geq 1, i \in [n]}$ be an array of random variables where for any $n$, $\{ X_{ni} \}_{i=1}^n$ are i.i.d.~sub-Gaussian random variables with $\| X_{n1} \|_{\psi_2} \leq 1$. Fix some constant $a \geq 2$, define $S_{n} = \frac{1}{n} \sum_{i=1}^{n} |X_{ni}|^a$ and let $\{ r_n \}_{n=1}^{\infty}$ be a deterministic sequence satisfying $\log n \leq r_n \leq n$. We have
	\begin{align*}
	& S_n - \EE |X_{n1}|^a = O_{\PP} (  r_n^{(a-1)/2} / \sqrt{n}  ;~ r_n ), \\
	& S_n = O_{\PP} ( \max\{ 1 ,  r_n^{(a-1)/2} / \sqrt{n} \} ;~ r_n ).
	\end{align*}
\end{lemma}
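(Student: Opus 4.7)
My plan is to recognize $Y_i := |X_{ni}|^a$ as sub-Weibull random variables of index $2/a$ and apply a Bernstein-type concentration inequality for such variables. Since $|Y_i|^{2/a} = |X_{ni}|^2$ and $\|X_{ni}\|_{\psi_2}\leq 1$, the Orlicz-norm definition immediately gives $\|Y_i\|_{\psi_{2/a}}\leq 1$; in particular $\EE Y_i$ and $\mathrm{Var}(Y_i)$ are bounded by constants depending only on $a$.

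I then invoke the Bernstein-type bound for sub-Weibull sums (an $\alpha$-sub-exponential Bernstein inequality, of the kind used in the concentration lemmas of \cite{wang2019some}):
\[
\PP\!\left( \Big| \textstyle\sum_{i=1}^{n}(Y_i - \EE Y_i) \Big| > s \right) \;\leq\; 2 \exp\!\left( - c\, \min\!\Big\{ \tfrac{s^2}{n\sigma^2},\; s^{2/a} \Big\} \right),
\]
where $\sigma^2 = \mathrm{Var}(Y_1)$ and I have absorbed $\|Y_i\|_{\psi_{2/a}}$ into $c$. Substituting $s = \tau\sqrt{n}\, r_n^{(a-1)/2}$, which corresponds to a sample-mean deviation of $\tau\, r_n^{(a-1)/2}/\sqrt{n}$, yields $s^2/n = \tau^2 r_n^{a-1}$ and $s^{2/a} = \tau^{2/a}\, n^{1/a}\, r_n^{(a-1)/a}$. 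The former is $\geq \tau^2 r_n$ because $a\geq 2$ and $r_n\geq 1$; the latter is $\geq \tau^{2/a} r_n$ because $r_n\leq n$ gives $n^{1/a}\geq r_n^{1/a}$. Choosing $\tau$ large enough makes both arguments of the $\min$ exceed $r_n$ up to an absolute multiplier, so the probability is at most $2e^{-r_n}$, which is the first claim. The second claim is then immediate from the first by the triangle inequality, since $\EE Y_1 \leq C_a$ is a constant.

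The main obstacle is producing the sub-Weibull Bernstein inequality in a self-contained way. A single-level truncation at $T_\ast \asymp r_n^{a/2}$ -- the smallest threshold for which $\PP(\max_i|Y_i|>T_\ast)\leq e^{-r_n}$ holds thanks to $r_n\geq \log n$ -- combined with classical Bernstein on the truncated variables introduces a secondary term of order $T_\ast r_n/n \asymp r_n^{(a+2)/2}/n$, and this exceeds the target rate $r_n^{(a-1)/2}/\sqrt n$ precisely in the regime $r_n^3 > n$. Circumventing this requires either a dyadic peeling across shells $\{2^k T_0 < |Y_i|\leq 2^{k+1}T_0\}$ followed by summing the contributions geometrically, or a direct MGF computation exploiting the finite Orlicz norm at $\psi_{2/a}$. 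For our purposes, citing the sub-Weibull Bernstein as a black box and verifying the elementary arithmetic above is the cleanest route.
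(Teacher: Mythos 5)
Your plan is correct and reaches the stated bound, but by a genuinely different route: you recognize $Y_i=|X_{ni}|^a$ as sub-Weibull of order $2/a$ with $\|Y_i\|_{\psi_{2/a}}\lesssim 1$, invoke an $\alpha$-sub-exponential Bernstein inequality as a black box, and verify that at $s=\tau\sqrt{n}\,r_n^{(a-1)/2}$ one has $s^2/n\gtrsim\tau^2 r_n$ and $s^{2/a}\geq\tau^{2/a}r_n$ (using $a\geq 2$ and $r_n\leq n$), yielding a tail $\lesssim e^{-c\,\tau^{2/a}r_n}$ which matches the $O_\PP(\cdot;r_n)$ semantics with $g(\tau)\asymp\tau^{2/a}$; the second claim then follows from $\EE|X_{n1}|^a\lesssim 1$ and the triangle inequality. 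The paper instead derives the needed tail from scratch via a single truncation whose threshold is \emph{coupled to the deviation parameter}: it truncates $|X_{ni}|$ at $R_{nt}=t\sqrt{r_n}$ and exploits the factorization $|X|^a\mathbf{1}_{\{|X|\leq R\}}\leq R^{a-2}|X|^2$, which gives $\||X|^a\mathbf{1}_{\{|X|\leq R\}}\|_{\psi_1}\lesssim R^{a-2}$ --- far smaller than the crude almost-sure cap $R^a$ --- and then applies the standard sub-exponential Bernstein inequality (Vershynin, Prop.~5.16) with this $\psi_1$-parameter, obtaining exponent $\min\{t^2 r_n,\,t\sqrt{n r_n}\}\geq tr_n$. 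Your side computation suggesting a problematic secondary term $T_*r_n/n\asymp r_n^{(a+2)/2}/n$ tacitly uses the bounded-variable Bernstein with cap $T_*=R^a$; with the $\psi_1$-Bernstein at parameter $R^{a-2}$ the analogous term is $r_n^{a/2}/n\leq r_n^{(a-1)/2}/\sqrt{n}$, so no dyadic peeling is needed. What actually forces the $t$-dependence of the truncation is the $O_\PP(\cdot;r_n)$ semantics itself: a fixed truncation leaves a remainder $\PP(\max_i|X_{ni}|>\text{threshold})$ that is constant in $t$ and so cannot supply a tail $e^{-r_n g(t)}$ with $g(t)\to\infty$. Both routes work; yours is cleaner if one is willing to cite the sub-Weibull Bernstein, while the paper's truncation argument is self-contained and is essentially a proof of that inequality in the special case at hand.
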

\begin{proof}
	Define $R_{nt} = t \sqrt{r_n}$ and $S_{nt} = \frac{1}{n} \sum_{i=1}^{n} |X_{ni}|^a \mathbf{1}_{ \{ |X_{ni}| \leq R_{nt} \} }$ for $n,t \geq 1$. For any $p \geq 1$, we have $2p \geq 2 > 1$ and $(2p)^{-1/2} \EE^{1/(2p)} |X_{ni}|^{2p} \leq \| X_{ni} \|_{\psi_2} \leq 1$. Hence
	\begin{align*}
	\EE (|X_{ni}|^a \mathbf{1}_{ \{ |X_{ni}| \leq R_{nt} \} })^p & = \EE ( |X_{ni}|^{ap} \mathbf{1}_{ \{ |X_{ni}| \leq R_{nt} \} } ) =  \EE ( |X_{ni}|^{2p} |X_{ni}|^{(a-2)p} \mathbf{1}_{ \{ |X_{ni}| \leq R_{nt} \} } )
	\\ &
	\leq \EE |X_{ni}|^{2p} R_{nt}^{(a-2)p}
	\leq  [ (2p)^{1/2}  \| X_{ni} \|_{\psi_2} ]^{2p} R_{nt}^{(a-2)p} \leq ( 2 p  R_{nt}^{a-2} )^p
	\end{align*}
	and $ \| |X_{ni}|^a \mathbf{1}_{ \{ |X_{ni}| \leq R_{nt} \} } \|_{\psi_1} \leq 2 R_{nt}^{a-2}$.
	By the Bernstein-type inequality in Proposition 5.16 of \cite{Ver10}, there exists a constant $c$ such that
	\begin{align}
	\PP ( | S_{nt} - \EE S_{nt} | \geq s ) \leq 2 \exp\bigg[ - c n \bigg( \frac{s^2}{ R_{nt}^{2(a-2)} } \wedge \frac{s}{ R_{nt}^{a-2} } \bigg) \bigg]
	,\qquad \forall t \geq 0,~ s \geq 0.
	\label{eqn-lemma-power-truncation}
	\end{align}
	
	Take $t \geq 1$ and $s = t^{a-1} r_n^{(a-1)/2} / \sqrt{n}$. We have
	\begin{align*}
	& \frac{s}{ R_{nt}^{a-2} } = \frac{ t^{a-1} r_n^{(a-1)/2} / \sqrt{n} }{ t^{a-2} r_n^{(a-2) / 2} } =  t \sqrt{r_n / n} , \\
	&  \frac{s^2}{ R_{nt}^{2(a-2)} } \wedge \frac{s}{ R_{nt}^{a-2} } = \frac{t^2 r_n}{n} \wedge \frac{t \sqrt{r_n}}{\sqrt{n}} \geq \frac{t r_n}{n},
	\end{align*}
	where the last inequality is due to $r_n / n \leq 1 \leq t$. By (\ref{eqn-lemma-power-truncation}),
	\begin{align}
	\PP ( | S_{nt} - \EE S_{nt} | \geq t^{a-1} r_n^{(a-1)/2} / \sqrt{n} ) \leq 2 e^{-c r_n t},\qquad \forall t \geq 1.
	\label{eqn-lemma-power-truncation-1}
	\end{align}
	By Cauchy-Schwarz inequality and $\| X_{n1} \|_{\psi_2} \leq 1$, there exist $C_1, C_2> 0$ such that
	\begin{align*}
	0 \leq \EE S_{n} - \EE S_{nt} = \EE ( |X_{n1}|^a \mathbf{1}_{ \{ |X_{n1}| > t \sqrt{r_n} \} } )
	\leq \EE^{1/2} |X_{n1}|^{2a} \cdot \PP^{1/2} ( |X_{n1}| > t \sqrt{r_n})
	\leq C_1 e^{-C_2 t^2 r_n}
	\end{align*}
	holds for all $t \geq 0$. Since $r_n \geq \log n$, there exists a constant $C>0$ such that $C_1 e^{-C_2 t^2 r_n} \leq t^{a-1} r_n^{(a-1)/2} / \sqrt{n} $ as long as $t \geq C$. Hence (\ref{eqn-lemma-power-truncation-1}) forces
	\begin{align*}
	\PP ( | S_{nt} - \EE S_{n} | \geq 2 t^{a-1} r_n^{(a-1)/2} / \sqrt{n} )
	& \leq \PP ( | S_{nt} - \EE S_{nt} | + |\EE S_{nt} - \EE S_n| \geq  2 t^{a-1} r_n^{(a-1)/2} / \sqrt{n} ) \\
	& \leq \PP ( | S_{nt} - \EE S_{nt} | \geq t^{a-1} r_n^{(a-1)/2} / \sqrt{n} ) \leq 2 e^{-c r_n t},\qquad \forall t \geq C.
	\end{align*}
	
	Note that
	\begin{align}
	&  \PP ( |S_n - \EE S_n| \geq 2 t^{a-1}  r_n^{(a-1)/2} / \sqrt{n}  )\\
	&\quad\leq \PP ( |S_n - \EE S_n| \geq 2 t^{a-1} r_n^{(a-1)/2} / \sqrt{n}  ,~S_n = S_{nt} ) + \PP (S_n \neq S_{nt}) \notag \\
	&\quad\leq \PP ( | S_{nt} - \EE S_n | \ge 2q t^{a-1}  r_n^{(a-1)/2} / \sqrt{n}  ) +  \PP (S_n \neq S_{nt}) \notag \\
	&\quad\leq 2 e^{-c r_n t} + \PP \left( \max_{i \in [n]} |X_{ni}| > t \sqrt{r_n} \right) , \qquad \forall t \geq C.
	\label{eqn-lemma-power-truncation-2}
	\end{align}
	Since $ \| X_{ni} \|_{\psi_2} \leq 1$, there exist constants $C_1' ,C_2' > 0$ such that
	\begin{align*}
	\PP ( |X_{ni} | \geq t ) \leq C_1' e^{-C_2' t^2},\qquad \forall n \geq 1,~ i \in [n], ~ t \geq 0.
	\end{align*}
	By union bounds,
	\begin{align*}
	\PP \left( \max_{i \in [n]} |X_{ni}| > t \sqrt{r_n} \right)
	\leq n C_1' e^{-C_2' t^2 r_n}
	= C_1' e^{\log n - C_2' t^2 r_n}, \qquad \forall t \geq 0.
	\end{align*}
	When $t \geq \sqrt{2 / C_2'}$, we have $C_2' t^2 r_n \geq 2 r_n \geq 2 \log n$ and thus $\log n - C_2' t^2 r_n \leq -C_2' t^2 r_n / 2$. Then (\ref{eqn-lemma-power-truncation-2}) leads to
	\begin{align*}
	&  \PP ( |S_n - \EE S_n| \geq 2 t^{a-1}  r_n^{(a-1)/2} / \sqrt{n}  ) \leq 2 e^{-c r_n t} + C_1' e^{-C_2' r_n t^2 / 2} , \qquad \forall t \geq C \vee \sqrt{2 / C_2'}.
	\end{align*}
	This shows $S_n - \EE |X_{n1}|^a = S_n - \EE S_n = O_{\PP} ( r_n^{(a-1)/2} / \sqrt{n} ; ~ r_n )$. The proof is finished by $\EE |X_{n1}|^a \lesssim 1$.
\end{proof}

\begin{lemma}\label{lemma-cubic-sup}
	Suppose that $\{  \bm{X}_i \}_{i=1}^n \subseteq \RR^{d+1}$ are independent random vectors, $\max_{i \in [n]} \| \bm{X}_i \|_{\psi_2} \leq 1$ and $n \geq m d \geq \log n $ for some $m \geq 1$. We have
	\begin{align*}
	&\sup_{\bu \in \SSS^{d}} \frac{1}{n} \sum_{i=1}^{n} | \bu^{\top} \bm{X}_i |^2
	=  O_{\PP} (1;~n),  \\
	& \sup_{\bu \in \SSS^{d}}  \frac{1}{n} \sum_{i=1}^{n} (\bv^{\top} \bm{X}_i)^2 |\bu^{\top} \bm{X}_i| = O_{\PP} (1;~n^{1/3}), \qquad \forall \bv \in \SSS^{d}, \\
	&\sup_{\bu \in \SSS^{d}} \frac{1}{n} \sum_{i=1}^{n} |\bu^{\top} \bm{X}_i|^3 = O_{\PP} \left(  \max\{1,~ md/\sqrt{n} \};~ md\right). 
	\end{align*}
\end{lemma}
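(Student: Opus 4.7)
All three claims are suprema of empirical processes indexed by $\SSS^d$, so I would invoke the three-ingredient framework of Theorem~1 in \cite{wang2019some}, already used repeatedly in this appendix: (i) a pointwise tail bound for each fixed $\bu\in\SSS^d$; (ii) an $\varepsilon$-net of $\SSS^d$ of cardinality at most $(3/\varepsilon)^{d+1}$ via Lemma~5.2 of \cite{Ver10}; (iii) a Lipschitz-type modulus for the functional. The pointwise input is furnished either by Lemma~\ref{lemma-power-truncation} (for the $a=3$ case) or by standard sub-Gaussian/sub-exponential Bernstein inequalities (Proposition~5.16 of \cite{Ver10}); what varies between the three statements is only the tail exponent and the scale.

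\textbf{Parts~1 and 2.} For Part~1, I observe that $\sup_{\bu\in\SSS^d} n^{-1}\sum_i(\bu^\top\bX_i)^2 = \bigl\|n^{-1}\sum_i \bX_i\bX_i^\top\bigr\|_2$. Each summand $(\bu^\top\bX_i)^2$ is sub-exponential with $\psi_1$-norm $\lesssim 1$, so pointwise Bernstein combined with a standard $\varepsilon$-net argument on $\SSS^d$ yields spectral norm $\lesssim 1$ with probability at least $1-2e^{-cn}$, exactly the $O_{\PP}(1;n)$ claim. For Part~2, fix $\bv$ and set $W_i=(\bv^\top\bX_i)^2$; since $\|W_i\|_{\psi_1}\lesssim 1$, the summand $W_i|\bu^\top\bX_i|$ is sub-Weibull of order $2/3$, and H\"older's inequality gives bounded expectation $\EE[W_i|\bu^\top\bX_i|]\lesssim 1$. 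A truncation-plus-Bernstein step parallel to the proof of Lemma~\ref{lemma-power-truncation} (keeping track of the sub-Weibull tail) delivers a pointwise bound of type $\exp(-cn^{1/3}t^{2/3})$, i.e., scale $1$ with exponent $n^{1/3}$. An $\varepsilon$-net on $\SSS^d$ at resolution $\varepsilon\asymp 1/\sqrt{d}$ and the Lipschitz bound
\begin{equation*}
\bigl|T_n^{\bv}(\bu_1)-T_n^{\bv}(\bu_2)\bigr| \le \|\bu_1-\bu_2\|_2 \cdot \frac{1}{n}\sum_i W_i\|\bX_i\|_2,
\end{equation*}
whose right-hand side is $O_{\PP}(\sqrt{d};n)$ by Cauchy--Schwarz and sub-Gaussianity, combine via \cite{wang2019some} to yield the stated $O_{\PP}(1;n^{1/3})$.

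\textbf{Part~3.} Fixing $\bu\in\SSS^d$, apply Lemma~\ref{lemma-power-truncation} with $a=3$ and $r_n=md$ (legitimate because $\log n\le md\le n$) to obtain
\begin{equation*}
\frac{1}{n}\sum_i|\bu^\top\bX_i|^3 \;=\; \EE|\bu^\top\bX|^3 + O_{\PP}\bigl(md/\sqrt{n};~md\bigr) \;=\; O_{\PP}\bigl(\max\{1,\,md/\sqrt{n}\};~md\bigr),
\end{equation*}
using $\EE|\bu^\top\bX|^3 \lesssim \|\bu^\top\bX\|_{\psi_2}^3 \lesssim 1$. To pass from a single $\bu$ to the sphere I would choose the net resolution $\varepsilon$ so that $d\log(1/\varepsilon)\lesssim md$, i.e., $\log(1/\varepsilon)\lesssim m$, and control fluctuation between nearby $\bu_1,\bu_2$ through $||a|^3-|b|^3|\le 3\max(|a|,|b|)^2|a-b|$, which produces a Lipschitz modulus dominated by $3\sup_\bu n^{-1}\sum_i(\bu^\top\bX_i)^2\cdot\max_i\|\bX_i\|_2$; by Part~1 and the standard sub-Gaussian maximal bound $\max_i\|\bX_i\|_2\lesssim\sqrt{d\log n}$ with high probability, this modulus is polynomially controlled. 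Theorem~1 of \cite{wang2019some} then packages the three ingredients into the stated conclusion.

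\textbf{Main obstacle.} The only real work is the bookkeeping in Part~3: matching the tail exponent $r_n=md$ produced by Lemma~\ref{lemma-power-truncation} against both the log-covering number $d\log(1/\varepsilon)$ and the (polynomially-in-$d$) Lipschitz coefficient of the cubic functional. The constraints $m\ge 1$ and $md\ge\log n$ in the hypothesis are exactly what provide the slack to keep these three scales in balance, so invoking Wang's theorem---which is designed to absorb precisely this circular-looking combination of net entropy, pointwise tail, and Lipschitz modulus---reduces the argument to a mechanical verification. Parts~1 and 2 are essentially textbook once the sub-Gaussian/sub-Weibull tails are in hand.
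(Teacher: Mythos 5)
Part 1 is essentially the paper's argument (spectral-norm identity plus covariance concentration, i.e., Remark~5.40 of Vershynin), so no concern there. Parts 2 and 3, however, each contain a genuine gap.

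For Part 2, your plan is a direct empirical-process argument over $\SSS^d$ for the functional $\bu\mapsto n^{-1}\sum_i W_i|\bu^\top\bX_i|$. The pointwise tail you claim has exponent $n^{1/3}$, but your net has $\log|\cN|\asymp d\log d$, which for all but small $d$ greatly exceeds $n^{1/3}$: the union bound over $\cN$ would then only give $\max_{\cN}|X|\lesssim\max\{1,\,d\log d / n^{1/3}\}$, not $O_\PP(1; n^{1/3})$. The lemma's hypotheses allow $d$ up to order $n$, so this is not a corner case. The paper sidesteps the issue entirely by Cauchy--Schwarz:
\begin{equation*}
\frac{1}{n}\sum_i(\bv^\top\bX_i)^2|\bu^\top\bX_i|\le\Bigl(\frac{1}{n}\sum_i(\bv^\top\bX_i)^4\Bigr)^{1/2}\Bigl(\frac{1}{n}\sum_i(\bu^\top\bX_i)^2\Bigr)^{1/2},
\end{equation*}
so the sup over $\bu$ lands on the Part~1 quantity (exponent $n$) while the exponent-$n^{1/3}$ tail is only needed for a \emph{fixed-$\bv$} quartic average, handled directly by Lemma~\ref{lemma-power-truncation} with $a=4$, $r_n=n^{1/3}$. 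No net, no entropy constraint.

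For Part 3 your pointwise step ($a=3$, $r_n=md$) is correct, but your Lipschitz control $3\sup_\bu n^{-1}\sum_i(\bu^\top\bX_i)^2\cdot\max_i\|\bX_i\|_2=O_\PP(\sqrt{d\log n})$ forces a very fine net: for $\varepsilon\cdot O_\PP(\sqrt{d\log n})\lesssim\max\{1,md/\sqrt n\}$ you need $\varepsilon$ as small as $(d\log n)^{-1/2}$ when the target scale is $1$, which gives $\log|\cN|\asymp d\log(d\log n)$. That can exceed the pointwise exponent $md$ unless $m\gtrsim\log(d\log n)$, a condition not implied by the hypotheses $m\ge1$, $md\ge\log n$. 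The paper's proof uses a different trick: first center the data ($\bZ_i=\bX_i-\EE\bX_i$), absorb the difference $|\bu^\top\bX_i|^3-|\bu^\top\bZ_i|^3$ into the Part~1 quadratic term, and then show that for the centered sum $S(\bu)$ the Lipschitz modulus is itself bounded by $3\sup_\bw S(\bw)$ via Lemma~\ref{lemma-cubic}. This is self-bounding, so a constant-resolution net ($\varepsilon=1/6$, $\log|\cN|\lesssim d\le md$) suffices and the entropy constraint is satisfied automatically. You would need this centering-plus-self-bounding idea (or something equivalent) to close the argument.
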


\begin{proof}
	From $2^{-1/2} \EE^{1/2} (\bu^{\top} \bm{X})^2 \leq \| \bu^{\top} \bm{X} \|_{\psi_2} \leq 1 $, $\forall \bu \in \SSS^{d}$ we get $\EE (\bm{X} \bm{X}^{\top}) \preceq 2 \bI$. Since $n \geq d+1$, Remark 5.40 in \cite{Ver10} asserts that
	\[
	\sup_{\bu \in \SSS^{d}} \frac{1}{n} \sum_{i=1}^{n} | \bu^{\top} \bm{X}_i |^2 = \bigg\|  \frac{1}{n} \sum_{i=1}^{n} \bm{X}_i \bm{X}_i^{\top} \bigg\|_2 \leq \bigg\|  \frac{1}{n} \sum_{i=1}^{n} \bm{X}_i \bm{X}_i^{\top} - \EE (\bm{X} \bm{X}^{\top}) \bigg\|_2 + \| \EE (\bm{X} \bm{X}^{\top}) \|_2 = O_{\PP} (1;~n).
	\]
	
	For any $\bu, \bv \in \SSS^{d}$, the Cauchy-Schwarz inequality forces
	\begin{align*}
	&\frac{1}{n} \sum_{i=1}^{n} (\bv^{\top} \bm{X}_i)^2 |\bu^{\top} \bm{X}_i| \leq \bigg(  \frac{1}{n} \sum_{i=1}^{n} (\bv^{\top} \bm{X}_i)^4 \bigg)^{1/2} \bigg(  \frac{1}{n} \sum_{i=1}^{n} (\bu^{\top} \bm{X}_i)^2 \bigg)^{1/2}, \\
	&\sup_{\bu \in \SSS^{d}}  \frac{1}{n} \sum_{i=1}^{n} (\bv^{\top} \bm{X}_i)^2 |\bu^{\top} \bm{X}_i| \leq 
	\bigg(  \frac{1}{n} \sum_{i=1}^{n} (\bv^{\top} \bm{X}_i)^4 \bigg)^{1/2} O_{\PP} (1;~n).
	\end{align*}
	Since $\{ \bv^{\top} \bm{X}_i \}_{i=1}^n$ are i.i.d.~sub-Gaussian random variables and $\|  \bv^{\top} \bm{X}_i  \|_{\psi_2} \leq 1$, Lemma \ref{lemma-power-truncation} with $a = 4$ and $r_n = n^{1/3}$ yields
	$ \frac{1}{n} \sum_{i=1}^{n} (\bv^{\top} \bm{X}_i)^4 = O_{\PP} ( 1 ;~n^{1/3} )$. Hence $\sup_{\bu \in \SSS^{d}}  \frac{1}{n} \sum_{i=1}^{n} (\bv^{\top} \bm{X}_i)^2 |\bu^{\top} \bm{X}_i| = O_{\PP} (1 ;~ n^{1/3} )$.

	To prove the last equation in Lemma \ref{lemma-cubic-sup}, define $\bZ_i = \bm{X}_i - \EE \bar{\bX}_i$. From $\| \bZ_i \|_{\psi_2} = \| \bm{X}_i - \EE\bm{X}_i \|_{\psi_2} \leq 2 \| \bm{X}_i \|_{\psi_2} \leq 2$ we get $\sup_{\bu \in \SSS^{d}} \frac{1}{n} \sum_{i=1}^{n} | \bu^{\top} \bZ_i |^2 = O_{\PP} ( 1 ;~n )$. For $\bu \in \SSS^{d}$,
	\begin{align*}
	|\bu^{\top} \bm{X}_i|^3 & = |\bu^{\top} \bZ_i |^3 + ( |\bu^{\top} \bm{X}_i| - |\bu^{\top} \bZ_i| ) (
	|\bu^{\top} \bm{X}_i|^2 +  |\bu^{\top} \bm{X}_i| \cdot |\bu^{\top} \bZ_i| + |\bu^{\top} \bZ_i|^2
	) \\
	&\leq  |\bu^{\top} \bZ_i |^3 +  |\bu^{\top} ( \bm{X}_i - \bZ_i ) | (
	|\bu^{\top} \bm{X}_i|^2 +  |\bu^{\top} \bm{X}_i| \cdot |\bu^{\top} \bZ_i| + |\bu^{\top} \bZ_i|^2
	)  \\
	&\leq  |\bu^{\top} \bZ_i |^3 +  |\bu^{\top} \EE\bar{\bX}_i |  \cdot \frac{3}{2} (
	|\bu^{\top} \bm{X}_i|^2 +  |\bu^{\top} \bZ_i|^2
	) 
	\leq |\bu^{\top} \bZ_i |^3 + \frac{3}{2} (
	|\bu^{\top} \bm{X}_i|^2 +  |\bu^{\top} \bZ_i|^2
	),
	\end{align*}
	where the last inequality is due to $ |\bu^{\top} \EE\bar{\bX}_i| \leq \| \EE\bar{\bX}_i \|_2 \leq \| \bm{X}_i \|_{\psi_2} \leq 1$. Hence
	\begin{align}
	\sup_{\bu \in \SSS^{d}} \frac{1}{n} \sum_{i=1}^{n} |\bu^{\top} \bm{X}_i|^3
	\leq \sup_{\bu \in \SSS^{d}} \frac{1}{n} \sum_{i=1}^{n} |\bu^{\top} \bZ_i|^3
	+ O_{\PP} ( 1 ; ~n).
	\label{eqn-lemma-cubic-sup-0}
	\end{align}
	
	Define $S(\bu) = \frac{1}{n} \sum_{i=1}^{n} |\bu^{\top} \bZ_i|^3$ for $\bu \in \SSS^{d}$. We will invoke Theorem 1 in \cite{wang2019some} to control $\sup_{\bu \in \SSS^{d}} S(\bu)$. 
	\begin{enumerate}
		\item For any $\bu \in \SSS^{d}$, $\{ \bu^{\top} \bZ_i \}_{i=1}^n$ are i.i.d.~and $\| \bu^{\top} \bZ_i \|_{\psi_2} \leq 1$. Lemma \ref{lemma-power-truncation} with $a = 3$ and $r_n = md$ yields
		\begin{align*}
		\{ S(\bu) \}_{\bu \in \SSS^{d}} = O_{\PP} ( \max\{ 1 , md/\sqrt{n} \} ;~ md ).
		\end{align*}
		\item According to Lemma 5.2 in \cite{Ver10}, for $\varepsilon = 1/6$ there exists an $\varepsilon$-net $\cN$ of $\SSS^{d}$ with cardinality at most $(1 + 2 / \varepsilon)^d = 13^d$. Hence $\log |\cN| \lesssim md$. 
		\item For any $x,y\in\RR$, we have $\left| |x| - |y| \right| \leq |x-y|$, $2|xy| \leq x^2 + y^2$ and
		\begin{align*}
		\left| |x|^3 - |y|^3 \right| \leq \left| |x| - |y| \right| (x^2 + |xy| + y^2) \leq \frac{3}{2} |x-y|(x^2 + y^2).
		\end{align*}
		Hence for any $\bu, \bv \in \SSS^{d}$,
		\begin{align*}
		| S(\bu) - S(\bv) | &
		\leq  \frac{1}{n} \sum_{i=1}^{n} \left| |\bu^{\top} \bZ_i|^3 - |\bv^{\top} \bZ_i|^3 \right| 
		\leq \frac{3}{2} \cdot \frac{1}{n} \sum_{i=1}^{n} |(\bu - \bv)^{\top} \bZ_i| (
		|\bu^{\top} \bZ_i|^2 + |\bv^{\top} \bZ_i|^2 )  \\
		&\leq 3  \| \bu - \bv \|_2 \sup_{\bw_1, \bw_2 \in \SSS^{d}} \frac{1}{n} \sum_{i=1}^{n}  |\bw_1^{\top} \bZ_i |\cdot |\bw_2^{\top} \bZ_i |^2 
		= \frac{1}{2 \varepsilon} \| \bu - \bv \|_2 \sup_{\bw \in \SSS^{d}} S(\bw).
		\end{align*}
		where the last inequality follows from $\varepsilon = 1/6$ and Lemma \ref{lemma-cubic}. 
	\end{enumerate}
	Theorem 1 in \cite{wang2019some} then asserts that $\sup_{\bu \in \SSS^{d}} S(\bu) = O_{\PP} ( \max\{ 1, md /\sqrt{n} \};~md)$. We finish the proof using (\ref{eqn-lemma-cubic-sup-0}).
\end{proof}

{
\bibliographystyle{ims}
\bibliography{bib}
}

\end{document}